\setlist[itemize]{noitemsep, topsep=0pt}
\setlist[enumerate]{noitemsep, topsep=0pt}
\newcommand{\algname}{RANDOMIST\@\xspace}
\newcommand{\problem}{PO\@\xspace}
\newcommand{\algnameext}{RANDomized-exploration policy Optimization via Multiple Importance Sampling with Truncation\@\xspace}
\newcommand{\HL}[1]{\textcolor{black}{#1}}
\title{Policy Optimization as Online Learning with Mediator Feedback}
\author{
    Alberto Maria Metelli\thanks{Equal contribution.},
    Matteo Papini\footnotemark[1],
    Pierluca D'Oro,
    Marcello Restelli
    \\
}
\begin{document}

\maketitle

\begin{abstract}
Policy Optimization (PO) is a widely used approach to address continuous control tasks. In this paper, we introduce the notion of \emph{mediator feedback} that frames PO as an online learning problem over the policy space. The additional available information, compared to the standard bandit feedback, allows reusing  samples generated by one policy to estimate the performance of other policies. Based on this observation, we propose an algorithm, \emph{RANDomized-exploration policy Optimization via Multiple Importance Sampling with Truncation} (RANDOMIST), for regret minimization in PO, that employs a randomized exploration strategy, differently from the existing optimistic approaches. When the policy space is finite, we show that under certain circumstances, it is possible to achieve constant regret, while always enjoying logarithmic regret. We also derive problem-dependent regret lower bounds. Then, we extend RANDOMIST to compact policy spaces. Finally, we provide numerical simulations on finite and compact policy spaces, in comparison with PO and bandit baselines.
\end{abstract}
\section{Introduction}
Policy Optimization~\citep[PO,][]{deisenroth2013survery} is a family of Reinforcement Learning~\citep[RL,][]{sutton2018reinforcement} algorithms based on the explicit optimization of the policy parameters. It represents the most promising approach for learning large-scale continuous control tasks and has already achieved marvelous results in video games~\citep[\eg][]{vinyals2019grandmaster} and robotics~\cite[\eg][]{peng2020learning}. These achievements, however, rely on massive amounts of simulation rollouts. The efficient use of experience data is essential both to reduce computational costs and to make learning online from real interaction possible. This is still largely an open problem and calls for better theoretical understanding.
Any online-learning agent must face the exploration-exploitation dilemma: whether to leverage on its current knowledge to maximize performance or consider new alternatives.
Fortunately, the Multi-Armed Bandit (MAB) literature~\cite{bubeck2012regret,lattimore2018bandit} provides a theoretical framework for the problem of efficient exploration under \textit{bandit feedback}, \ie observing the effects of the chosen actions.
 The dilemma is addressed by minimizing the cumulative \emph{regret} of the online performance \wrt the optimal one. The most popular exploration strategies are based on the Optimism in the Face of Uncertainty~\citep[OFU,][]{lai1985asymptotically}, of which UCB1~\cite{auer2002finite} is the prototypical algorithm, and on Thompson Sampling~\citep[TS,][]{thompson1933likelihood}. Both suffer only sublinear regret~\cite{auer2002finite,agrawal2012analysis,kaufmann2012thompson}. TS typically performs better in practice~\cite{chapelle2011empirical}, but it is only computationally efficient in artificial settings~\cite{kveton2019garbage}. More recent randomized algorithms such as PHE (Perturbed History Exploration)~\cite{kveton2019perturbed} are able to match the theoretical and practical advantages of TS without the computational burden, and with no assumptions on the payoff distribution.
 
The OFU principle has been applied to RL~\cite{jaksch2010near}
and recently also to PO~\cite{chowdhury2019online,efroni2020optimistic}, at the level of action selection. These methods are promising but limited to finite actions.
A different perspective is proposed by~\citet{papini2019optimistic}, where the decision problem is not defined over the agent's actions but over the policy parameters. This change of viewpoint allows exploiting the special structure of the PO problem: for each policy, a sequence of states and actions performed by the agent is collected, constituting, alongside the rewards, a vastly richer signal than the simple bandit feedback.
In this paper, we call it \emph{mediator feedback} since this extra information acts as a mediator variable between the policy parameters and the return.
OPTIMIST~\cite{papini2019optimistic} is an OFU algorithm that uses Multiple Importance Sampling~\citep[MIS,][]{veach1995optimally} to exploit the mediator feedback, so that the results of one policy provide information on all the others.
This allows, in principle, to optimize over an infinite policy space with only finite samples and no regularity assumptions on the underlying process.
There are two important limitations in~\citet{papini2019optimistic}. First, the advantages of the mediator feedback over the bandit feedback are not clear from a theoretical perspective since the regret of OPTIMIST is comparable with that of UCB1 with finite policy space. Second, the policy selection of OPTIMIST requires maximizing a non-convex and non-differentiable index. In the continuous setting, this is addressed via discretization, with clear scalability issues.

In this work, we provide two major advancements. From the theoretical side,
we provide regret lower bounds for the policy optimization problem with finite policy space, and we show that OPTIMIST actually enjoys \emph{constant} regret under the assumptions made in~\cite{papini2019optimistic}. In fact, mediator feedback is so special that, under strong-enough assumptions, a greedy algorithm enjoys the same guarantees.
We also devise a PHE-inspired randomized algorithm, called RANDOMIST (RANDomized-exploration policy Optimization via Multiple Importance Sampling with Truncation), with similar regret guarantees as OPTIMIST. From the practical side, this allows replacing the unfeasible index maximization of OPTIMIST with a sampling procedure. Although our regret guarantees apply to the finite setting only, we propose a heuristic version of RANDOMIST for continuous problems, using a Markov Chain Monte Carlo~\cite[MCMC,][]{owen2013monte}. We show the advantages of this algorithm over continuous OPTIMIST in terms of computational complexity and performance.

The structure of the paper is as follows. We start in Section~\ref{sec:preliminaries} with the basic background. In Section~\ref{sec:problem}, we formalize the concept of mediator feedback in PO and derive two regret lower bounds. We illustrate, in Section~\ref{sec:is}, a possible way to exploit mediator feedback, based on importance sampling. Section~\ref{sec:det} is devoted to the discussion of deterministic algorithms, providing the improved regret guarantees for OPTIMIST. In Section~\ref{sec:rand}, we present RANDOMIST with its regret guarantees and the heuristic extension to the continuous case. In Section~\ref{sec:exp}, we compare empirically RANDOMIST with relevant baselines on both illustrative examples and continuous-control problems. In Section~\ref{sec:related}, we discuss relationships with similar approaches from the bandit and RL literature. We conclude in Section~\ref{sec:discussion}, discussing the obtained results and proposing future research directions. The proofs of all the results can be found in Appendix~\ref{apx:proofs}.

\section{Preliminaries}\label{sec:preliminaries}
In this section, we introduce some notation, the background on Markov decision processes and policy optimization.

\paragraph{Mathematical Background}Let $(\mathcal{X}, \mathscr{F})$ be a measurable space, we denote with $\mathscr{P}(\mathcal{X})$ the set of probability measures over $\mathcal{X}$. Let $P,Q \in \mathscr{P}(\mathcal{X})$ such that $P \ll Q$,\footnote{\HL{$P$ is absolutely continuous \wrt $Q$, \ie for every measurable set $\mathcal{Y} \subseteq \mathcal{X}$ we have $Q(\mathcal{Y}) = 0 \Rightarrow P(\mathcal{Y})=0$}.} for any $\alpha \in [0,\infty]$ the $\alpha$-\Renyi divergence~\cite{renyi1961measures} is defined as:\footnote{In the limit, for $\alpha \rightarrow 1$ we have $D_1(P\|Q) = D_{\text{KL}}(P\|Q)$ and for $\alpha \rightarrow \infty$ we have $D_{\infty}(P\|Q) = \esssup_{\mathcal{X}}  \frac{\de P}{\de Q}$.}
$$
	D_{\alpha}(P\|Q) = \frac{1}{\alpha-1} \log \int_{\mathcal{X}} \left(\frac{\de P}{\de Q} \right)^{\alpha} \de Q.$$
We denote with $d_{\alpha}(P\|Q) = \exp \left[ D_{\alpha}(P\|Q) \right]$ the exponentiated \Renyi divergence \cite{cortes2010learning}.

\paragraph{Markov Decision Processes and Policy Optimization}A discrete-time Markov Decision Process~\citep[MDP,][]{puterman1994markov} is a 6-tuple $\mathcal{M} = (\Ss, \As, \Ps, \Rs, \gamma, \mu)$, where $\Ss$ is the state space, $\As$ is the action space, $\Ps$ is the  transition model that for each $(s,a) \in \SAs$ provides the probability distribution of the next state $\Ps(\cdot|s,a) \in \mathscr{P}(\Ss)$, $\Rs(s,a) \in \mathbb{R}$ is the reward function, $\gamma \in [0,1]$ is the discount factor, and $\mu \in \mathscr{P}(\Ss)$ is the initial-state distribution.
In Policy Optimization~\citep[PO,][]{peters2008reinforcement}, we model the agent's behavior by means of a policy $\pi_{\vtheta}(\cdot|s) \in \mathscr{P}(\As) $ belonging to a space of parametric policies $\Pi_{\Theta} = \{\pi_{\vtheta} \, : \, \vtheta \in \Theta \}$.
%
The interaction between an agent and an MDP generates a sequence of state-action pairs, named \emph{trajectory}: $\tau = (s_0,a_0,s_1,a_1,\dots,s_{H-1},a_{H-1})$ where $s_0 \sim \mu$, for all $h \in \{0, \dots, H-1\}$ we have $a_h \sim \pi_{\vtheta}(\cdot|s_h)$, $s_{h+1} \sim \Ps(\cdot|s_h,a_h)$ and $H \in \mathbb{N}$ is the trajectory length. Each parameter $\vtheta \in \Theta$ determines a policy $\pi_{\vtheta} \in \Pi_{\Theta}$ which, in turn, induces a probability measure $p_{\vtheta} \in \mathscr{P}(\mathcal{T})$ over the trajectory space $\mathcal{T}$.
To every trajectory $\tau \in \mathcal{T}$, we associate an index of performance $\Rs(\tau) = \sum_{h=0}^{H-1} \gamma^h \Rs(s_h,a_h)$, called \emph{return}. Without loss of generality we assume that $\mathcal{R}(\tau) \in [0,1]$. Thus, we can evaluate the performance of a policy $\pi_{\vtheta} \in \Pi_\Theta$ by means of its \emph{expected return}: $J(\vtheta) = \E_{\tau \sim p_{\vtheta} }\left[\Rs(\tau)\right]$. The goal of the agent consists in finding an optimal parameter, \ie any $\vtheta^*$ maximizing $J(\vtheta)$.\footnote{To simplify the presentation, we frame our results for the usual \emph{action-based} PO. Our findings directly extend to \emph{parameter-based} exploration~\citep{sehnke2008policy}, in which policies are indirectly optimized by learning a hyperpolicy that outputs the policy parameters. Coherently with~\citet{papini2019optimistic}, the empirical evaluation of Section~\ref{sec:exp} is carried out in the parameter-based framework.}

\section{Online Policy Optimization and Mediator Feedback}\label{sec:problem}
The online PO protocol works as follows. At each round $t\in[n]$, we evaluate a parameter vector $\vtheta_t \in \Theta$ by running policy $\pi_{\vtheta_t}$, collecting one (or more) trajectory $\tau_t \in \OutcomeSet$ and observing the corresponding return $\Rs(\tau_t)$. Then, based on the history $\Hs_{t} = \{(\vtheta_i, \tau_i, \Rs(\tau_i))\}_{i=1}^t$, we update $\vtheta_{t}$ to get $\vtheta_{t+1}$. From an \emph{online learning} perspective, the goal of the agent consists in maximizing the sum of the expected returns over $n$ rounds or, equivalently, minimizing the cumulative regret $R(n)$:
\begin{equation*}
	\max_{\xs_1, \dots \xs_{n} \in \DecisionSet} \sum_{t=1}^n J(\xs_t) \; \Leftrightarrow \;\min_{\xs_1, \dots \xs_{n} \in \DecisionSet} R(n) = \sum_{t=1}^n \Delta(\xs_t),
\end{equation*}
where $\Delta(\xs) = J^* - J(\xs)$ is the optimality gap of $\xs \in \DecisionSet$ and $J^* =  \sup_{\xs \in \DecisionSet} J(\xs)$.
Thus, whenever policy $\pi_{\vtheta_t}$ is executed the agent receives the trajectory-return pair $(\tau_t, \Rs(\tau_t))$, that we name \emph{mediator feedback} (MF). The term \quotes{mediator} refers to the side information, the trajectory $\tau_t$, that \emph{mediates} between the parameter choice $\vtheta_t$ and the return $\Rs(\tau_t)$. By na\"ively approaching PO as an online-learning problem over policy space, we would only consider \emph{bandit feedback}, in which just the return $\Rs(\tau_t)$ is observable. In comparison, the MF allows to better exploit the \emph{structure} underlying the PO problem (Figure~\ref{fig:GraphicalModels}).\footnote{In this paper, we employ the wording \quotes{bandit feedback} with a different meaning compared to some provably efficient approaches to PO~\citep[\eg][]{efroni2020optimistic}. See also Section~\ref{sec:related}.} Indeed, while the return function $\mathcal{R}$ is unknown, the trajectory distribution $p_{\vtheta}$ is \emph{partially} known:
\begin{align}\label{eq:factoriz}
	p_{\vtheta}(\tau) = \mu(s_0) \prod_{h=0}^{H-1} \pi_{\vtheta}(a_h|s_h) P(s_{h+1}|s_h,a_h).
\end{align}
The policy factors $\pi_{\vtheta}$, that depend on $\vtheta$, are known to the agent, whereas the factors due to the environment ($\mu$ and $P$) are unknown but do not depend on $\vtheta$. Intuitively, if two policies $\pi_{\xs}$ and $\pi_{\xs'}$ are sufficiently \quotes{similar}, given a trajectory $\outcome$ from policy $\pi_{\xs}$, the return $\Rs(\outcome)$ provides information on the expected return of policy $\pi_{\xs'}$ too.

\begin{figure}
\centering
\includegraphics[scale=1]{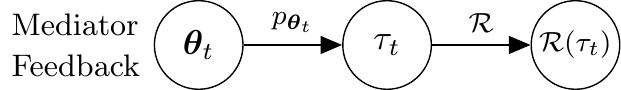} \hfill \includegraphics[scale=1]{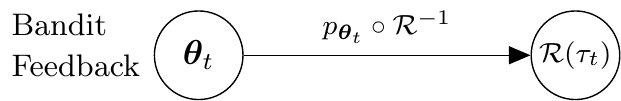}
\caption{Graphical models comparing mediator and bandit feedbacks.}\label{fig:GraphicalModels}
\end{figure}
\subsection{Regret Lower Bounds for Finite Policy Space}
We focus on the intrinsic complexity of PO with finite policy space, deriving two lower bounds to the regret. The results are phrased, for simplicity, for the  case of two policies, \ie $|\DecisionSet|=2$, and the proof techniques are inspired to~\cite{bubeck2013bounded}.
We start showing that, with enough structure between the policies, \ie when the KL-divergence between the trajectory distributions is bounded, the best achievable regret is constant.
\begin{restatable}[]{thr}{lbConstantRegret}\label{thr:lbConstantRegret}
There exist an MDP and a parameter space $\Theta=\{\xs_1,\xs_2\}$ with $\KL(p_{\xs_1} \| p_{\xs_2}) < \infty$, $\KL( p_{\xs_2} \| p_{\xs_1}) < \infty$ and $J(\xs_1) - J(\xs_2) = \Delta$ such that, for sufficiently large $n$, all algorithms suffer regret $\E R(n) \ge \frac{1}{32 \Delta}$.
\end{restatable}
Instead, the presence of policies that are uninformative of one another, \ie with infinite KL-divergence between the trajectory distributions, leads to a logarithmic regret.

\begin{restatable}[]{thr}{lbLogRegret}\label{thr:lbLogRegret}
There exist an MDP and a parameter space $\Theta=\{\xs_1,\xs_2\}$ with $\KL(p_{\xs_1} \| p_{\xs_2}) \!= \!\infty$ or $\KL( p_{\xs_2} \| p_{\xs_1}) \!= \! \infty$, and $J(\xs_1) - J(\xs_2) = \Delta$ such that, for any $n \ge 1$, all algorithms suffer regret $\E R(n) \ge \frac{1}{8\Delta}  \log (\Delta^2 n)$.
\end{restatable}

\section{Exploiting Mediator Feedback with Importance Sampling}\label{sec:is}
In this section, we illustrate how Importance Sampling techniques~\citep[IS,][]{cochran77sampling, owen2013monte} can be employed to effectively exploit the mediator feedback in PO.\footnote{We stress that IS is just \emph{one} method, and not necessarily the best one, to exploit the structure of the PO problem.}

\paragraph{Monte Carlo Estimation}With the bandit feedback at each round $t \in [n]$, the agent has access to the history of parameter-return pairs $\Hs_{t} = \{ (\xs_i, {\func}(\tau_i)\}_{i=1}^{t-1}$. Let $T_t(\xs) = \sum_{i=1}^{t-1} \Ind \{ \xs_i  = \xs \}$ be the number of trajectories collected with policy $\pi_{\xs} \in \Pi_{\DecisionSet}$  up to round $t-1$. To estimate the expected return $J(\xs)$, if no additional structure is available, we can only use the samples collected when executing $\pi_{\xs}$, leading to the Monte Carlo (MC) estimator:
\begin{equation}
	\widehat{J}_t^{\text{MC}}(\xs) = \frac{1}{T_t(\xs)} \sum_{i=1}^{t-1} {\func}(\tau_i) \Ind \{ \xs_i  = \xs \}.
\end{equation}
$\widehat{J}_t^{\text{MC}}$ is unbiased for $J(\xs)$ and its variance scales with $\Var[\widehat{J}_t^{\text{MC}}(\xs)] \le 1/T_t(\xs)$. Clearly, $\widehat{J}_t^{\text{MC}}(\xs)$ can be computed only for the policies that have been executed at least once.

\paragraph{Multiple Importance Sampling Estimation}
With the mediator feedback, at each round $t \in [n]$ we have access to additional information, \ie the history  of parameter-trajectory-return triples  $\Hs_{t} = \{ (\xs_i, \outcome_i, {\func}(\tau_i))\}_{i=1}^{t-1}$. Thanks to the factorization in Equation~\eqref{eq:factoriz}, we can compute the trajectory distribution ratios without knowing $P$ and $\mu$:
\begin{align*}
	\frac{p_{\xs}(\tau)}{p_{\xs'}(\tau)} = \prod_{h=0}^{H-1} \frac{\pi_{\xs}(a_h|s_h)}{\pi_{\xs'}(a_h|s_h)}.
\end{align*}
Thus, we can use \emph{all} the samples to estimate the expected return of \emph{any} policy. Let $\Phi_t =  \sum_{j=1}^{t-1} \frac{1}{t-1} p_{\xs_j}$ be the mixture induced by the policies executed up to time $t-1$: if $p_{\xs} \ll \Phi_t$, we can employ a Multiple Importance Sampling~\citep[MIS,][]{veach1995optimally} estimator (with balance heuristic):\footnote{For an extensive discussion of importance sampling and heuristics (\eg balance heuristic) refer to~\cite{owen2013monte}.}
\begin{equation}
	\widehat{J}_t(\xs) = \frac{1}{t-1} \sum_{i=1}^{t-1} \omega_{\xs,t}(\tau_i) {\func}(\tau_i),
\end{equation}
where $\omega_{\xs,t}(\tau_i) =  p_{\xs}(\tau_i) / \Phi_t(\tau_i)$ is the \emph{importance weight}. Thus, for estimating the expected return $J(\xs)$ of policy $\pi_{\xs}$ we do not need to execute $\pi_{\xs}$, but just require the absolute continuity $p_{\xs} \ll \Phi_t$ (surely fulfilled if $T_t(\xs) \ge 1$).
The statistical properties of the MIS estimator can be phrased in terms of the \Renyi divergence. We can prove that $0 \le \widehat{J}_t(\xs)  \le d_{\infty}(p_{\xs} \| \Phi_t)$ and the variance  can be bounded as $\Var[ \widehat{J}_t(\xs)] \le d_2(p_{\xs} \| \Phi_t)/(t-1)$~\cite{metelli2018policy, papini2019optimistic, metelli2020importance}. Since the variance of $\widehat{J}_t(\xs)$ scales with $d_2(p_{\xs} \| \Phi_t)/(t-1)$ instead of $1/T_t(\xs)$, as for $\widehat{J}_t^{\text{MC}}(\xs)$, we refer to $\eta_t(\xs) := (t-1)/d_{2}(p_{\xs} \| \Phi_t)$ as the \emph{effective number of trajectories}. It is worth noting that $\eta_t(\xs) \ge T_t(\xs)$ (Lemma~\ref{th:ess}); thus, thanks to the structure introduced by the mediator feedback, the MIS estimator variance is always smaller than the MC estimator variance.\footnote{The effective number of trajectories $\eta_t(\xs)$ is, in fact, the \emph{effective sample size} of $\widehat{J}_t(\xs)$~\cite{martino2017effective}.}

\paragraph{Truncated Multiple Importance Sampling Estimation}
The main limitation of the MIS estimator is that the importance weight $\omega_{\xs,t}$ displays a \emph{heavy-tail} behavior, preventing  exponential concentration, unless $d_{\infty}(p_{\xs} \| \Phi_t)$ is finite~\cite{metelli2018policy}. A common solution consists in \emph{truncating} the estimator~\cite{ionides2008truncated} at the cost of introducing a negative bias. Given a (time-variant and policy-dependent) truncation threshold $ M_t(\xs) < \infty$, the Truncated MIS (TMIS) was introduced by~\citet{papini2019optimistic}:
\begin{equation}
\widecheck{J}_t(\xs)  = \frac{1}{t-1} \sum_{i=1}^{t-1} \widecheck{\omega}_{\xs,t} (\outcome_i) {\func}(\tau_i),
\end{equation}
where $ \widecheck{\omega}_{\xs,t} (\outcome_i) = \min \left\{ M_t(\xs), \omega_{\xs,t} (\outcome_i) \right\} $. TMIS enjoys more desirable theoretical properties than plain MIS. While its variance scales similarly to $\widehat{J}_t(\xs)$ since
$\Var[ \widecheck{J}_t(\xs)] \le d_2(p_{\xs} \| \Phi_t)/(t-1)$, the range can be bounded as $0 \le \widecheck{J}_t(\xs) \le M_t(\xs)$. Thus, the range is controlled by $M_t(\xs)$ and no longer by the divergence $d_{\infty}(p_{\xs} \| \Phi_t)$, which may be infinite. Similarly, the bias can be bounded as $ J(\xs) -\E_{\outcome_i \sim p_{\xs_i}}[\widecheck{J}_t(\xs) ] \le d_2(p_{\xs} \| \Phi_t) / M_t(\xs) $ (see~\citet{papini2019optimistic} and Lemma~\ref{lemma:VarBiasMax} for details). If we are interested in minimizing the joint contribution of bias and variance, this suggests to increase $M_t(\xs)$ progressively over the rounds.

\section{Deterministic Algorithms}\label{sec:det}
In this section, we consider finite policy spaces ($|\Theta| < \infty$) and discuss algorithms for \problem that select policies deterministically, \ie $\xs_t$ is a deterministic function of history $\Hs_{t-1}$.

\paragraph{Follow The Leader}
The simplest algorithm accounting for the mediator feedback is Follow The Leader (FTL). It maintains a TMIS estimator $\widecheck{J}_t(\xs)$ and selects the policy with the highest estimated expected return, \ie $\xs_t \in \argmax_{\xs \in \DecisionSet} \widecheck{J}_t(\xs)$. This is a pure-exploitation algorithm, unsuited for bandit feedback. Surprisingly, under a strong form of mediator feedback, FTL enjoys \emph{constant} regret.
\begin{restatable}[]{thr}{ftl}\label{th:ftl}
	Let $\DecisionSet = [K]$, $v(\xs)=\max_{\xs'\in\DecisionSet}d_2(p_{\xs}\|p_{\xs'})$ for all $\vtheta \in \Theta$ and $v^{*}(\xs)=\max\{v(\xs),v(\xs^{*})\}$, where $\pi_{\xs^*}$ is an optimal policy. If $v\coloneqq \max_{\xs\in\DecisionSet}v(\xs)<\infty$, then, for any $\alpha>1$, the expected regret of FTL using TMIS with truncation ${M_t(\xs) = \sqrt{\frac{td_{2}(p_{\xs}\|\Phi_t)}{\alpha\log t}}}$ is bounded as:
	\begin{equation}
	\begin{aligned}
		\E R(n) & \leq \sum_{\xs\in\DecisionSet:\Delta(\xs)>0}
		\frac{48\alpha v^{*}(\xs)}{\Delta(\xs)}\log\frac{24\alpha v^{*}(\xs)}{\Delta(\xs)^2}\\
		& \quad
		+ \Delta(\xs_1) + \frac{2 K}{\alpha-1}\min\left\{1, \sqrt{2\log v}\right\}.
	\end{aligned}
	\end{equation}
\end{restatable}
We refer to the condition when all pairwise \Renyi divergences are finite (\ie $v < \infty$) as \emph{perfect mediator feedback}. In such case, we have the remarkable property that running \emph{any} policy in $\Pi_{\Theta}$ provides information for \emph{all} the others. Indeed, the effective number of trajectories satisfies $\eta_t(\xs) \ge (t-1)/v$ (Lemma~\ref{th:ess}). Unfortunately, when $v = \infty$, FTL degenerates to \emph{linear} regret (Fact~\ref{fact:ftl}).

\paragraph{UCB1}
We can always apply an algorithm for standard bandit feedback, like UCB1~\cite{lai1985asymptotically,auer2002finite}, to \problem with finite policy space, ignoring the mediator feedback. UCB1 maintains the sample mean $\widehat{J}_t^{\text{MC}}(\xs)$ of the observed returns for each $\xs\in\Theta$ and selects the one that maximizes $\widehat{J}_t^{\text{MC}}(\xs) + \sqrt{{(\alpha\log t)}/{T_t(\xs)}}$. The optimistic bonus favors policies that have been selected less often, in accordance with the OFU principle. Being designed for bandit feedback, UCB1 guarantees $\mathcal{O}(\Delta^{-1}\log n)$ regret~\cite{auer2002finite} even if $v=\infty$, but it cannot exploit mediator feedback when actually present.

In principle, we could employ FTL or UCB1 based on whether $v$ is finite or infinite. There are two reasons why this approach might be inappropriate. First, we would disregard the possibility to share information among pairs of policies with finite divergence, losing possible practical benefits (not captured by the current regret analysis). Second, even when $v<\infty$, the regret of FTL is $\mathcal{O}(v \Delta^{-1} \log( v \Delta^{-2}))$ that, at finite time, might be worse than $\mathcal{O}(\Delta^{-1} \log n)$, especially for large $v$. Note that deriving the conditions on $v$ so that the regret of UCB1 is smaller than that of FLT is not practical since it would require the knowledge of the gap $\Delta$.

\begin{algorithm}[t]
				\begin{small}
					\textbf{Input}: initial parameter $\xs_1$, $\alpha > 1$
					\begin{algorithmic}
					\State Execute $\pi_{\xs_1}$, observe $\outcome_1 \sim p_{\xs_1}$ and $ \func(\outcome_1)$
  					\For{$t = 2,\dots,n$}
						\State Compute expected return estimate $\widecheck{J}_t(\xs)$
						\State Compute index:$$\textstyle B_t(\xs) = \widecheck{J}_t(\xs) + (1+\sqrt{2})\sqrt{\frac{\alpha \log t}{\eta_t(\xs)}}$$
						\State Select $\xs_t \in \argmax_{\xs \in \DecisionSet} B_t(\xs)$
						\State Execute $\pi_{\xs_t}$, observe $\outcome_t \sim p_{\xs_t}$ and $ \func(\outcome_t)$
						\EndFor
					\end{algorithmic}
				\end{small}
\caption{OPTIMIST}\label{alg:optimist}
\end{algorithm}

\paragraph{OPTIMIST}
The difficulty in combining the advantages of FTL and UCB1 is overcome by OPTIMIST (Algorithm~\ref{alg:optimist}), an OFU-based algorithm introduced by~\citet{papini2019optimistic}.\footnote{We consider here a slight variant of OPTIMIST with an explicit exploration parameter $\alpha$ in place of the original confidence parameter $\delta$ from~\cite{papini2019optimistic}, since we focus on expected regret rather than high-probability regret.} It selects policies as to maximize an \emph{optimistic} TMIS expected return estimate that favors policies with a lower effective number of trajectories. In the original paper~\cite{papini2019optimistic}, OPTIMIST is only shown to enjoy sublinear regret in high probability under perfect mediator feedback ($v<\infty$). We show here that OPTIMIST actually enjoys constant regret under perfect mediator feedback (like FTL) without ever degenerating into linear regret (like UCB1).
\begin{restatable}{thr}{optimist}\label{th:optimist}
	Let $\DecisionSet= [K]$ and $v(\xs) = \max_{\xs'\in\DecisionSet}d_{2}(p_{\xs}\|p_{\xs'})$ for all $\xs\in\DecisionSet$ ($v(\xs)$ can be infinite). For any $\alpha>1$, the expected regret of OPTIMIST with truncation $M_t(\xs) = \sqrt{\frac{td_{2}(p_{\xs}\|\Phi_t)}{\alpha\log t}}$ is bounded as:\\
(a)~~if $v\coloneqq \max_{\xs\in\DecisionSet}v(\xs) < \infty$:
		\begin{align*}
			\E R(n) & \leq \sum_{\xs\in\DecisionSet:\Delta(\xs)>0}
				\frac{48\alpha v(\xs)}{\Delta(\xs)}\log\frac{24\alpha v(\xs)}{\Delta(\xs)^2}\\
			& \quad +\Delta(\xs_1)  + \frac{2K}{\alpha-1}\min\left\{1, \sqrt{2\log v}\right\};
		\end{align*}
(b)~~in any case:
		\begin{align*}
			& \E R(n) \leq \sum_{\xs\in\DecisionSet:\Delta(\xs)>0}\frac{24\alpha}{\Delta(\xs)} \log n + \frac{\alpha + 1}{\alpha-1}K,
		\end{align*}
		with an instance-independent expected regret of $ \E R(n)\leq 4\sqrt{6\alpha K n\log n} + {(\alpha + 1)K}/{(\alpha-1)}$.
\end{restatable}
Note also that the regret correctly goes to zero with the divergence (when $v=1$, all the policies are equivalent).
It is an interesting open problem whether better regret guarantees can be provided for the intermediate case, \ie when some (but not all) the \Renyi divergences are finite.

\begin{algorithm}[t]
 \begin{small}
 \textbf{Input}: initial parameter $\xs_1$, scale $a\ge 0$, translation $b \ge 0$, $\alpha > 1$
 \begin{algorithmic}
 \State Execute $\pi_{\xs_1}$, observe $\outcome_1 \sim p_{\xs_1}$ and $\func(\outcome_1)$
  \For{$t = 2,\dots,n$}
  	\State Compute expected return estimate $\widecheck{J}_t(\xs)$
  	\State Generate perturbation:  $$\textstyle U_t(\xs) = \frac{1}{\eta_t(\xs)} \sum_{l=1}^{a\eta_t(\xs)} \outcome_l + b\text{, with }\outcome_l \sim \mathrm{Ber}(1/2)$$
  	\State Select $\xs_t \in \argmax_{\xs \in \DecisionSet}\widecheck{J}_t(\xs) + U_t(\xs)$
  	\State Execute $\pi_{\xs_t}$, observe $\outcome_t \sim p_{\xs_t}$ and $\func(\outcome_t)$
  \EndFor
  \end{algorithmic}
  \end{small}
\caption{\algname}\label{alg:randomist}
\end{algorithm}

\section{Randomized Algorithms}\label{sec:rand}
In this section, we propose a novel algorithm for regret minimization in \problem that selects the policies with a randomized strategy. \algname (\algnameext, Algorithm~\ref{alg:randomist}) is based on PHE~\cite{kveton2019perturbed} and employs additional samples to \emph{perturb} the TMIS expected return estimate $\widecheck{J}_t(\xs)$, enforcing exploration.\footnote{In this sense, \algname, as well as PHE, resembles the Follow the Perturbed Leader~\cite{hannan1957approximation} strategy.} Clearly, \algname shares the randomized nature of exploration with the Bayesian approaches for bandits (\eg Thompson Sampling~\cite{thompson1933likelihood}) although no prior-posterior mechanism is explicitly implemented and no assumption (apart for boundedness) on the return distribution is needed.
At each round $t=2,\dots,n$, we update the TMIS expected return estimate for each policy $\widecheck{J}_t(\xs)$ and we generate the perturbation $U_t(\xs)$ that is obtained through $a \eta_t(\xs)$ \emph{pseudo-rewards} sampled from a Bernoulli distribution $\mathrm{Ber}(1/2)$. Then, we play the policy maximizing the \emph{perturbed estimated expected return}, \ie the sum of the estimated expected return $\widecheck{J}_t(\xs)$ and the perturbation $U_t(\xs)$. The two hyperparameters are the \emph{perturbation scale} $a>0$ and the \emph{perturbation translation} $b>0$. Informally, $a$ and $b$ are responsible for the amount of exploration: $a$ governs the variance of the perturbation, while $b$ (which is absent in PHE) accounts for the negative bias introduced by the TMIS estimator. We now present the properties of \algname with finite parameter space and propose an extension to deal with compact parameter spaces.

\begin{figure*}
\centering
\includegraphics[width=.95\textwidth]{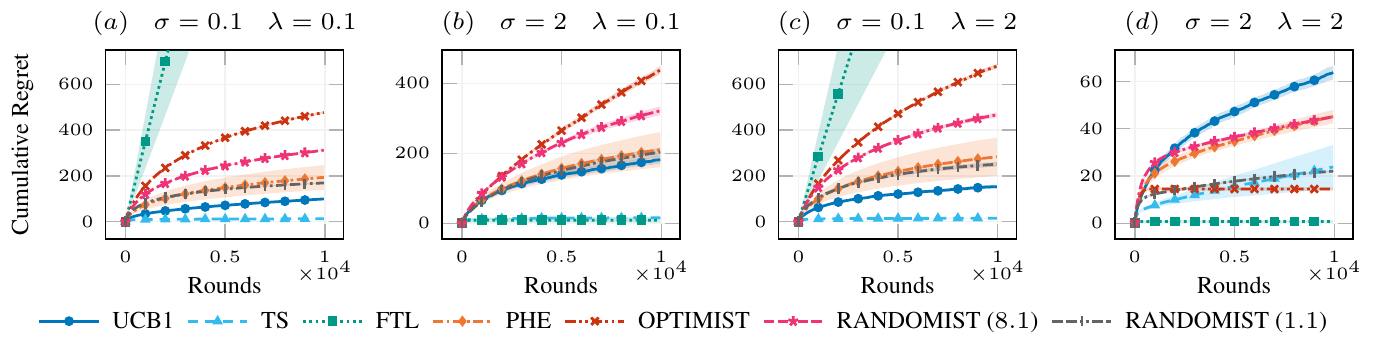}
\caption{Cumulative regret on the illustrative \problem for four values of $\sigma$ and $\lambda$. 20 runs, 95\% c.i.}
\label{fig:resIllustrative}
\end{figure*}

\paragraph{Finite Parameter Space}
If the policy space is finite, we can show that \algname enjoys guarantees similar to those of OPTIMIST on the expected regret.
{\thinmuskip=1mu
\medmuskip=1mu
\thickmuskip=1mu
\begin{restatable}{thr}{randomist}\label{th:randomist}
	Let $\DecisionSet=[K]$, $v(\xs) = \max_{x'\in\DecisionSet}d_{2}(p_{\xs}\|p_{\xs'})$ for all $\xs\in\DecisionSet$ ($v(\xs)$ can be infinite) and $v^*(\xs) = \max\{v(\xs), v(\xs^*)\}$ where $\pi_{\xs^*}$ is an optimal policy. For any $\alpha>1$, the expected regret of \algname with truncation $M_t(\xs) = \sqrt{\frac{td_{2}(p_{\xs}\|\Phi_t)}{\alpha\log t}}$ is bounded as follows:\\
(a)~~if $v\coloneqq \max_{\xs\in\DecisionSet}v(\xs) < \infty$, $b \le \sqrt{(\alpha \log t)/\eta_t(\xs)}$ and $a \ge 0$:
		\begin{align*}
			\E R(n) & \leq  \sum_{\substack{\xs\in\DecisionSet: \Delta(\xs)>0}}\!\!\!\!\!\!\!\!\!\!\!\!\!\!\!\!\!\!\!\!\!\!\!\!\!\!\!\!
				\frac{(188+32a)\alpha v^*(\xs)}{\Delta(\xs)}\log\frac{(94+16a)\alpha v^*(\xs)}{\Delta(\xs)^2} \\
			  & \quad + \Delta(\xs_1) + \frac{\alpha+3}{\alpha-1}\min\left\{1, \sqrt{2\log v}\right\} K;
		\end{align*}
(b)~~no matter the value of $v$, if $a > 8$ and $  J(\xs) - \E[\widecheck{J}_t(\xs)] \le b \le \sqrt{(\alpha \log t)/\eta_t(\xs)}$:
		\begin{align*}
			&\E R(n) \leq \sum_{\xs\in\DecisionSet:\Delta(\xs)>0}\frac{(52+110a)c\alpha}{\Delta(\xs)} \log n + 2\frac{\alpha+1}{\alpha-1}K,
		\end{align*}
		where $c = 2 + \frac{e^2 \sqrt{a}}{\sqrt{2 \pi}} \exp\big[\frac{16}{a-8} \big]  \Big( 1+ \sqrt{\frac{\pi a}{a-8}} \Big)$,
		with an instance-independent regret of $\E R(n)\leq 2 \sqrt{(52+110a)c\alpha Kn \log n} + 2\frac{\alpha+1}{\alpha-1}K$.
\end{restatable}
}
Under perfect mediator feedback \algname enjoys constant regret, like OPTIMIST, although with a dependence on $v^*(\xs)$, which involves the divergence \wrt an optimal policy. Moreover, in such case, since exploration is not needed, we could even set $a=b=0$ reducing \algname to FTL. Similarly to OPTIMIST, when we allow $v=\infty$, the regret becomes logarithmic and the hyperparameters $a$ and $b$ must be carefully set to enforce exploration.

\paragraph{Compact Parameter Space}
When the parameter space is a compact set, \ie $\DecisionSet = [-M,M]^d$, the $\argmax$ in Algorithm~\ref{alg:randomist} cannot be explicitly computed. However, the random variable $\xs\in \argmax_{\xs' \in \DecisionSet} \widecheck{J}_t(\xs) + U_t(\xs)$  can be seen as sampled from the distribution for $\xs$ of being the parameter in $\DecisionSet$ with the largest perturbed estimated expected return, whose p.d.f. is given by~\cite{d2017estimating}:
\begin{align}
\mathfrak{g}_t^*(\xs) &= g \Big( \widecheck{J}_t(\xs) + U_t(\xs) =   \sup_{\xs' \in \DecisionSet} \widecheck{J}_t(\xs') + U_t(\xs') | \Hs_{t-1}\Big) \notag \\
& = \int_{\mathbb{R}} \frac{g_{\xs}(y)}{G_{\xs}(y)} \Prodi_{\DecisionSet} G_{\xs'}(y) \de \xs' \de y,
\end{align}
where $\text{$ \scriptstyle\Prodi $}_{\DecisionSet} G_{\xs}(y) \de \xs = \exp \left( \int_{\DecisionSet} \log G_{\xs}(y) \de \xs \right)$ is the \emph{product integral}~\cite{davis1970concerning}, $g_{\xs}$ and $G_{\xs}$ are the p.d.f. and the c.d.f. of the random variable $\widecheck{J}_t(\xs) + U_t(\xs)$ conditioned to the history $\Hs_{t-1}$. The \emph{computation} of $\mathfrak{g}_t^*$ (even up to a constant) is challenging as the product integral requires a numerical integration over the parameter space $\DecisionSet$. Provided that an approximation (up to a constant) ${\mathfrak{g}}_t^\dagger$ of $\mathfrak{g}_t^*$ is available, we can use a Monte Carlo Markov Chain method~\cite{owen2013monte} to generate a sample $\xs_t \sim {\mathfrak{g}}_t^\dagger$. As a practical approximation, we consider the p.d.f. for $\xs$ of having a perturbed estimated expected return larger than that of the previously executed policies:\footnote{$ {\mathfrak{g}}_t^\dagger$ can be seen as obtained from  ${\mathfrak{g}}_t^*$ applying a quadrature with $\{\xs_1, \dots \xs_{t-1}\}$ as nodes for the inner integral.}
${\mathfrak{g}}_t^\dagger(\xs)  \propto \int_{\mathbb{R}} g_{\xs}(y) \prod_{i=1}^{t-1} G_{\xs_{i}}(y) \de y.$
%
%

Since $\mathcal{O}(d)$ iterations of MCMC are sufficient to generate a sample~\cite{beskos2009computational}, where $d$ is the dimensionality of $\DecisionSet$, and one evaluation of $ {\mathfrak{g}}_t^\dagger$ can be performed in time $\mathcal{O}(t^3)$, the per-round complexity of RANDOMIST is $\mathcal{O}(dt^3)$. This can be further reduced to $\mathcal{O}(dt^2)$ via clever caching (see Appendix~\ref{apx:expDetails}). OPTIMIST~\cite{papini2019optimistic} can also be applied to continuous parameter spaces, with an $\widetilde{\mathcal{O}}(\sqrt{vdn})$ high-probability regret bound. However, it is not clear how to perform the maximization step of OPTIMIST efficiently in this setting, since the optimistic index is non-differentiable and non-convex in the parameter variable.
Discretization is adopted in~\citep{papini2019optimistic}, leading to $\mathcal{O}(t^{1+d/2})$ time complexity, that is exponential in $d$.
The RANDOMIST variant proposed here, although heuristic, has only polynomial dependence on $d$, thus scaling more favorably to high-dimensional problems.

\section{Numerical Simulations}\label{sec:exp}
We present the numerical simulations, starting with an illustrative example and then moving to RL benchmarks.
For the RL experiments, similarly to~\citet{papini2019optimistic}, the evaluation is carried out in the parameter-based PO setting~\cite{sehnke2008policy}, where the policy parameters $\vtheta$ are sampled from a \emph{hyperpolicy} $\nu_{\vxi}$ and the optimization is performed in the space of \emph{hyperparameters} $\Xi$ (Appendix~\ref{apx:pbpo}). This setting is particularly convenient since the \Renyi divergence between hyperpolicies can be computed exactly (at least for Gaussians). 
Details and an additional experiment on the Cartpole domain are reported in Appendix~\ref{apx:expDetails}.

\paragraph{Illustrative Problems}
The goal of this experiment is to show the advantages of the additional structure offered by the mediator feedback over the bandit feedback. We design a class of 5-policy \problem problems, isomorphic to bandit problems, in which trajectories are collapsed to a single real action $\OutcomeSet = \mathbb{R}$ and $\func(\tau) = \max\{0, \min\{1, \tau/4 \} \}$. The policies are Gaussians $(\mathcal{N}(0,\sigma^2)$, $\mathcal{N}(1,\sigma^2)$, $\mathcal{N}(2,\sigma^2)$, $\mathcal{N}(2.95,\lambda^2)$, $\mathcal{N}(3,\sigma^2))$ defined in terms of the two values $\sigma,\lambda > 0$. The optimal policy is the fifth one and we have a near-optimal parameter, the fourth, with a different variance. Intuitively, we can tune the parameters $\sigma$ and $\lambda$ to vary the \Renyi divergences. We compare \algname with $a=8.1$ (as prescribed in Theorem~\ref{th:randomist}) and $a=1.1$, and $b= \sqrt{(\alpha \log t)/\eta_t(\xs)}$ for both cases, with OPTIMIST~\cite{papini2019optimistic}, FTL, UCB1~\cite{auer2002finite}, PHE~\cite{kveton2019perturbed}, and TS with Gaussian prior~\cite{agrawal2013further}. The cumulative regret is shown in Figure~\ref{fig:resIllustrative} for four combinations of $\sigma$ and $\lambda$. In (a) and (d) we are in a perfect mediator feedback, but in (a) $\log v \simeq 2.25$ and (d) $\log v \simeq 900$. Instead, in (b) or (c), we have $v = \infty$. We notice that FTL displays a \linebreak(near-)linear regret in (a) as expected since $v=\infty$ but also in (c) where $v$ is finite but very large. \algname with theoretical value of $a=8.1$ always displays a good behavior and better than OPTIMIST, except in (d) where the latter shows a remarkable constant regret. We also note that when the amount of information shared among parameters is small, UCB1 performs better than OPTIMIST as well as PHE over \algname. 
Furthermore, TS with Gaussian prior performs very well across the tasks, although it considers the bandit feedback. This can be explained since TS assumes the correct return distribution. It also suggests that \algname could be improved when coped with other perturbation distributions (\eg Gaussian). Finally, we observe that \algname with $a=1.1$, although violating the conditions of Theorem~\ref{th:randomist}, keeps showing a sublinear regret even in (b) and (c) when $v=\infty$.


\begin{figure}[t]
\includegraphics[width=0.48\textwidth]{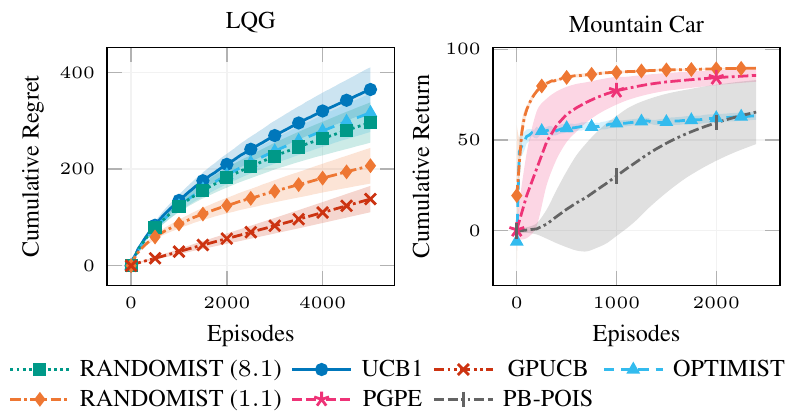}
\caption{Cumulative regret in the LQG (30 runs, 95\% c.i.) and cumulative return in the Mountain Car (5 runs, 95\% c.i.).}\label{fig:lqg}
\end{figure}

\paragraph{Linear Quadratic Gaussian Regulator}
The Linear Quadratic Gaussian Regulator~\citep[LQG,][]{dorato1995lqg} is a benchmark for continuous control. We consider the monodimensional case and a Gaussian hyperpolicy $\nu_{\xi} = \mathcal{N}(\xi, 0.15^2)$ where $\xi$ is the learned parameter. From $\nu_{\xi}$, we sample the gain $\theta$ of a deterministic linear policy: $a_h = \theta s_h$. This experiment aims at comparing \algname with UCB1~\cite{auer2002finite}, GPUCB~\cite{srinivas2010gaussian}, and OPTIMIST~\cite{papini2019optimistic} in a finite policy space by discretizing $[-1,1]$ in $K=100$ parameters. In Figure~\ref{fig:lqg}, we notice that OPTIMIST and \algname outperform UCB1. While \algname with $a=8.1$ and OPTIMIST have similar performance, \algname improves significantly when setting $a$ to $1.1$. As in ~\cite{papini2019optimistic}, the good performance of GPUCB is paired with a lack of theoretical guarantees due to the arbitrary choice of the GP kernel.

\paragraph{Mountain Car}
To test \algname in a continuous parameter space, we employ the approximation described above in the Mountain Car environment~\cite{sutton2018reinforcement}. We consider the setting of~\cite{papini2019optimistic}, employing PGPE~\cite{sehnke2008policy} and PB-POIS~\cite{metelli2018policy} as baselines. We use a Gaussian hyperpolicy $\nu_{\bm{\xi}} = \mathcal{N}(\bm{\xi}, \mathrm{diag}(0.15, 3)^2)$ with learned mean $\bm{\xi}$, from which we sample the parameters of a deterministic policy, linear in position and velocity. The exploration phase is performed by sampling from the approximate density $\mathfrak{g}_t^\dagger$, taking $10$ steps of the Metropolis-Hastings algorithm~\cite{owen2013monte} with Gaussian proposal $q_m = \mathcal{N}(\xs_m, \mathrm{diag}(0.15, 3)^2)$.
Figure~\ref{fig:lqg} shows that \algname outperforms both policy gradient baselines and OPTIMIST, in terms of learning speed and final performance.


\section{Related Works}\label{sec:related}
In this section, we revise the related literature, with attention to bandits with expert advice and to provably efficient PO. Additional comparisons are reported in Appendix~\ref{apx:relatedWorks}.
\paragraph{Mediator Feedback and Expert Advice}
A related formulation are the \emph{Bandits with Expert Advice}~\citep[BEA,][Section 4.2]{bubeck2012regret}, introduced as an approach to adversarial contextual bandits. To draw a parallelism with \problem, let $\OutcomeSet$ be the set of arms and $\DecisionSet = [K]$ the finite set of experts. At each step $t$, the agent receives \emph{advice} $p^t_{\xs}\in \mathscr{P}(\OutcomeSet)$ from each expert $\xs\in\DecisionSet$, selects one expert $\xs_t$, and pulls arm $\outcome_t\sim p_{\xs_t}^t$. The goal is to minimize the \emph{in-class} regret, competing with the best expert in hindsight. Differently from the trajectory distributions of \problem, expert advice can change with time.
A major concern of BEA, also relevant to PO, is the dependency of the regret on the number $K$ of experts (resp. policies). A na\"ive application of Exp3~\citep{auer2002nonstochastic} yields $\mathcal{O}(\sqrt{nK\log K})$ regret. Like our \problem algorithms, this is impractical when the experts are exponentially many. Exp4~\citep{auer2002nonstochastic} achieves  $\mathcal{O}(\sqrt{n|\OutcomeSet|\log K})$ regret, which scales well with $K$, but is vacuous in the case of infinite arms.~\citet{mcmahan2009tighter} replace $|\OutcomeSet|$ with the \emph{degree of agreement} of the experts, which has interesting similarities with our distributional-divergence approach.
\emph{Meta-bandit} approaches~\citep{agarwal2017corralling,pacchiano2020model} are so general that could be applied both to continuous-arm BEA and \problem, but also exhibit a superlogarithmic dependence on $K$. ~\citet{beygelzimer2011contextual} obtain $\widetilde{\mathcal{O}}(\sqrt{dn})$ 
regret competing with an infinite set of experts of VC-dimension $d$, mirrored in PO by OPTIMIST on compact spaces of dimension $d$~\citep[][Theorem 3]{papini2019optimistic}.

\paragraph{Provably Efficient PO} Recently, a surge of approaches to deal with PO in a theoretically sound way, with both stochastic or adversarial environments, has emerged. These works consider either \emph{full-information}, \ie the agent observes the whole reward function $\{\mathcal{R}(s_h,a)\}_{a \in \mathcal{A}}$ regardless the played action~\citep[\eg][]{rosenberg2019online, cai2019provably}, or the \emph{bandit feedback} (with a different meaning compared to the use we have made in this paper), in which only the reward of the chosen action is observed $\mathcal{R}(s_h,a_h)$~\citep[\eg][]{jin2019learning,efroni2020optimistic}. These methods are not directly comparable with the mediator feedback, although both settings exploit the structure of the PO problem. While with MF we explicitly model the policy space $\Pi_{\Theta}$, these methods search in the space of all Markovian stationary policies. Furthermore, they are limited to tabular MDPs, while MF can deal natively with continuous state-action spaces.

\section{Discussion and Conclusions}\label{sec:discussion}
We have deepened the understanding of policy optimization as an online learning problem with additional feedback. We believe that mediator feedback has potential applications even beyond PO. Indeed, the problem of optimizing over probability distributions also encompasses GANs and variational inference~\citep{chu2019probability} and, more generally, MF emerges in any Bayesian network in which we control the conditional distributions on some vertexes, via parameters $\xs$, while the other are fixed and independent from $\xs$.
%
%
Furthermore, we have introduced a novel randomized algorithm, RANDOMIST, and we have shown its advantages both in terms of computational complexity and performance. The algorithm could be improved by adopting a different perturbation, \eg Gaussian, as already hinted in~\cite{kveton2019garbage}. Further work is needed to match the theoretical regret lower bounds.
Currently, a major discrepancy is the use of the KL-divergence in the lower bounds instead of the larger \Renyi divergence required by algorithms based on IS. \HL{Moreover, the algorithm employs the ratio importance weight and, thus, it might suffer from the curse of horizon~\citep{liu2018breaking}.} Finally, the case of non-perfect mediator feedback could be related to graphical bandits~\cite{alon2017nonstochastic}, where finite \Renyi divergences are the edges of a directed feedback graph, in order to capture the actual difficulty of this intermediate case.

\clearpage

%

\section*{Acknowledgments}
This work has been partially supported by the Italian MIUR PRIN 2017 Project ALGADIMAR "Algorithms, Games, and Digital Markets".

{\small
\bibliography{pomits}
}

\clearpage
\onecolumn
\appendix
\setlength{\abovedisplayskip}{5pt}
\setlength{\belowdisplayskip}{5pt}
\setlength{\textfloatsep}{20pt}

\section*{Index of the Appendix}
In the following, we briefly recap the contents of the Appendix.
\begin{itemize}[leftmargin=*, label={--}]
	\item Appendix~\ref{apx:pbpo} provides the formulation of the policy optimization problem with mediator feedback in the parameter-based setting.
	\item Appendix~\ref{apx:relatedWorks} completes the review of the relevant literature begun in Section~\ref{sec:related}, focusing on the approaches that share connections with the mediator feedback and \algname.
	\item Appendix~\ref{apx:key} reports some central lemmas that are employed for the analysis of the TMIS estimator for expected return estimation.
	\item Appendix~\ref{apx:proofs} provides the proofs that are omitted in the main paper.
	\item Appendix~\ref{apx:auxiliaryLemmas} reports some auxiliary lemmas that are employed in the analysis.
	\item Appendix~\ref{apx:expDetails} presents the experimental setting in more detail, additional experimental results and a discussion about implementation issues.
\end{itemize}

\section{Parameter-based PO and Mediator Feedback}\label{apx:pbpo}
In \emph{parameter-based} policy optimization~\citep[PB-PO][]{sehnke2008policy} the policy parameters $\vtheta$ are sampled from a higher level distribution, called \emph{hyperpolicy}, and the learning process occurs in the space of hyperpolicy parameters, named \emph{hyperparameters}. More formally, we consider a space of parametric hyperpolicies $\mathcal{N}_{\Xi} = \{ \nu_{\vxi} \in \mathscr{P}(\Theta) \,:\, \vxi \in \Xi  \}$ where $\Xi \subseteq \mathbb{R}^d$ is the hyperparameter space. To each hyperpolicy $\nu_{\vxi}$ we can associate an index of performance:
$$J(\vxi) = \E_{\vtheta \sim \nu_{\vxi}}\left[J(\vtheta)\right] = \E_{\vtheta \sim \nu_{\vxi}}\left[ \E_{\tau \sim p_{\vtheta}} \left[\Rs(\tau)\right]\right].$$
The goal consists in finding an optimal hyperparameter, \ie any $\vxi^*$ maximizing $J(\vxi)$. At each round $t\in[n]$, we evaluate a hyperparameter $\vxi_t \in \Xi$ by sampling one (or more) policy parameters $\vtheta_t$, running policy $\pi_{\vtheta_t}$, collecting one (or more) trajectory $\tau_t$ and observing the corresponding return $\Rs(\tau_t)$. Then, based on the history of observations $\Hs_{t} = \{(\vxi_i,(\vtheta_i, \tau_i), \Rs(\tau_i))\}_{i=1}^t$, we update the hyperparameter $\vxi_{t}$ to get $\vxi_{t+1}$. Differently from the \emph{action-based} paradigm (AB-PO), in PB-PO deterministic policies are typically employed, since the stochasticity of the hyperpolicy is a sufficient source of exploration~\cite{sehnke2008policy}.

From an \emph{online learning} perspective, the goal of an agent consists in maximizing the sum of the expected payoffs over $n$ rounds or, equivalently, minimize the cumulative regret $R(n)$ \wrt to an optimal decision:
\begin{equation}
	\max_{\vxi_1, \dots \vxi_{n} \in \Xi} \sum_{t=1}^n J(\vxi_t) \quad \Leftrightarrow \quad \min_{\vxi_1, \dots \vxi_{n} \in  \Xi} R(n) =  \sum_{t=1}^n \Delta(\vxi_t),
\end{equation}
where $\Delta(\vxi) = J^* - J(\vxi)$ is the optimality gap of $\vxi \in \Xi$ and $J^* =  \sup_{\vxi \in \Xi} J(\vxi)$.

\section{Related Works}\label{apx:relatedWorks}
In this appendix, we revise the additional relevant literature, with particular attention to structured bandits, approximate Thompson sampling, and RL approaches with regret guarantees.

\paragraph{Structured Bandits and Feedback}
Although the formulation is quite different, the mediator feedback can be thought of as a way to endow a bandit with a particular structure. Numerous works have studied different kinds of structure (\eg linear~\cite{abbasi2011improved, dani2008stochastic, rusmevichientong2010linearly}, Lipschitz~\cite{magureanu2014lipschitz}). Of particular interest is~\cite{lattimore2014bounded}, in which \emph{general structures} are considered and constant problem-dependent regret results of order $\mathcal{O}(\Delta^{-1} \log \Delta^{-1})$ are derived for specific cases. Concerning the regret lower bounds, in \cite{bubeck2013bounded, lattimore2014bounded, combes2017minimal} several results are shown for different classes of structured bandits. Extensions of the bandit feedback in which, when an arm is pulled, the outcome of other arms is revealed (possibly with some probability) are typically based on a graph structure~\cite{alon2015online, chen2016combinatorial}. More specifically, in~\cite{caron2012leveraging} the notion of \emph{side-observation} is introduced to consider free extra information (passive or active) that allows achieving constant problem-dependent regret. This is quite similar to our mediator feedback, although we do not receive further fresh samples but we employ a single sample to update the estimates of multiple arms. More generally, we can see the mediator feedback (but also the side observations) as  something in between the bandit feedback and the full information (expert) feedback. Another related concept is that of instrument-armed bandits~\cite{kallus2018instrument}, where the reward of a decision (instrument) depends on an intermediate, observed variable (\eg compliance to medical prescription). Different definitions of regret can be adopted depending on whether one is interested in the causal relationship between the instrument and the reward. We adopt here the \emph{Intent-to-Treat Regret} formulation, since we are only interested in finding a good instrument (policy or hyperpolicy). In~\cite{kallus2018instrument}, this case is treated as a regular bandit problem, ignoring the mediator feedback.

\paragraph{Approximate Thompson Sampling}
Thompson Sampling~\citep[TS,][]{thompson1933likelihood, russo2018tutorial} is an effective methodology for randomized exploration in multi-armed bandits. The main bottlenecks of TS are the computation of and the sampling from the posterior distribution. Several works focused on the effect of sampling from \emph{approximate posteriors} with guarantees on the degradation of the (Bayesian) regret~\cite{lu2017ensemble,phan2019thompson}. Other works addressed the sampling issue by employing Monte Carlo Markov Chain~\cite[MCMC,][]{casella1992explaining} approaches with Laplace approximation~\cite{chapelle2011empirical}, Langevin Monte Carlo~\cite{bubeck2018sampling, cheng2018convergence}, and bootstrapping~\cite{eckles2014thompson}. Apart from the contextual bandits with linear payoff~\cite{agrawal2013thompson, abeille2017linear}, the case of infinite arm set has been addressed with TS only in a very limited number of works, deriving guarantees on the Bayesian regret under strong regularity conditions~\cite{grant2020thompson} (without proposing a sampling routine) or by employing adaptive discretizations in the context of GPs~\cite{chowdhury2017kernelized}.

\paragraph{Reinforcement Learning}
We have already surveyed the approaches to PO with regret guarantees in Section~\ref{sec:related}. Here, we focus on other provably efficient RL approaches. The majority of RL methods with theoretical guarantees has been developed in the context of tabular RL and are based on optimistic exploration~\cite[\eg][]{jaksch2010near, jin2018q, strehl2008analysis}. These approaches, typically, do no extend directly to continuous tasks and/or to a randomized form of exploration. Recently, a number of approaches have been proposed to apply \emph{posterior sampling} for solving MDPs, mainly with guarantees on the Bayesian regret~\cite[\eg][]{osband2013more, osband2017deep}, but also on the frequentist regret for some specific classes of continuous problems~\cite[\eg][]{abeille2017thompson,abeille2018improved, dean2018regret}.



\section{Key Lemmas on Off-Distribution Payoff Estimation}\label{apx:key}
In this appendix, we revise the key lemmas needed when using MIS and TMIS for off-policy expected return estimation. We start in Appendix~\ref{apx:biasVariance} with a result for bounding bias and variance for general truncation threshold $M$ and then we focus, in Appendix~\ref{apx:payoffOff}, on the specific threshold $M_t(\xs)$ used in the algorithms.

\subsection{Lemmas for General Truncation}\label{apx:biasVariance}
In this appendix, we consider the importance weights defined for general probability distributions $P,Q \in \mathscr{P}(\OutcomeSet)$ with $P \ll Q$:
\begin{align*}
	\omega_{P/Q}(\tau) = \frac{\de P}{\de Q}(\tau), \qquad \widecheck{\omega}_{P/Q}(\tau) = \min \left\{ M, \omega_{P/Q}(\tau) \right\},
\end{align*}
where $M < \infty$ is the truncation threshold. We start with an ancillary result, that extends Lemma 2 by~\citet{papini2019optimistic} in bounding the $\alpha$-moments and the bias of the truncated weight.

\begin{restatable}[]{lemma}{VarBiasMax}\label{lemma:VarBiasMax}
Let $P$ and $Q$ be probability measures on the measurable space $(\OutcomeSet,\mathscr{F})$ with $P \ll Q$. Then, for any $\alpha \in (0, \infty]$, the $\alpha$-moment of the truncated importance weight $\widecheck{\omega}_{P/Q}$ with threshold $M$ can be bounded for any $\beta \in [0, 1]$ as:
	\begin{equation}
	\E_{\tau \sim Q}[\widecheck{\omega}_{P/Q}(\tau)^\alpha]^{\frac{1}{\alpha}} \le M^{1-\beta}d_{\alpha\beta}(P \| Q)^{\beta-\frac{1}{\alpha}}.
	\end{equation}
	Furthermore, the bias of the truncated weight $\widecheck{\omega}_{P/Q}$  can be bounded for any $\beta \in [1, \infty]$ as:
	\begin{equation}
	\E_{\tau \sim Q}[{\omega}_{P/Q}(\tau) - \widecheck{\omega}_{P/Q}(\tau)]
	\le \left( \frac{d_{\beta}(P \| Q)}{M} \right)^{\beta-1}.
	\end{equation}
\end{restatable}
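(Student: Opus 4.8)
The plan is to treat the two inequalities separately, since both reduce to an elementary pointwise bound on the truncated weight $\widecheck{\omega}_{P/Q} = \min\{M, \omega_{P/Q}\}$ followed by the identity $\E_{\tau \sim Q}[\omega_{P/Q}(\tau)^{\gamma}] = d_{\gamma}(P\|Q)^{\gamma-1}$, which is merely a restatement of the definition of the exponentiated \Renyi divergence $d_{\gamma}(P\|Q) = \exp[D_{\gamma}(P\|Q)]$. Everything then follows by integrating a deterministic inequality and recognizing this identity with the right index $\gamma$.

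For the $\alpha$-moment bound, I would start from the two defining properties of the truncation, $\widecheck{\omega}_{P/Q} \le M$ and $\widecheck{\omega}_{P/Q} \le \omega_{P/Q}$. Splitting the exponent as $\alpha = \alpha(1-\beta) + \alpha\beta$ with $\beta \in [0,1]$, so that both partial exponents are nonnegative, and applying the first bound to the factor $\widecheck{\omega}_{P/Q}^{\alpha(1-\beta)}$ and the second to $\widecheck{\omega}_{P/Q}^{\alpha\beta}$ yields the pointwise estimate $\widecheck{\omega}_{P/Q}^{\alpha} \le M^{\alpha(1-\beta)}\, \omega_{P/Q}^{\alpha\beta}$. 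Taking the expectation under $Q$, substituting $\E_{\tau \sim Q}[\omega_{P/Q}^{\alpha\beta}] = d_{\alpha\beta}(P\|Q)^{\alpha\beta-1}$, and finally raising both sides to the power $1/\alpha$ (noting $(\alpha\beta-1)/\alpha = \beta - 1/\alpha$) produces exactly $M^{1-\beta} d_{\alpha\beta}(P\|Q)^{\beta-1/\alpha}$.

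For the bias bound, I would first rewrite the truncation gap as $\omega_{P/Q} - \widecheck{\omega}_{P/Q} = (\omega_{P/Q} - M)^{+}$, which is supported on $\{\omega_{P/Q} > M\}$ and bounded there by $\omega_{P/Q}$. The key move is to absorb the indicator into a power: for $\beta \ge 1$, on the set $\{\omega_{P/Q} > M\}$ we have $\omega_{P/Q}/M > 1$, hence $(\omega_{P/Q}/M)^{\beta-1} \ge 1$, which gives the global inequality $\omega_{P/Q} - \widecheck{\omega}_{P/Q} \le \omega_{P/Q}^{\beta}/M^{\beta-1}$. Integrating under $Q$ and again invoking $\E_{\tau \sim Q}[\omega_{P/Q}^{\beta}] = d_{\beta}(P\|Q)^{\beta-1}$ delivers the claimed $(d_{\beta}(P\|Q)/M)^{\beta-1}$.

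The main obstacle is not the algebra but the careful handling of boundary and limiting cases so that the two statements hold over the full declared ranges $\alpha \in (0,\infty]$, $\beta \in [0,1]$ and $\beta \in [1,\infty]$. In particular I would verify: the degenerate index $\alpha\beta = 1$, where the identity correctly reads $\E_{\tau \sim Q}[\omega_{P/Q}] = 1 = d_1(P\|Q)^{0}$; the endpoints $\beta = 0$ and $\beta = 1$; and especially $\alpha = \infty$ or $\beta = \infty$, where moments and divergences must be interpreted as essential suprema, so that $\E_{\tau \sim Q}[\widecheck{\omega}_{P/Q}^{\alpha}]^{1/\alpha}$ becomes $\esssup \widecheck{\omega}_{P/Q} = \min\{M, d_{\infty}(P\|Q)\}$ and the $\alpha=\infty$ inequality reduces to the weighted geometric-mean estimate $\min\{M, d_{\infty}(P\|Q)\} \le M^{1-\beta} d_{\infty}(P\|Q)^{\beta}$. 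Checking these limits explicitly (and using continuity in the indices) is what closes the proof on the stated intervals.
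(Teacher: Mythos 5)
Your proposal is correct, and for the first inequality it coincides with the paper's proof exactly: the same exponent split $\alpha = \alpha(1-\beta)+\alpha\beta$, bounding one factor of the truncated weight by $M$ and the other by $\omega_{P/Q}$, then identifying $\E_{\tau\sim Q}[\omega_{P/Q}(\tau)^{\alpha\beta}] = d_{\alpha\beta}(P\|Q)^{\alpha\beta-1}$ and taking the $1/\alpha$-power. For the bias bound, however, you take a genuinely more elementary route. The paper first bounds $\E_{\tau\sim Q}[\omega_{P/Q}(\tau)\,\Ind\{\omega_{P/Q}(\tau)>M\}]$ via H\"older's inequality, obtaining $d_\beta(P\|Q)^{(\beta-1)/\beta}\,\Prob_{\tau\sim Q}(\omega_{P/Q}(\tau)>M)^{(\beta-1)/\beta}$, and then controls the tail probability with Markov's inequality applied to $\omega_{P/Q}^{\gamma}$, finally setting $\gamma=\beta$; the two probabilistic inequalities compose to give $(d_\beta(P\|Q)/M)^{\beta-1}$. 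You instead collapse this into a single deterministic domination: on $\{\omega_{P/Q}>M\}$ one has $(\omega_{P/Q}/M)^{\beta-1}\ge 1$, hence $\omega_{P/Q}-\widecheck{\omega}_{P/Q}\le \omega_{P/Q}^{\beta}/M^{\beta-1}$ pointwise, and integrating yields the same bound with no H\"older and no Markov. The two arguments are equivalent in their endpoint (the paper's extra flexibility in the Markov exponent $\gamma$ is discarded when it sets $\gamma=\beta$), but yours is shorter and makes the mechanism transparent, while the paper's version isolates the tail probability $\Prob_{\tau\sim Q}(\omega_{P/Q}(\tau)>M)$ as an intermediate quantity of independent interest. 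Your explicit verification of the boundary cases ($\alpha\beta=1$, $\beta\in\{0,1\}$, and the essential-supremum reading at $\alpha=\infty$, where the claim reduces to $\min\{M, d_\infty(P\|Q)\}\le M^{1-\beta}d_\infty(P\|Q)^{\beta}$) is a welcome addition the paper omits, and it is exactly what is needed for the statement to hold on the full declared index ranges.
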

\begin{proof}
	Let us start with the first result. Consider the following derivation with $\beta \in [0,1]$:
	\begin{align}
		\E_{\tau \sim Q}[\widecheck{\omega}_{P/Q}(\tau)^\alpha] & = \E_{\tau \sim Q}\left[\min \left\{ M, {\omega}_{P/Q}(\tau) \right\}^{\alpha} \right] \notag \\
		& = \E_{\tau \sim Q}\left[\min \left\{ M, {\omega}_{P/Q}(\tau) \right\}^{\alpha(1-\beta)} \min \left\{ M, {\omega}_{P/Q}(\tau) \right\}^{\alpha\beta} \right] \notag \\
		& \le M^{\alpha(1-\beta)} \E_{\tau \sim Q}\left[ {\omega}_{P/Q}(\tau)^{\alpha\beta} \right]  \label{p:201} \\
		& = M^{\alpha(1-\beta)} d_{\alpha\beta}(P \| Q)^{\alpha\beta - 1}, \label{p:202}
	\end{align}
	where line~\eqref{p:201} is obtained by bounding the minimum and line~\eqref{p:202} comes from the definition of exponentiated \Renyi divergence. The result is obtained by taking the $\frac{1}{\alpha}$-power both sides.

	For the second result, we consider the following derivation for $\beta \in [1,\infty]$:
	\begin{align}
		\E_{\tau \sim Q}[{\omega}_{P/Q}(\tau) - \widecheck{\omega}_{P/Q}(\tau)] & = \E_{\tau \sim Q}\left[\left({\omega}_{P/Q}(\tau) - M\right) \Ind \left\{ {\omega}_{P/Q}(\tau) > M \right\} \right] \notag \\
		& \le \E_{\tau \sim Q}\left[{\omega}_{P/Q}(\tau) \Ind \left\{ {\omega}_{P/Q}(\tau) > M \right\} \right] \notag \\
		& \le \E_{\tau \sim Q}\left[{\omega}_{P/Q}(\tau)^\beta \right]^{\frac{1}{\beta}} \E_{\tau \sim Q}\left[ \Ind \left\{ {\omega}_{P/Q}(\tau) > M \right\}^{\frac{\beta}{\beta-1}} \right]^{\frac{\beta-1}{\beta}} \label{p:203}\\
		& = d_{\beta}(P \| Q)^{\frac{\beta-1}{\beta}} \Prob_{\tau \sim Q}\left({\omega}_{P/Q}(\tau) > M \right)^{\frac{\beta-1}{\beta}}, \label{p:204}
	\end{align}
	where line~\eqref{p:203} is an application of H\"older's inequality and line~\eqref{p:204} comes from the definition of exponentiated \Renyi divergence. Considering the probability we have for any $\gamma \in (0,\infty)$:
	\begin{align}
	\Prob_{\tau \sim Q}\left({\omega}_{P/Q}(\tau) > M \right) & = \Prob_{\tau \sim Q}\left({\omega}_{P/Q}(\tau)^\gamma > M^\gamma \right) \notag \\
	& \le \frac{\E_{\tau \sim Q}[\omega_{P/Q}(\tau)^\gamma]}{M^\gamma} \label{p:205} \\
	& = \frac{d_\gamma(P \| Q)^{\gamma-1}}{M^\gamma},\label{p:206}
	\end{align}
	where line~\eqref{p:205} follows from Markov's inequality and line~\eqref{p:206} from the definition  of exponentiated \Renyi divergence. By taking $\gamma=\beta$, we get the result.
\end{proof}

It is worth noting that, while for bounding the $\alpha$-moment of the non-truncated weight we need the $\alpha$-\Renyi divergence to be finite, for the truncated weight we can employ any $\alpha\beta$-\Renyi divergence, where $\alpha\beta \le \alpha$.

\subsection{Lemmas for $M_t(\xs)$ Truncation}\label{apx:payoffOff}
From now on, let $\widecheck{J}_t$ be the TMIS estimator for the expected return, in the case of finite policy space, at time $t$:
\begin{align}
\widecheck{J}_t(\xs) &= \frac{1}{t-1}\sum_{h=1}^{K}\sum_{l=1}^{T_t(\xs_h)}\min\left\{M_t(\xs), \frac{p_{\xs}(\outcome_{hl})}{\sum_{k=1}^{K}\frac{T_t(\xs_k)}{t-1}p_{\xs_h}(\outcome_{hl})}\right\}\func(\outcome_{hl}) \nonumber\\
&=\sum_{i=1}^{t-1}\min\left\{M_t(\xs), \frac{p_{\xs}(\outcome_i)}{\sum_{j=1}^{t-1}p_{\xs_j}(\outcome_i)}\right\}\func(\outcome_i),
\end{align}
where $K=|\DecisionSet|$, $T_t(\xs)$ is the number of executions of policy $\pi_{\xs}$ previous to time $t-1$, $\outcome_{hl}$ denotes the $l$-th trajectory from policy $\pi_{\xs_h}$, and $\outcome_{i}$ denotes the $i$-th trajectory overall. Note that, with slight abuse of notation, the underscript of trajectories is over the policy space in the first expression and over time in the second expression.
Also, let us fix the truncation threshold:
\begin{align}\label{p:truncation}
M_t(\xs) = \sqrt{\frac{(t-1)d_2(p_{\xs}\|\Phi_t)}{\alpha\log t}} = d_2(p_{\xs}\|\Phi_t)\sqrt{\frac{\eta_t(\xs)}{\alpha\log t}},
\end{align}
for some $\alpha>1$, where $\eta_t(\xs) = \frac{t-1}{d_{2}(p_{\xs}\|\Phi_t)}$ is the effective number of trajectories (or effective sample size) and ${\Phi_t = \sum_{h=1}^{K}\frac{T_t(\xs_h)}{t-1}p_{\xs_h} = \frac{1}{t-1}\sum_{i=1}^{t-1}p_{\xs_i}}$ is the mixture of the distributions of trajectories sampled previous to $t$. We sometimes abbreviate $M_t(\xs)$ and $\eta_t(\xs)$ as $M$ and $s$, respectively, when parameter and time are clear from context. When not specified, expected values are \wrt past history $\mathcal{H}_{t-1}$. We always assume $\func(\tau) \in [0,1]$. With little abuse of language, we will sometimes say \quotes{policy $\vtheta$} to mean \quotes{policy $\pi_{\vtheta}$}.

\begin{lemma}\label{th:basic}
The bias of $\widecheck{J}_t$ is bounded as follows:
\begin{align*}
	0\leq J - \E[\widecheck{J}_t(\xs)] \leq \sqrt{\frac{\alpha\log t}{\eta_t(\xs)}},
\end{align*}
the variance:
\begin{align*}
	\Var [\widecheck{J}_t(\xs)] \leq \frac{d_{2}(p_{\xs}\|\Phi_t)}{t-1},
\end{align*}
and the estimator itself:
\begin{align*}
	0\leq \widecheck{J}_t(\xs) \leq d_2(p_{\xs}\|\Phi_t)\sqrt{\frac{\eta_t(\xs)}{\alpha\log t}}.
\end{align*}
\end{lemma}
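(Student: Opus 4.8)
The plan is to treat the three bounds separately, in increasing order of difficulty, working from the explicit form $\widecheck{J}_t(\xs) = \frac{1}{t-1}\sum_{i=1}^{t-1}\widecheck{\omega}_{\xs,t}(\tau_i)\func(\tau_i)$ with $\widecheck{\omega}_{\xs,t}(\tau) = \min\{M_t(\xs), p_{\xs}(\tau)/\Phi_t(\tau)\}$, and conditioning throughout on the realized sequence of executed policies, so that the mixture $\Phi_t$ (and hence $M_t(\xs)$ and $\eta_t(\xs)$) is fixed and the trajectories $\tau_i \sim p_{\xs_i}$ may be treated as independent. The \textbf{range bound} is then immediate: each truncated weight obeys $0 \le \widecheck{\omega}_{\xs,t}(\tau_i) \le M_t(\xs)$ and $\func \in [0,1]$, so averaging gives $0 \le \widecheck{J}_t(\xs) \le M_t(\xs)$, and substituting the explicit threshold $M_t(\xs) = d_2(p_{\xs}\|\Phi_t)\sqrt{\eta_t(\xs)/(\alpha\log t)}$ recovers exactly the claimed third inequality.

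For the \textbf{variance bound} the engine is the defining relation $\sum_{i=1}^{t-1}p_{\xs_i} = (t-1)\Phi_t$. By independence, $\Var[\widecheck{J}_t(\xs)] = (t-1)^{-2}\sum_i \Var[\widecheck{\omega}_{\xs,t}(\tau_i)\func(\tau_i)]$; I would bound each variance by its second moment, use $\func \le 1$ and $\widecheck{\omega}_{\xs,t} \le \omega_{\xs,t} = p_{\xs}/\Phi_t$ to get $\Var[\widecheck{\omega}_{\xs,t}(\tau_i)\func(\tau_i)] \le \E_{\tau\sim p_{\xs_i}}[(p_{\xs}/\Phi_t)^2]$, then sum and interchange sum with integral. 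The factor $\sum_i p_{\xs_i}$ becomes $(t-1)\Phi_t$ and cancels one power of $\Phi_t$, so that $\sum_i \E_{\tau\sim p_{\xs_i}}[(p_{\xs}/\Phi_t)^2] = (t-1)\int p_{\xs}^2/\Phi_t = (t-1)d_2(p_{\xs}\|\Phi_t)$, and dividing by $(t-1)^2$ yields $\Var[\widecheck{J}_t(\xs)] \le d_2(p_{\xs}\|\Phi_t)/(t-1)$.

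The \textbf{bias bound} is the most delicate. The lower bound $J(\xs) - \E[\widecheck{J}_t(\xs)] \ge 0$ follows by first checking that the untruncated MIS estimator is unbiased — again via $\sum_i p_{\xs_i} = (t-1)\Phi_t$, which collapses $\E[\widehat{J}_t(\xs)]$ to $\int p_{\xs}\func = J(\xs)$ — and then noting that truncation only shrinks a nonnegative-weighted estimate, so $\E[\widecheck{J}_t(\xs)] \le J(\xs)$. For the upper bound I would write $J(\xs) - \E[\widecheck{J}_t(\xs)] = \frac{1}{t-1}\sum_i \E_{\tau\sim p_{\xs_i}}[(\omega_{\xs,t} - \widecheck{\omega}_{\xs,t})\func]$, bound $\func \le 1$, and apply the same mixture identity to rewrite the aggregated truncation gap as the single expectation $\E_{\tau\sim\Phi_t}[\omega_{\xs,t}(\tau) - \widecheck{\omega}_{\xs,t}(\tau)]$ — precisely the quantity controlled by Lemma~\ref{lemma:VarBiasMax} with $P = p_{\xs}$ and $Q = \Phi_t$. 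Invoking that lemma with $\beta = 2$ bounds the gap by $d_2(p_{\xs}\|\Phi_t)/M_t(\xs)$, and substituting $M_t(\xs) = d_2(p_{\xs}\|\Phi_t)\sqrt{\eta_t(\xs)/(\alpha\log t)}$ cancels the divergence, leaving $\sqrt{(\alpha\log t)/\eta_t(\xs)}$ as claimed.

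I expect the main obstacle to be conceptual rather than computational: justifying the independence and the expectation identities when the policies $\xs_1,\dots,\xs_{t-1}$ are chosen adaptively from the history. The cleanest route is to condition on the entire played-policy sequence, so that $\Phi_t$ is deterministic and the mixture identity $\sum_i p_{\xs_i} = (t-1)\Phi_t$ — which drives all three bounds — holds exactly; since the resulting estimates depend on the realization only through $\Phi_t$, they then transfer to the unconditional statement.
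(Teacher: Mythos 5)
Your proposal is correct and takes essentially the same route as the paper: the paper's proof is a two-line deferral to Lemma 2 of \citet{papini2019optimistic} (extended here as Lemma~\ref{lemma:VarBiasMax}), and your derivations --- the range bound directly from the truncation threshold $M_t(\xs)$, the variance via the second moment together with the mixture identity $\sum_{i=1}^{t-1}p_{\xs_i}=(t-1)\Phi_t$, and the bias via unbiasedness of the untruncated estimator plus the truncation gap controlled by Lemma~\ref{lemma:VarBiasMax} with $\beta=2$ --- spell out exactly the deduction the paper leaves implicit, with the correct constants. Your closing caveat about adaptively chosen policies is resolved at the same level of formality as in the paper itself, which likewise treats the trajectories as drawn independently from $p_{\xs_i}$ given the executed-policy sequence (cf.\ the convention stated before Lemma~\ref{th:basic} and the use of Theorem~\ref{th:chernoff} in Lemma~\ref{th:ci}).
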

\begin{proof}
The last property is evident from the chosen truncation threshold~\eqref{p:truncation}, and the first two can be easily deduced from~\cite{papini2019optimistic}, Lemma 2.
\end{proof}

\begin{lemma}\label{th:ci}
	For all $\xs\in\DecisionSet$, $t\geq 1$ and $\epsilon\geq0$:
	\begin{align*}
	\Prob\left(\widecheck{J}_t(\xs) - J(\xs) > \epsilon \right)\leq
	\exp\left[-\frac{\epsilon^2 \eta_t(\xs)}{2\left(1+\frac{\epsilon}{3}\sqrt{\frac{\eta_t(\xs)}{\alpha\log t}}\right)}\right].
	\end{align*}
	Moreover, if $\epsilon\geq\sqrt{\frac{\alpha\log t}{\eta_t(\xs)}}$:
	\begin{align*}
	\Prob\left(J(\xs)-\widecheck{J}_t(\xs) > \epsilon\right) \leq \exp\left[-\frac{\eta_t(\xs)}{2}\left(\epsilon-\sqrt{\frac{\alpha\log t}{\eta_t(\xs)}}\right)^2\right].
	\end{align*}
\end{lemma}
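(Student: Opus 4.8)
The plan is to read both inequalities as concentration bounds for the sum $\widecheck{J}_t(\xs) = \sum_{i=1}^{t-1} X_i$, where $X_i := \frac{1}{t-1}\widecheck{\omega}_{\xs,t}(\outcome_i)\func(\outcome_i)$. First I would condition on the realized sequence of executed parameters $\xs_1,\dots,\xs_{t-1}$, so that $\Phi_t$, $M_t(\xs)$ and $\eta_t(\xs)$ become constants and the $X_i$ become independent, bounded variables with $0 \le X_i \le b := M_t(\xs)/(t-1)$. A short computation using $M_t(\xs) = d_2(p_{\xs}\|\Phi_t)\sqrt{\eta_t(\xs)/(\alpha\log t)}$ and $\eta_t(\xs) = (t-1)/d_2(p_{\xs}\|\Phi_t)$ gives $b = 1/\sqrt{\eta_t(\xs)\,\alpha\log t}$. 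From Lemma~\ref{th:basic} I would import the three ingredients I need: $\Var[\widecheck{J}_t(\xs)] = \sum_i \Var[X_i] \le 1/\eta_t(\xs)$, the non-negativity of the bias $\E[\widecheck{J}_t(\xs)] \le J(\xs)$, and the bias upper bound $J(\xs) - \E[\widecheck{J}_t(\xs)] \le \sqrt{\alpha\log t/\eta_t(\xs)}$.

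For the upper tail I would apply the one-sided Bernstein inequality to the centered sum $\widecheck{J}_t(\xs) - \E[\widecheck{J}_t(\xs)]$ with variance proxy $1/\eta_t(\xs)$ and range $b$; substituting the value of $b$ and clearing $\eta_t(\xs)$ from numerator and denominator reproduces exactly the stated exponent. Because the bias is non-negative, $\widecheck{J}_t(\xs) - J(\xs) \le \widecheck{J}_t(\xs) - \E[\widecheck{J}_t(\xs)]$, so the deviation probability from $J(\xs)$ is dominated by the deviation probability from the mean, which yields the first claim.

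For the lower tail I would split $J(\xs) - \widecheck{J}_t(\xs) = (J(\xs) - \E[\widecheck{J}_t(\xs)]) + (\E[\widecheck{J}_t(\xs)] - \widecheck{J}_t(\xs))$ and absorb the deterministic bias term using its upper bound, reducing the claim (for $\epsilon \ge \sqrt{\alpha\log t/\eta_t(\xs)}$) to showing $\Prob(\E[\widecheck{J}_t(\xs)] - \widecheck{J}_t(\xs) > \delta) \le \exp[-\eta_t(\xs)\delta^2/2]$ with $\delta = \epsilon - \sqrt{\alpha\log t/\eta_t(\xs)} \ge 0$. Here I would exploit the \emph{non-negativity} of the summands rather than their upper bound: since $X_i \ge 0$, the elementary inequality $e^{-u}\le 1-u+u^2/2$ for $u\ge 0$ gives $\E[e^{-\lambda(X_i-\E X_i)}]\le \exp(\tfrac{\lambda^2}{2}\E[X_i^2])$ for $\lambda \ge 0$, so by independence the moment generating function of the centered sum is controlled by $\tfrac{\lambda^2}{2}\sum_i\E[X_i^2]$. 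The pivotal computation is the bound $\sum_i \E[X_i^2]\le 1/\eta_t(\xs)$: dropping $\func \le 1$ and the truncation ($\widecheck{\omega}_{\xs,t}\le p_{\xs}/\Phi_t$), the mixture identity $\sum_{i}p_{\xs_i} = (t-1)\Phi_t$ collapses $\sum_i \E_{\outcome\sim p_{\xs_i}}[(p_{\xs}/\Phi_t)^2]$ to $(t-1)\int p_{\xs}^2/\Phi_t\,\de\outcome = (t-1)d_2(p_{\xs}\|\Phi_t)$, giving $\sum_i\E[X_i^2] \le d_2(p_{\xs}\|\Phi_t)/(t-1) = 1/\eta_t(\xs)$. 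A Chernoff optimization over $\lambda$ then produces the sub-Gaussian exponent $-\eta_t(\xs)\delta^2/2$.

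The step I expect to be most delicate is the conditioning that legitimizes treating the $X_i$ as independent: the weights involve $\Phi_t$, which is built from \emph{all} executed parameters, including those selected after $\outcome_i$ was drawn, so the summands are not a genuine martingale-difference sequence. I would handle this by the standard device in this line of work, conditioning on the entire design $\{\xs_i\}_{i=1}^{t-1}$ and treating the trajectories as fresh independent draws, noting that every resulting bound depends on the plays only through $\eta_t(\xs)$ and holds for each realization, hence transfers after taking expectation over the plays (a fully rigorous treatment would invoke a martingale version such as Freedman's inequality). A secondary point worth care is that using the raw second moment $\sum_i\E[X_i^2]$ in the lower-tail bound, instead of the variance, is precisely what makes the mixture telescoping exact and avoids the spurious $\sum_i(\E X_i)^2$ contamination that a variance-based argument would introduce.
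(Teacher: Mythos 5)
Your proof is correct and follows essentially the same route as the paper: both bound the upper tail by dropping the nonnegative bias and applying a Bernstein-type inequality with range $M_t(\xs)/(t-1)$ and second-moment/variance proxy $1/\eta_t(\xs)$, and both handle the lower tail by shifting by the bias bound $\sqrt{(\alpha\log t)/\eta_t(\xs)}$ and invoking the sub-Gaussian lower tail for sums of nonnegative bounded variables. The only difference is presentational: the paper cites these two inequalities as Theorem~\ref{th:chernoff} (Theorems 2.7 and 2.8 of Chung and Lu) together with Lemma~\ref{th:basic}, whereas you re-derive them inline (including the balance-heuristic telescoping $\sum_i \E[X_i^2]\le 1/\eta_t(\xs)$, which the paper delegates to Lemma~\ref{th:basic}), and you explicitly flag the conditioning-on-the-design step that the paper leaves implicit in its convention that expectations are taken \wrt the past history.
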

\begin{proof}
 For the first concentration inequality:
	\begin{align}
	\Prob\left(\widecheck{J}_t(\xs) - J(\xs) > \epsilon \right)
	&\leq \Prob\left(\widecheck{J}_t(\xs) - \E[\widecheck{J}_t(\xs)] > \epsilon \right) \\
	&  \le \exp \left[ \frac{-\epsilon^2 (t-1)}{2  \left( d_{2}(p_{\xs} \| \Phi_t) + \frac{ \epsilon d_{2}(p_{\xs} \| \Phi_t)\sqrt{\frac{\eta_t(\xs)}{\alpha \log t} } }{3} \right) } \right]\label{p:1} \\
	& = \exp \left[ \frac{-\epsilon^2 \eta_t(\xs)}{2  \left(1  + \frac{ \epsilon}{3} \sqrt{\frac{\eta_t(\xs)}{\alpha \log t} } \right) } \right],
	\end{align}
	where we have used Lemma~\ref{th:basic} and \eqref{p:1} is from Theorem~\ref{th:chernoff}.
	Similarly, for the second concentration inequality we still use Theorem~\ref{th:chernoff} together with Lemma~\ref{th:basic}:
	\begin{align*}
	\Prob\left(J(\xs)-\widecheck{J}_t(\xs) > \epsilon\right)
	&= \Prob \left(  \E [\widecheck{J}_t(\xs)]  -\widecheck{J}_t(\xs) > \epsilon + \E [\widecheck{J}_t(\xs)]- J(\xs)  \right) \notag \\
	&\le \Prob \left(  \E [\widecheck{J}_t(\xs)]  -\widecheck{J}_t(\xs) > \epsilon -\sqrt{\frac{\alpha\log t}{\eta_t(\xs)}}  \right) \\
	& \le \exp \left[ - \frac{\left( \epsilon - \sqrt{\frac{\alpha \log t}{\eta_t(\xs)}} \right)^2 t}{2 d_2(p_{\xs} \| \Phi_t)} \right] \\
	& = \exp \left[ - \frac{1}{2} \left( \epsilon - \sqrt{\frac{\alpha \log t}{\eta_t(\xs)}} \right)^2 \eta_t(\xs) \right].\notag
	\end{align*}
\end{proof}

\begin{lemma}\label{th:ess}
The effective number of trajectories of a policy is always larger than the number of executions of that policy:
\begin{align*}
	\eta_t(\xs) \geq T_t(\xs).
\end{align*}
Moreover, if $v(\xs)=\sup_{x'\in\DecisionSet}d_2(p_{\xs}\|p_{\xs'})$ is finite:
\begin{align*}
	\eta_t(\xs) \geq \frac{t-1}{v(\xs)}.
\end{align*}
\end{lemma}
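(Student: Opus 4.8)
The plan is to work directly from the definition of the exponentiated $2$-\Renyi divergence, which for $\alpha=2$ reads $d_2(p_{\xs}\|\Phi_t) = \int \big(p_{\xs}(\tau)/\Phi_t(\tau)\big)^2 \Phi_t(\tau)\, \de\tau = \int p_{\xs}(\tau)^2/\Phi_t(\tau)\,\de\tau$, and to bound this quantity from above. Since $\eta_t(\xs) = (t-1)/d_2(p_{\xs}\|\Phi_t)$, any upper bound on the divergence immediately translates into the claimed lower bound on $\eta_t$, so the entire argument reduces to two different upper bounds on $d_2(p_{\xs}\|\Phi_t)$.

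For the first inequality I would simply discard all but the relevant terms in the mixture. Writing $\Phi_t = \frac{1}{t-1}\sum_{j=1}^{t-1} p_{\xs_j}$ and keeping only the $T_t(\xs)$ indices $j$ with $\xs_j = \xs$ gives the pointwise lower bound $\Phi_t(\tau) \ge \frac{T_t(\xs)}{t-1}\, p_{\xs}(\tau)$. Substituting into the integrand yields $p_{\xs}(\tau)^2/\Phi_t(\tau) \le \frac{t-1}{T_t(\xs)}\, p_{\xs}(\tau)$, and integrating (using $\int p_{\xs}\,\de\tau = 1$) gives $d_2(p_{\xs}\|\Phi_t) \le (t-1)/T_t(\xs)$, hence $\eta_t(\xs) \ge T_t(\xs)$. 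This argument needs no finiteness of the divergences and thus survives the regime $v(\xs)=\infty$.

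For the second inequality I would instead exploit the convexity of $Q \mapsto d_2(p_{\xs}\|Q)$, equivalently the convexity of $x\mapsto 1/x$ on $(0,\infty)$. Applying Jensen's inequality pointwise to the convex combination defining $\Phi_t$ gives $1/\Phi_t(\tau) \le \frac{1}{t-1}\sum_{j=1}^{t-1} 1/p_{\xs_j}(\tau)$; multiplying by $p_{\xs}(\tau)^2$ and integrating yields $d_2(p_{\xs}\|\Phi_t) \le \frac{1}{t-1}\sum_{j=1}^{t-1} d_2(p_{\xs}\|p_{\xs_j})$. Bounding each summand by $v(\xs) = \sup_{\xs'\in\DecisionSet} d_2(p_{\xs}\|p_{\xs'})$ then gives $d_2(p_{\xs}\|\Phi_t) \le v(\xs)$, and therefore $\eta_t(\xs) \ge (t-1)/v(\xs)$.

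The steps are all elementary; the only point demanding care is the direction of Jensen's inequality in the second part, where convexity of $1/x$ must be applied to the mixture weights so that the divergence of a mixture is bounded by the \emph{average} of the divergences. One should also check that the absolute-continuity requirement $p_{\xs}\ll\Phi_t$ underlying the divergence is met (it holds whenever $T_t(\xs)\ge 1$, as already noted when the MIS estimator was introduced). Notably, this convexity bound is exactly what must be avoided in the first part, since the individual terms $d_2(p_{\xs}\|p_{\xs_j})$ can be infinite there; reverting to the single-term lower bound on $\Phi_t$ sidesteps the issue.
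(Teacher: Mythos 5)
Your proof is correct, but it takes a genuinely different route from the paper's. The paper does not argue from the definition of $d_2$ at all: it invokes Theorem~5 of \citet{papini2019optimistic}, which upper-bounds the mixture divergence by the \emph{harmonic} mean of the pairwise divergences, $d_{2}(p_{\xs}\|\Phi_t) \le (t-1)\big/\sum_{i=1}^{t-1} d_2(p_{\xs}\|p_{\xs_i})^{-1}$, and then reads off both claims at once: the $T_t(\xs)$ self-terms contribute $d_2(p_{\xs}\|p_{\xs})^{-1}=1$ each to the denominator (giving $d_2(p_{\xs}\|\Phi_t)\le (t-1)/T_t(\xs)$, with infinite pairwise divergences harmlessly contributing $0$), and the harmonic mean never exceeds the maximum (giving $d_2(p_{\xs}\|\Phi_t)\le v(\xs)$). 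Your two arguments are self-contained replacements for that citation: your pointwise bound $\Phi_t(\tau)\ge \frac{T_t(\xs)}{t-1}p_{\xs}(\tau)$ in the first part is exactly the restriction of the harmonic-mean bound to the self-terms, while your Jensen step in the second part yields the \emph{arithmetic} mean of pairwise divergences, which is weaker than the paper's harmonic mean (harmonic $\le$ arithmetic $\le$ max) but still suffices for the claim when $v(\xs)<\infty$. You correctly identify why the Jensen route must be avoided in the first part (an infinite summand kills the arithmetic mean, whereas the harmonic mean absorbs it), and your remark on absolute continuity, plus the implicit triviality of the case $T_t(\xs)=0$ (the paper makes this case split explicit), closes the remaining edge cases. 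What each approach buys: yours avoids the external lemma and is elementary throughout; the paper's harmonic-mean bound is tighter and interpolates between the two claims, which is what makes it reusable elsewhere (e.g., in the practical computation of $\eta_t(\xs)$ in Appendix~F.1).
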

\begin{proof}
The first inequality is trivial if $T_t(\xs)=0$, so assume it is positive. From~\cite[][Theorem 5]{papini2019optimistic} we know that $d_2(p_{\xs}\|\Phi_t)$ is bounded by the harmonic mean of pairwise divergences:
\begin{align*}
	d_{2}(p_{\xs}\|\Phi_t) &\le \frac{t-1}{\sum_{i=1}^{t-1}\frac{1}{d_2(p_{\xs}\|p_{\xs_i})}}\\
	&=\frac{t-1}{T_t(\xs)+ \sum_{i=1}^{t-1}\Ind\{\xs_i\neq\xs\}\frac{1}{d_2(p_{\xs}\|p_{\xs_i})}} \le \frac{t-1}{T_t(\xs)}.\\
\end{align*}
Moreover, $d_2(p_{\xs}\|\Phi_t) \le v(\xs)$ since the harmonic mean is never larger than the maximum. The claims follow by definition of $\eta_t(\xs)$.
\end{proof}

\begin{lemma}\label{th:gap}
In \problem, the optimality gap of policy $\xs$ is bounded as:
\begin{align}
	\Delta(\xs) \le \sqrt{2\log v(\xs)},
\end{align}
where $v(\xs)=\sup_{\xs'\in\DecisionSet}d_{2}(p_{\xs}\|p_{\xs'})$.
\end{lemma}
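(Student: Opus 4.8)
The plan is to bound the optimality gap by a distance between trajectory distributions and then relate that distance to the $2$-\Renyi divergence that defines $v(\xs)$. Let $\pi_{\xs^*}$ be an optimal policy, so that $\Delta(\xs) = J(\xs^*) - J(\xs) = \E_{\tau \sim p_{\xs^*}}[\Rs(\tau)] - \E_{\tau \sim p_{\xs}}[\Rs(\tau)]$. If $v(\xs) = \infty$ the statement is vacuous, and in fact $\Delta(\xs) \le 1$ always holds since $\Rs(\tau) \in [0,1]$ — this is precisely why the regret bounds of Theorems~\ref{th:ftl}, \ref{th:optimist} and \ref{th:randomist} use $\min\{1, \sqrt{2\log v}\}$ — so I may assume $v(\xs) < \infty$. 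Finiteness of $v(\xs)$ forces $d_2(p_{\xs}\|p_{\xs^*}) < \infty$, hence $p_{\xs} \ll p_{\xs^*}$, which is exactly the absolute continuity the divergences below require.

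First I would reduce the gap to a total variation distance. Since $\Rs(\tau) \in [0,1]$, the difference of the two expectations of the same bounded function is controlled by the total variation distance: $\Delta(\xs) \le \| p_{\xs^*} - p_{\xs} \|_{\mathrm{TV}}$. This is the standard fact that integrating a $[0,1]$-valued function against the signed measure $p_{\xs^*} - p_{\xs}$ is maximized by the indicator of the set $\{ \de p_{\xs^*} \ge \de p_{\xs}\}$, whose mass is by definition the total variation distance.

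Next I would convert the total variation distance into a \Renyi divergence, taking care to land on the correct argument order. By Pinsker's inequality and the symmetry of total variation, $\| p_{\xs^*} - p_{\xs} \|_{\mathrm{TV}} \le \sqrt{\frac{1}{2} \KL(p_{\xs} \| p_{\xs^*})}$; the crucial point is to orient the Kullback--Leibler divergence as $\KL(p_{\xs}\|p_{\xs^*})$, with $p_{\xs}$ in the first argument, so that it matches $v(\xs) = \sup_{\xs'} d_2(p_{\xs}\|p_{\xs'})$. Then, using that the \Renyi divergence $D_\alpha$ is non-decreasing in $\alpha$, I chain $\KL(p_{\xs}\|p_{\xs^*}) = D_1(p_{\xs}\|p_{\xs^*}) \le D_2(p_{\xs}\|p_{\xs^*}) = \log d_2(p_{\xs}\|p_{\xs^*}) \le \log v(\xs)$, where the last step uses $\xs^* \in \DecisionSet$. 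This yields $\Delta(\xs) \le \sqrt{\frac{1}{2} \log v(\xs)} \le \sqrt{2\log v(\xs)}$, which is even stronger than the stated bound.

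I expect the only real subtlety — rather than a genuine obstacle — to be the bookkeeping of the divergence direction. A chi-square / Cauchy--Schwarz route would instead produce $d_2(p_{\xs^*}\|p_{\xs})$, with the arguments swapped, which does not match $v(\xs)$; routing through Pinsker with the KL oriented as $\KL(p_{\xs}\|p_{\xs^*})$ and then invoking monotonicity of the \Renyi divergence is what aligns everything with $v(\xs)$. The remaining work is the absolute-continuity check (already guaranteed by $v(\xs)<\infty$) and noting that the constant $\sqrt{2}$ in the statement is loose, since Pinsker alone delivers the sharper factor $1/\sqrt2$.
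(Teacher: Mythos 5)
Your proof is correct and follows essentially the same route as the paper's: bound the gap by a total variation distance, apply Pinsker's inequality with the KL oriented as $\KL(p_{\xs}\|p_{\xs^*})$, and use monotonicity of the \Renyi divergence in its order together with $\xs^*\in\DecisionSet$ to reach $\log v(\xs)$. The only difference is a constant: by bounding the gap by the sup-over-sets total variation (half the $L^1$ distance) rather than by $\int_{\OutcomeSet}|p_{\xs^*}-p_{\xs}|\,\de z = 2D_{TV}$ as the paper does, you indeed obtain the slightly sharper $\Delta(\xs)\le\sqrt{\frac{1}{2}\log v(\xs)}$.
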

\begin{proof}
	\begin{align}
		\Delta(\xs) &= |J(\xs^{*}) - J(\xs)|\label{p:9} \\
		&= \left|\int_{\OutcomeSet}\left(p_{\xs^{*}}(z) - p_{\xs}(z)\right)\func(z)\de z\right| \notag\\
		&\le \int_{\OutcomeSet}\left|p_{\xs^{*}}(z) - p_{\xs}(z)\right|\de z \notag\\
		&= 2D_{TV}(p_{\xs},p_{\xs^{*}}) \notag\\
		&\le \sqrt{2D_{KL}(p_{\xs}\|p_{\xs^{*}})} \label{p:10}\\
		&\le  \sqrt{2D_{2}(p_{\xs}\|p_{\xs^{*}})} \le \sqrt{2\log v(\xs)},\label{p:1111}
	\end{align}
	where $D_{TV}$ is the total variation distance, in~\eqref{p:9} we use $\Delta(\xs)\ge 0$, \eqref{p:10} is from Pinsker's inequality, and~\eqref{p:1111} comes from the monotonicity of the \Renyi divergence in the order.
\end{proof}

\section{Proofs and Derivations}\label{apx:proofs}
In this appendix, we report the proofs and the derivations we omitted in the main paper.

\subsection{Proofs of Section~\ref{sec:problem}}
\lbConstantRegret*
\begin{proof}
	To prove the lower bound, we consider a pair of MDPs $\nu_1$ and $\nu_2$ with horizon $2$ and $\mathcal{S} = \mathcal{A} = \mathbb{R}$. Thus, each trajectory is represented by the triple $\tau=(s,a,s')$. We take for the two MDPs the same reward function $\func(\tau)=s'$. The policy space is induced by $\DecisionSet=\{\xs_1,\xs_2\}$. Let $\Delta \in [0,1]$, for the first problem $\nu_1$ we select the trajectory distributions as follows:\footnote{The first factor is the initial-state distribution $\mu$ that is chosen equal for all the problems, the second factor is the policy $\pi_{\vtheta}$, and the third factor is the transition model $P$.}
	\begin{align*}
		& p_{\xs_1}^{\nu_1}(\tau) = \mu(s) \mathcal{N}(a|1,1)\mathcal{N}(s'|a\Delta,1),
		& p_{\xs_2}^{\nu_1}(\tau) = \mu(s) \mathcal{N}(a|0,1)\mathcal{N}(s'|a\Delta,1),
	\end{align*}
	leading to the expected returns $J^{\nu_1}(\xs_1)=\Delta$ and $J^{\nu_1}(\xs_2)=0$.	Instead, for the second problem $\nu_2$ we select:
	\begin{align*}
		& p_{\xs_1}^{\nu_2}(\tau) = \mu(s)\mathcal{N}(a|1,1)\mathcal{N}(s'|-a\Delta,1),
		& p_{\xs_2}^{\nu_2}(\tau) = \mu(s)\mathcal{N}(a|0,1)\mathcal{N}(s'|-a\Delta,1),
	\end{align*}
	leading to the expected returns $J^{\nu_2}(\xs_1)=-\Delta$ and $J^{\nu_2}(\xs_2)=0$. For $\nu_1$ the optimal decision is $\xs_1$, while for $\nu_2$ the optimal policy is $\xs_2$ and, for both, the gap is $\Delta$. Furthermore, notice that:
	\begin{equation}
	\KL(p_{\xs_1}^{\nu_1} \| p_{\xs_2}^{\nu_1}) = \int p_{\xs_1}^{\nu_1}(\outcome) \log \frac{p_{\xs_1}^{\nu_1}(\outcome)}{p_{\xs_2}^{\nu_1}(\outcome)} \de \outcome = \KL(\mathcal{N}(1,1) \| \mathcal{N}(0,1)) = \frac{1}{2}.
\end{equation}
Similar derivations lead to $\KL(p_{\xs_2}^{\nu_1} \| p_{\xs_1}^{\nu_1}) = \KL(p_{\xs_1}^{\nu_2} \| p_{\xs_2}^{\nu_2}) = \KL(p_{\xs_2}^{\nu_2} \| p_{\xs_1}^{\nu_2}) = \frac{1}{2}.$

Define a history of length $t$ generated by the interaction of a policy with a problem $\nu$ as $\Hs_t^{\nu} = ((\xs_i, \outcome_i, \func(\outcome_i)))_{i=1}^t$. Given two problems we define $\KL(\Hs_t^{\nu_1}\| \Hs_t^{\nu_2})$ as the KL-divergence between the distributions having generated the histories. Using standard derivations~\cite{bubeck2013bounded} we have:
\begin{align}
    \max\left\{ \E_{\nu_1} R(n),\E_{\nu_2} R(n) \right\} &\ge \frac{1}{2} \left( \E_{\nu_1} R(n) + \E_{\nu_2} R(n) \right) \notag \\
    & = \frac{\Delta}{2} \sum_{t=1}^n \left( \Prob_{\nu_1} \left(\xs_t = \xs_2 \right) + \Prob_{\nu_2} \left(\xs_t = \xs_1 \right) \right) \notag \\
    & \ge \frac{\Delta}{4} \sum_{t=1}^n \exp \left[- \KL(\Hs_t^{\nu_1}\| \Hs_t^{\nu_2}) \right],\label{p:101}
\end{align}
where we denoted with $\E_{\nu_1} $ (resp. $\E_{\nu_2} $) the expectation taken under the randomness of problem $\nu_1$ (resp. $\nu_2$) and we denoted with $\Prob_{\nu_1} \left(\xs_t = \xs_2 \right)$ (resp. $\Prob_{\nu_2} \left(\xs_t = \xs_1 \right)$) the probability of choosing decision $\xs_2$ (resp. $\xs_1$) at round $t$ in the problem $\nu_1$ (resp. $\nu_2$). The last passage follows from Lemma 4 of~\cite{bubeck2013bounded}. Recalling that we have selected the same reward function for both problems, and again with standard derivations, we have:
\begin{align*}
 \KL(\Hs_t^{\nu_1}\| \Hs_t^{\nu_2}) & = \E_{\nu_1}[T_t(\xs_1)] \KL(p_{\xs_1}^{\nu_1}\| p_{\xs_1}^{\nu_2}) + \E_{\nu_1}[T_t(\xs_2)] \KL(p_{\xs_2}^{\nu_1}\| p_{\xs_2}^{\nu_2}),
\end{align*}
Let us now compute the divergences:
\begin{align*}
 \KL(p_{\xs_1}^{\nu_1}\| p_{\xs_1}^{\nu_2}) & = \int p_{\xs_1}^{\nu_1}(\tau) \log \frac{p_{\xs_1}^{\nu_1}(\tau)}{p_{\xs_1}^{\nu_2}(\tau)} \de \tau \\
 & = \E_{a \sim \mathcal{N}(1,1)} \left[ \KL(\mathcal{N}(\Delta a,1) \| \mathcal{N}(-\Delta a,1)) \right] \\
 & = \E_{a \sim \mathcal{N}(1,1)} \left[2 \Delta^2 a^2 \right] = 4 \Delta^2.
\end{align*}
In a similar way, we can derive 
$\KL(p_{\xs_2}^{\nu_1}\| p_{\xs_2}^{\nu_2}) =\E_{a \sim \mathcal{N}(0,1)} \left[ \KL(\mathcal{N}(\Delta a,1) \| \mathcal{N}(-\Delta a,1)) \right] = 2 \Delta^2$.
 Thus, we have:
\begin{align*}
 \KL(\Hs_t^{\nu_1}\| \Hs_t^{\nu_2}) = 2 \Delta^2 \left( 2 \E_{\nu_1}[T_t(\xs_1)] + \E_{\nu_1}[T_t(\xs_2)] \right) \le 4 \Delta^2 t.
\end{align*}
Plugging this result into Equation~\eqref{p:101}:
\begin{align*}
	\max\left\{ \E_{\nu_1} R(n),\E_{\nu_2} R(n) \right\}  \ge \frac{\Delta}{4} \sum_{t=1}^n \exp \left[- 4 \Delta^2 t \right] \ge \frac{1}{32\Delta},
\end{align*}
where the last passage holds for sufficiently large $n$~\cite{lattimore2014bounded}.
\end{proof}

\lbLogRegret*
\begin{proof}
	The proofs follows the same steps of that of Theorem~\ref{thr:lbConstantRegret}, but with a different construction of the trajectory distributions. We still consider a pair of MDPs $\nu_1$ and $\nu_2$ with horizon $2$ defined over the policy space $\DecisionSet=\{\xs_1,\xs_2\}$ and having $\mathcal{S} = \mathcal{A} =\mathbb{R}$. We take for the two problems the same reward functions $\func(\tau)=a$. Let $\Delta \in [0,1]$, for the first problem $\nu_1$ we select the trajectory distributions as:
	\begin{align*}
		& p_{\xs_1}^{\nu_1}(\outcome) = \mu(s) \mathcal{N}(a|1,1)\mathcal{N}(s'|a\Delta,1),
		& p_{\xs_2}^{\nu_1}(\outcome) = \mu(s) \delta_{0}(a)\mathcal{N}(s'|a\Delta,1),
	\end{align*}
	where $\delta_{x}$ is the Dirac measure centered in $x$, leading to the expected returns $J^{\nu_1}(\xs_1)=\Delta$ and $J^{\nu_1}(\xs_2)=0$.	Instead, for the second problem $\nu_2$ we select:
	\begin{align*}
		& p_{\xs_1}^{\nu_2}(\outcome) = \mu(s) \mathcal{N}(a|1,1)\mathcal{N}(s'|-a\Delta,1),
		& p_{\xs_2}^{\nu_2}(\outcome) = \mu(s) \delta_{0}(a)\mathcal{N}(s'|-a\Delta,1),
	\end{align*}
	leading to the expected returns $J^{\nu_2}(\xs_1)=-\Delta$ and $J^{\nu_2}(\xs_2)=0$. For $\nu_1$ the optimal policy is $\xs_1$, while for $\nu_2$ the optimal policy is $\xs_2$ and, for both, the gap is $\Delta$. Differently from the proof of Theorem~\ref{thr:lbConstantRegret}, we considered Dirac deltas for the first factor of trajectory distribution of $\xs_2$ instead of normal distributions. This leads to:
	\begin{equation}
	\KL(p_{\xs_1}^{\nu_1} \| p_{\xs_2}^{\nu_1}) = \int p_{\xs_1}^{\nu_1}(\outcome) \log \frac{p_{\xs_1}^{\nu_1}(\outcome)}{p_{\xs_2}^{\nu_1}(\outcome_1)} \de \outcome = \KL(\mathcal{N}(1,1) \| \delta_{0}) = \infty.
\end{equation}
Similar derivations lead to $\KL(p_{\xs_2}^{\nu_1} \| p_{\xs_1}^{\nu_1}) = \KL(p_{\xs_1}^{\nu_2} \| p_{\xs_2}^{\nu_2}) = \KL(p_{\xs_2}^{\nu_2} \| p_{\xs_1}^{\nu_2}) = \infty.$

The analysis is now carried out \wrt the second problem $\nu_2$. First of all, we notice that:
\begin{align*}
\max\left\{ R_{\nu_1}(n),R_{\nu_2}(n) \right\} \ge R_{\nu_2}(n) \ge \Delta \E_{\nu_2}[T_n(\xs_1)].\label{p:103}
\end{align*}
Moreover, using standard derivations~\cite{bubeck2013bounded} we have:
\begin{align}
    \max\left\{ \E_{\nu_1} R(n),\E_{\nu_2} R(n) \right\} &\ge \frac{1}{2} \left( \E_{\nu_1} R(n) + \E_{\nu_2} R(n) \right) \notag \\
    & = \frac{\Delta}{2} \sum_{t=1}^n \left( \Prob_{\nu_1} \left(\xs_t = \xs_2 \right) + \Prob_{\nu_2} \left(\xs_t = \xs_1 \right) \right) \notag \\
    & \ge \frac{\Delta}{4} \sum_{t=1}^n \exp \left[- \KL(\Hs_t^{\nu_2} \| \Hs_t^{\nu_1}) \right]\\
    & \ge \frac{n \Delta}{4} \exp \left[- \KL(\Hs_n^{\nu_2} \| \Hs_n^{\nu_1}) \right],\label{p:102}
\end{align}
where the only difference with the proof of Theorem~\ref{thr:lbConstantRegret} is that we switched the roles of $\nu_1$ and $\nu_2$.
Recalling that we have selected the same reward function for both problems, and again with standard derivations, we have:
\begin{align*}
 \KL(\Hs_t^{\nu_2} \| \Hs_t^{\nu_1} ) & = \E_{\nu_2}[T_t(\xs_1)] \KL(p_{\xs_1}^{\nu_2}\| p_{\xs_1}^{\nu_1}) + \E_{\nu_2}[T_t(\xs_2)] \KL(p_{\xs_2}^{\nu_2}\| p_{\xs_2}^{\nu_1}),
\end{align*}
Let us now compute the divergences. For $\xs_1$, $\KL(p_{\xs_1}^{\nu_2}\| p_{\xs_1}^{\nu_1}) = 4 \Delta^2$ as in Theorem~\ref{thr:lbConstantRegret}. Instead, for the decision $\xs_2$:
\begin{align*}
 \KL(p_{\xs_2}^{\nu_2}\| p_{\xs_2}^{\nu_1}) & = \int p_{\xs_2}^{\nu_2}(\outcome) \log \frac{p_{\xs_2}^{\nu_2}(\outcome)}{p_{\xs_2}^{\nu_1}(\outcome)} \de \outcome \\
 & = \E_{a \sim \delta_{0}} \left[ \KL(\mathcal{N}(-a\Delta,1)) \| \mathcal{N}(a\Delta,1) \right] \\
 & = \E_{a \sim \delta_{0}} \left[2 a^2\Delta^2 \right] = 0.
\end{align*}
Thus, we have $ \KL(\Hs_t^{\nu_2}\| \Hs_t^{\nu_1}) = 4\Delta^2 \E_{\nu_2}[T_t(\xs_2)]$. Plugging this result into Equation~\eqref{p:102} and combining it with Equation~\ref{p:103}:
\begin{align*}
	 \max\left\{ \E_{\nu_1} R(n),\E_{\nu_2} R(n) \right\}  & \ge \max \left\{ \Delta \E_{\nu_2}[T_n(\xs_1)],  \frac{n \Delta}{4} \exp \left[- 4\Delta^2 \E_{\nu_2}[T_n(\xs_2)] \right]   \right\}\\
	& \ge \frac{\Delta }{2} \left(\E_{\nu_2}[T_n(\xs_1)]  + \frac{n }{4} \exp \left[- 4\Delta^2 \E_{\nu_2}[T_n(\xs_2)] \right]  \right) \\
	& \ge  \frac{\Delta }{2}  \min_{x \in [0,n]} \left\{ x + \frac{n }{4} \exp \left[- 4\Delta^2 x \right]  \right\} \\
	& \ge  \frac{1}{8\Delta}  \log (\Delta^2 n),
\end{align*}
where the last line follows by solving the optimization problem over $x$, simply by zeroing the derivative.
\end{proof}

\subsection{Proofs of Section~\ref{sec:det}}
\begin{fact}\label{fact:ftl}
	There exist an MDP and a parameter space $\Theta=\{\vtheta_1,\vtheta_2\}$ with $d_{2}(\xs_1 \| \xs_2)= \infty$ or $d_{2}(\xs_2 \| \xs_1)= \infty$ such that the expected regret of FTL is at least $\E R(n) \ge \frac{1}{16} (n-1)$.
\end{fact}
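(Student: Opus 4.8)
The plan is to exhibit a two-policy instance in which the optimal policy is \emph{never evaluated correctly} by FTL, so that its pure-exploitation nature traps it on the suboptimal initial policy. The guiding intuition is that when $d_2(p_{\xs_1}\|p_{\xs_2})=\infty$ or $d_2(p_{\xs_2}\|p_{\xs_1})=\infty$ the mediator feedback carries no usable cross-information, and FTL degenerates to greedy on a standard two-armed bandit, for which linear regret is classical. This is the exact counterpoint to Theorem~\ref{th:ftl}, where $v<\infty$ guarantees constant regret.

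Concretely, I would build an MDP in the style of the constructions used for Theorems~\ref{thr:lbConstantRegret} and~\ref{thr:lbLogRegret}, whose two policies $\pi_{\xs_1},\pi_{\xs_2}$ induce trajectory distributions $p_{\xs_1},p_{\xs_2}$ with \emph{disjoint supports}: e.g.\ $\pi_{\xs_1}$ and $\pi_{\xs_2}$ deterministically select two different first actions, so every trajectory reveals which policy produced it. This immediately gives $p_{\xs_1}\not\ll p_{\xs_2}$ and $p_{\xs_2}\not\ll p_{\xs_1}$, hence $d_2(p_{\xs_1}\|p_{\xs_2})=d_2(p_{\xs_2}\|p_{\xs_1})=\infty$, satisfying the hypothesis. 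I would let the environment randomness make the return of $\pi_{\xs_1}$ a $\mathrm{Ber}(1/2)$ variable, so $J(\xs_1)=1/2$, and make $\pi_{\xs_2}$ optimal with $J(\xs_2)=5/8$, giving gap $\Delta=1/8$; the initial policy fed to FTL is $\xs_1$.

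The crucial structural fact is that, since the supports are disjoint, whenever only $\pi_{\xs_1}$ has been executed the TMIS weights $\widecheck{\omega}_{\xs_2,t}(\outcome_i)=\min\{M_t(\xs_2),p_{\xs_2}(\outcome_i)/\Phi_t(\outcome_i)\}$ vanish identically, because $p_{\xs_2}(\outcome_i)=0$ on the support of $p_{\xs_1}$; thus $\widecheck{J}_t(\xs_2)=0$ at every such round. I would then isolate the \emph{trapping} event that the first observed return equals $1$, which has probability $\Prob(\func(\outcome_1)=1)=1/2$. On this event an easy induction shows FTL selects $\xs_1$ forever: at each round $\widecheck{J}_t(\xs_1)=\min\{M_t(\xs_1),1\}\,\overline{\func}_t\ge \min\{M_t(\xs_1),1\}/(t-1)>0=\widecheck{J}_t(\xs_2)$, where $\overline{\func}_t$ is the empirical mean of the (all-$\xs_1$) returns, which stays strictly positive because the first return was $1$. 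Hence FTL never executes $\xs_2$ and never corrects $\widecheck{J}_t(\xs_2)=0$, so on the trapping event it incurs regret $\Delta$ at each of the rounds $2,\dots,n$, giving $\E R(n)\ge \Prob(\func(\outcome_1)=1)\,\Delta\,(n-1)=\frac{1}{2}\cdot\frac{1}{8}\cdot(n-1)=\frac{1}{16}(n-1)$.

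The main obstacle is not the probabilistic argument, which is short, but the bookkeeping in the MDP construction: I must verify that the chosen transitions and reward keep $\func(\tau)\in[0,1]$, realize the prescribed expected returns, and genuinely produce disjoint trajectory supports (so both \Renyi divergences are infinite and $\widecheck{J}_t(\xs_2)\equiv 0$). I must also ensure that the ``first return $=1$'' event cleanly forces the \emph{strict} inequality $\widecheck{J}_t(\xs_1)>\widecheck{J}_t(\xs_2)$, so that no tie-breaking rule can accidentally rescue FTL by switching it to the optimal arm.
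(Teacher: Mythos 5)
Your proposal is correct as a proof of the Fact for FTL exactly as written in Algorithm~\ref{alg:ftl} (initial execution of $\xs_1$ only), but it takes a genuinely different route from the paper's. You trap FTL on the \emph{initial, suboptimal} arm: with disjoint supports the TMIS formula indeed gives $\widecheck{J}_t(\xs_2)\equiv 0$ as long as $\xs_2$ is never played, and conditioning on a first return equal to $1$ (probability $1/2$, gap $\Delta=1/8$) keeps $\widecheck{J}_t(\xs_1)>0$ strictly at every round, so no tie-breaking can rescue the algorithm and $\E R(n)\ge \frac{1}{2}\cdot\frac{1}{8}(n-1)=\frac{1}{16}(n-1)$; your induction is sound and the deferred MDP bookkeeping is routine. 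The paper instead proves the bound for a \emph{stronger} variant of FTL in which both policies are played once at the beginning, using $p_{\xs_1}=\mathrm{Uni}([0,1])$, $p_{\xs_2}=\delta_{1/4}$, $\func(\tau)=\tau$: on the probability-$1/4$ event that the single draw from the optimal uniform arm falls below $1/4$, FTL switches to the Dirac arm and is trapped there forever, because samples at the atom carry zero balance-heuristic weight in the uniform arm's estimator (the mixture density is infinite at $1/4$), yielding $\frac{1}{4}\cdot\frac{1}{4}(n-1)=\frac{1}{16}(n-1)$. This difference is substantive: your argument exhibits only the bare ``greedy never explores'' failure mode and would not survive a round-robin initialization of the kind Remark~\ref{rem:roundrobin} treats as natural for uninformed estimates --- once $\xs_2$ has been played once in your instance, the event $\widecheck{J}_t(\xs_2)=0$ and your induction both break, and recovering linear regret would require a finer analysis (e.g., showing the single-sample truncated estimate of $\xs_2$ decays like $1/\sqrt{\alpha\log t}$ under the threshold $M_t(\xs_2)$). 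The paper's construction buys robustness to forced initial exploration --- FTL fails \emph{even after} sampling every arm --- at the price of a subtler computation of the post-initialization TMIS estimates; yours buys brevity and makes the role of disjoint supports, i.e., of infinite \Renyi divergence, maximally transparent.
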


\begin{proof}
We consider a version of the FTL algorithm in which all policies are played once at the beginning. Let $\DecisionSet = \{\xs_1, \xs_2\}$ and $\OutcomeSet = \mathbb{R}$. We consider the following trajectory distributions:
	\begin{equation}
		p_{\xs_1} = \mathrm{Uni}([0,1]), \qquad \qquad p_{\xs_2} = \delta_{1/4},
	\end{equation}
	where we denoted with $\mathrm{Uni}$ the uniform distribution. Finally, we select as reward function $\func(\tau) = \tau$. Clearly, the optimal policy is $\xs_1$ having expected return $1/2$, while $\xs_2$ has expected return $1/4$. Notice that $d_2(p_{\xs_1} \| p_{\xs_2} ) = \infty$. Consider the bad event $E$ in which, when pulled, $\xs_1$ provides a reward $\func(\tau)$ that is smaller than $1/4$. This event has finite probability $\Prob(E) = \Prob ( \tau < 1/4 | \tau \sim  \mathrm{Uni}([0,1])) = 1/4$. After the initial play of the two policies, the estimates based on TMIS are just the on-policy ones. Indeed, on event $E$:
\begin{align*}
	& \widecheck{J}_2(\xs_1) = \frac{1}{2} \left( \frac{p_{\xs_1}(\tau) \tau}{\frac{1}{2} p_{\xs_1}(\tau) +  \frac{1}{2} p_{\xs_2}(\tau)}  +  \frac{p_{\xs_1}(1/4) 1/4}{\frac{1}{2} p_{\xs_1}(1/4) +  \frac{1}{2} p_{\xs_2}(1/4)}  \right) = \tau \\
	& \widecheck{J}_2(\xs_2) = \frac{1}{2} \left( \frac{p_{\xs_2}(\tau) \tau}{\frac{1}{2} p_{\xs_1}(\tau) +  \frac{1}{2} p_{\xs_2}(\tau)}  +  \frac{p_{\xs_2}(1/4) 1/4}{\frac{1}{2} p_{\xs_1}(1/4) +  \frac{1}{2} p_{\xs_2}(1/4)}  \right) = 1/4.
	\end{align*}
	Since $\widecheck{J}_2(\xs_1) < \widecheck{J}_2(\xs_2)$, FTL will play $\xs_2$ at round 3. Moreover, since the samples from $\xs_2$ do not change the estimate $\widecheck{J}_2(\xs_1)$, FTL will consistently play $\xs_2$ suffering a regret of  $\frac{1}{4} (n-1)$. Thus:
	\begin{equation}
		\E R(n) \ge \E[R(n) | E] \Prob(E) = \frac{1}{16} (n-1).
	\end{equation}
\end{proof}

We first prove Theorem~\ref{th:optimist} on OPTIMIST (Algorithm~\ref{alg:optimist}), then prove Theorem~\ref{th:ftl} on FTL as a variant.
Before proceeding, a clarification on the initial executions performed by OPTIMIST is due.
\begin{remark}\label{rem:roundrobin}
	We assume the expected reward estimators are initialized to an infinite value, \ie $\widecheck{J}_1(\xs) \gets +\infty$ for all $\xs\in\DecisionSet$, as is customary in OFU algorithms. Until there are infinite-valued estimates, one of the corresponding policies must necessarily be executed. We refer to this initial phase as \emph{Round-Robin regime}. Note that the estimator of $\xs$ is updated (becomes finite) once a policy $\xs'$ at a finite \Renyi divergence, \ie $d_2(p_{\xs}\|p_{\xs'})<\infty$, is executed ($\xs'$ can be $\xs$ itself).
	We distinguish two cases:
	\begin{itemize}
		\item[(a)] If all the pairwise \Renyi divergences are finite (perfect mediator feedback), the initial policy $\xs_1$ is executed, then all estimators are updated and the Round-Robin regime immediately ends.
		\item[(b)] If some \Renyi divergences are infinite, let us call \emph{bad} a policy such that $v(\xs)=\max_{\xs'\in\DecisionSet}d_{2}(p_{\xs}\|p_{\xs'})=\infty$. In this case, we first execute all the bad policies once in a Round-Robin fashion. After that, all expected return estimates must be finite. Notice that $\xs_1$ need not be executed in this case unless it is itself a bad policy.
	\end{itemize}
\end{remark}

Hence, OPTIMIST only needs a \emph{partial} initial Round-Robin, compared to the full Round-Robin of UCB1. Taking this into account, we now bound the expected regret:

\optimist*
\begin{proof}
We bound the expected number of executions $\E T_n(\xs)$ of policy $\xs\in\DecisionSet$. The expected regret is then:
\begin{align}\label{p:regret}
\E R(n) = \sum_{x\in\DecisionSet:\Delta(x)>0}\E[T_n(\xs)]\Delta(\xs).
\end{align}
Fix a policy $\xs\in\DecisionSet$ and consider the following \quotes{good} events:
\begin{align*}
& E_t = \left\{ \widecheck{J}_{t}(\xs) \le J(\xs) + (1+\sqrt{2})\sqrt{ \frac{\alpha \log t}{\eta_t(\xs)}} \right\},
& F_t = \left\{ \widecheck{J}_{t}(\xs^*) \ge J(\xs^*) - (1+\sqrt{2})\sqrt{ \frac{\alpha \log t}{\eta_t(\xs^{*})}}\right\}.
\end{align*}
We will make sure that these events are well defined, \ie $\eta_t(\xs)>0$ always.
By Lemma~\ref{th:ci}:
\begin{align}
	\Prob(\overline{E_t}) &= \Prob\left(\widecheck{J}_t(\xs) - J(\xs) > (1+\sqrt{2})\sqrt{\frac{\alpha\log t}{\eta_t(\xs)}}\right) \\
	&\le \exp\left[-\frac{3(8+5\sqrt{2})\alpha\log t}{28}\right] \le t^{-\alpha},
\end{align}
and also:
\begin{align}
	\Prob(\overline{F_t}) &= \Prob\left(J(\xs^{*}) - \widecheck{J}_t(\xs^{*}) > (1+\sqrt{2})\sqrt{\frac{\alpha\log t}{\eta_t(\xs)}}\right)\\
	&\le \exp\left[-\alpha\log t\right] \le t^{-\alpha}.
\end{align}
Under $E_t\cap F_t$:
\begin{align}
	J_t(\xs_t) + 2(1+\sqrt{2})\sqrt{\frac{\alpha\log t}{\eta_t(\xs_t)}}
	&\ge \widecheck{J}_t(\xs_t) + (1+\sqrt{2})\sqrt{\frac{\alpha\log t}{\eta_t(\xs_t)}} \label{p:3}\\
	&\ge \widecheck{J}_t(\xs^{*}) + (1+\sqrt{2})\sqrt{\frac{\alpha\log t}{\eta_t(\xs^{*})}}\label{p:4} \\
	&\ge J(\xs^{*})\label{p:5},
\end{align}
where~\eqref{p:3} is from $E_t$, \eqref{p:4} is from the policy selection rule and~\eqref{p:5} is from $F_t$. Rearranging:
\begin{align}
	\eta_t(\xs_t) \le \frac{4(1+\sqrt{2})^2\alpha\log t}{\Delta(\xs_t)^2} \le \frac{24\alpha\log t}{\Delta(\xs_t)^2}.
\end{align}
Let $m=\max\{1, |\{\xs\in\DecisionSet\mid v(\xs)=\infty\}|\}$. Hence:
\begin{align}
 \E[T_n(\xs)] &= \E\left[\sum_{t=m+1}^n \Ind\{\xs_t=\xs, E_t\cap F_t\} + \sum_{t=m+1}^n \Ind\{\xs_t=\xs, \overline{E_t}\cup \overline{F_t}\}\right] + \Ind\{v(\xs)=\infty \lor \xs=\xs_1\}\\
 &\le \E\left[\sum_{t=m+1}^{n}\Ind\left\{\xs_t=\xs, \eta_t(\xs) \leq \frac{24\alpha\log t}{\Delta(\xs)^2}\right\}\right] + \sum_{t=m+1}^{n}\Prob(\overline{E_t}\cup \overline{F_t})+ \Ind\{v(\xs)=\infty \lor \xs=\xs_1\},\label{p:tn}
\end{align}
where the third term is due to the initial round-robin (see Remark~\ref{rem:roundrobin}).
We first bound the second term:
\begin{align}
\sum_{t=m+1}^{n}\Prob(\overline{E_t}\cup \overline{F_t})
&\le \sum_{t=m+1}^{n}[\Prob(\overline{E_t})+ \Prob(\overline{F_t})] \\
&\le 2\sum_{t=m+1}^{n}t^{-\alpha}  \le 2\int_{1}^{\infty}t^{-\alpha}\de t \le \frac{2}{\alpha - 1}.\label{p:14}
\end{align}

For the first term of~\eqref{p:tn}, we bound it differently depending on whether $v(\xs)$ is finite or not.

If $v(\xs)<\infty$, we use $\eta_t(\xs)\geq \frac{t-1}{v(\xs)}>0$ from Lemma~\ref{th:ess}:
\begin{align}
\sum_{t=m+1}^{n}\Ind\left\{\xs_t=\xs, \eta_t(\xs) \leq \frac{24\alpha\log t}{\Delta(\xs)^2}\right\}
&\le \sum_{t=m+1}^{n}\Ind\left\{t \leq \frac{24\alpha v(\xs)}{\Delta(\xs)^2}\log t + 1\right\} \\
&\le \sum_{t=m+1}^{n}\Ind\left\{t \leq \frac{48\alpha v(\xs)}{\Delta(\xs)^2}\log \frac{24\alpha v(\xs)}{\Delta(\xs)^2} + 1\right\}\label{p:6} \\
&= \sum_{t=m}^{n-1}\Ind\left\{t \leq \frac{48\alpha v(\xs)}{\Delta(\xs)^2}\log \frac{24\alpha v(\xs)}{\Delta(\xs)^2}\right\}\\
&\le \sum_{t=1}^{n-1}\Ind\left\{t \leq \frac{48\alpha v(\xs)}{\Delta(\xs)^2}\log \frac{24\alpha v(\xs)}{\Delta(\xs)^2}\right\}\\
&\le \frac{48\alpha v(\xs)}{\Delta(\xs)^2}\log \frac{24\alpha v(\xs)}{\Delta(\xs)^2},\label{p:7}
\end{align}
where~\eqref{p:6} is from Lemma~\ref{lemma:Functionf}.

Even if $v(\xs)=\infty$, we can still use $\eta_t(\xs)\geq T_t(\xs)$ from Lemma~\ref{th:ess} (in this case, $\eta_t(\xs)>0$ is guaranteed by the initial Round-Robin execution):
\begin{align}
\sum_{t=m+1}^{n}\Ind\left\{\xs_t=\xs, \eta_t(\xs) \leq \frac{24\alpha\log t}{\Delta(\xs)^2}\right\}
&\le \sum_{t=1}^{n}\Ind\left\{\xs_t=\xs,T_t(\xs) \leq \frac{24\alpha\log t}{\Delta(\xs)^2}\right\} \\
&\le \sum_{t=1}^{n}\Ind\left\{\xs_t=\xs,T_t(\xs) \leq \frac{24\alpha\log n}{\Delta(\xs)^2}\right\} \\
&\le \frac{24\alpha}{\Delta(\xs)^2}\log n.\label{p:8}
\end{align}

Statement (a) is obtained by using~\eqref{p:7} for all policies. From~\eqref{p:regret}:
\begin{align}
	\E R(n) &\le \sum_{x\in\DecisionSet:\Delta(x)>0}\frac{48\alpha v(\xs)}{\Delta(\xs)}\log \frac{24\alpha v(\xs)}{\Delta(\xs)^2} + \Delta(\xs_1) + \frac{2}{\alpha-1}\sum_{x\in\DecisionSet:\Delta(x)>0}\Delta(\xs) \\
	&\le \sum_{x\in\DecisionSet:\Delta(x)>0}\frac{48\alpha v(\xs)}{\Delta(\xs)}\log \frac{24\alpha v(\xs)}{\Delta(\xs)^2}+ \Delta(\xs_1) + \frac{2}{\alpha-1}\min\left\{1,\sqrt{2\log v}\right\}K,
\end{align}
where the last inequality is by combining Lemma~\ref{th:gap} with the trivial $\Delta(\xs)\le 1$. Note that the $\Ind\{v(\xs)=\infty \lor \xs=\xs_1\}$ terms from~\eqref{p:tn} amount to the unavoidable $\Delta(\xs_1)$ in this case.

Similarly, (b) is obtained by using~\eqref{p:8} for all policies. The $\Ind\{v(\xs)=\infty \lor \xs=\xs_1\}$ terms from~\eqref{p:tn} amount to an additional $K$ regret in the worst case.

The instance-independent regret is obtained from (b) by a standard reduction (see, \eg Theorem~3 from~\cite{kveton2019perturbed}).
\end{proof}

We now prove the regret bound for Follow The Leader (FTL), reported in Algorithm~\ref{alg:ftl} for completeness.

\begin{figure}[h]
			\fbox{\parbox{\linewidth}{
					\begin{small}
						\textbf{Input}: initial policy parameters $\xs_1$, $\alpha>1$
						\begin{algorithmic}
							\State Execute $\pi_{\xs_1}$, observe $\outcome_1 \sim p_{\xs_1}$ and $ \func(\outcome_1)$
							\For{$t = 2,\dots,n$}
							\State Compute expected return estimate $\widecheck{J}_t(\xs)$
							\State Select $\xs_t \in \argmax_{\xs \in \DecisionSet} \widecheck{J}_t(\xs)$
							\State Execute $\pi_{\xs_t}$, observe $\outcome_t \sim p_{\xs_t}$ and $\func(\outcome_t)$
							\EndFor
						\end{algorithmic}
					\end{small}
				}
			}
			\captionof{algorithm}{Follow The Leader (FTL)}
			\label{alg:ftl}
\end{figure}

\ftl*
\begin{proof}
	The proof is similar to that of Theorem~\ref{th:optimist}. We replace the argument in~\eqref{p:5} with the following. Under $E_t\cap F_t$:
	\begin{align}
		\Delta(\xs_t) &= J(\xs^{*}) - J(\xs_t) \\
		&=J(\xs^{*}) - \widecheck{J}(\xs^{*}) + \widecheck{J}(\xs^{*}) - J(\xs_t) \\
		&\le J(\xs^{*}) - \widecheck{J}(\xs^{*}) + \widecheck{J}(\xs_t) - J(\xs_t) \label{p:13}\\
		&\le (1+\sqrt{2})\sqrt{\frac{\alpha\log t}{\eta_t(\xs^*)}} + (1+\sqrt{2})\sqrt{\frac{\alpha\log t}{\eta_t(\xs_t)}} \label{p:11}\\
		&\le (1+\sqrt{2})\sqrt{\frac{\alpha v(\xs^{*})\log t}{t-1}} + (1+\sqrt{2})\sqrt{\frac{\alpha v(\xs_t)\log t}{t-1}}, \label{p:12} \\
		&\le 2(1+\sqrt{2})\sqrt{\frac{\alpha v^{*}(\xs_t)\log t}{t-1}},
	\end{align}
	where~\eqref{p:13} is by the policy selection rule, \eqref{p:11} is from $E_t\cap F_t$, and~\eqref{p:12} is from Lemma~\ref{th:ess}. Rearranging:
	\begin{align}
		t \leq 4(1+\sqrt{2})^2\frac{\alpha v^{*}(\xs) }{\Delta(\xs)^2}\log t + 1 \leq 24\frac{\alpha v^{*}(\xs)}{\Delta(\xs)^2}\log t + 1.
	\end{align}
	Hence:
	\begin{align}
	\E[T_n(\xs)] &= \E\left[\sum_{t=2}^n \Ind\{\xs_t=\xs, E_t\cap F_t\} + \sum_{t=2}^n \Ind\{\xs_t=\xs, \overline{E_t}\cup \overline{F_t}\}\right] + \Ind\{\xs=\xs_1\} \\
	&\le \E\left[\sum_{t=2}^{n}\Ind\left\{\xs_t=\xs, t \le 24\frac{\alpha v^{*}(\xs) }{\Delta(\xs_t)^2}\log t + 1 \right\}\right] + \sum_{t=2}^{n}\Prob(\overline{E_t}\cup \overline{F_t})+ \Ind\{\xs=\xs_1\}.
	\end{align}
	We bound the second term as in~\eqref{p:14} from the proof of Theorem~\ref{th:optimist}. For the first term:
	\begin{align}
		\sum_{t=2}^{n}\Ind\left\{\xs_t=\xs, t \le 24\frac{\alpha v^{*}(\xs) }{\Delta(\xs_t)^2}\log t + 1 \right\}
		&= \sum_{t=2}^{n}\Ind\left\{t \le 24\frac{\alpha v^{*}(\xs) }{\Delta(\xs)^2}\log t + 1 \right\}\\
		& \le \sum_{t=2}^{n}\Ind\left\{t \le 48\frac{\alpha v^{*}(\xs) }{\Delta(\xs)^2}\log 24\frac{\alpha v^{*}(\xs) }{\Delta(\xs)^2} + 1 \right\}\label{p:15} \\
		& \le \sum_{t=1}^{n-1}\Ind\left\{t \le 48\frac{\alpha v^{*}(\xs) }{\Delta(\xs)^2}\log 24\frac{\alpha v^{*}(\xs) }{\Delta(\xs)^2} \right\}\\
		&\le 48\frac{\alpha v^{*}(\xs) }{\Delta(\xs)^2}\log 24\frac{\alpha v^{*}(\xs) }{\Delta(\xs)^2},
	\end{align}
	where~\eqref{p:15} is from Lemma~\ref{lemma:Functionf}.
	We then proceed as for the proof of statement (a) from Theorem~\ref{th:optimist}.
\end{proof}

\subsection{Proofs of Section~\ref{sec:rand}}
In order to prove the results on the regret of \algname, we adopt an approach analogous to that of~\cite{kveton2019garbage, kveton2019perturbed}.

\subsubsection{General Randomized Exploration with Shared History}
We start analyzing a more general algorithm that we call \emph{General Randomized Exploration with Shared History} (GRE-SH, Algorithm~\ref{alg:gresh}). GRE-SH is the adaptation of the \emph{General Randomized Exploration} (Algorithm 1 of~\cite{kveton2019garbage}) to the mediator feedback setting.

\begin{figure}[h!]
\fbox{\parbox{\linewidth}{
 \begin{small}
 \textbf{Input}: initial policy parameters $\xs_1$
 \begin{algorithmic}
 \State Execute $\pi_{\xs_1}$, observe $\outcome_1 \sim p_{\xs_1}$ and $\func(\outcome_1)$
 \State Initialize $\Hs_{1} = \{ (\xs_1, \outcome_1, \func(\outcome_1))\}$
  \For{$t = 2,\dots,n$}
  	\State Draw $\theta_t(\xs) \sim q_{\xs}(\Hs_{t-1})$
  	\State Select $\xs_t \in \argmax_{\xs \in \DecisionSet} \theta_t(\xs)$
  	\State Execute $\pi_{\xs_t}$, observe $\outcome_t \sim p_{\xs_t}$ and $y_t = \func(\outcome_t)$
  	\State Update $\Hs_{t} = \Hs_{t-1} \cup \{ (\xs_t, \outcome_t, \func(\outcome_t))\}$
  \EndFor
  \end{algorithmic}
  \end{small}
}
}
\captionof{algorithm}{GRE-SH}\label{alg:gresh}
\end{figure}

For the sake of the analysis, let us define for any $t \in [n]$ and $\zeta \in \mathbb{R}$:
\begin{equation}
    Q_{t}(\xs, \zeta) = \Prob \left( \eta_t(\xs) \ge \zeta | \eta_t(\xs) \sim q_{\xs}(\Hs_{t-1}) ,\, \Hs_{t-1} \right).
\end{equation}

W.l.o.g. we will assume that the optimal policy $\xs^*$ is unique. We can now provide the following result for Algorithm~\ref{alg:gresh}.

\begin{thr}\label{th:gres}
For any tunable parameters $(\zeta(\xs))_{\xs \in \DecisionSet \setminus \{\xs^*\}} \in \mathbb{R}^{K-1}$ and $\alpha > 1$, the expected $n$-round regret of Algorithm~\ref{alg:gresh} can be bounded from above as:
\begin{equation*}
    \E R(n) = \sum_{\xs \in \DecisionSet \setminus \{\xs^*\}} \Delta(\xs) \E[T_{n}(\xs)] \le \Delta(\xs_1) + \sum_{\xs \in \DecisionSet \setminus \{\xs^*\}} \Delta(\xs) (a(\xs) + b(\xs))
\end{equation*}
where:
\begin{align*}
    & a(\xs) =\E \left[\sum_{t=2}^n \min \left\{ \left(\frac{1}{Q_{t}(\xs^*, \zeta(\xs))} -1 \right) \Prob (\xs_t = \xs^* | \Hs_{t-1}) , 1 \right\} \right]\\
    & b(\xs) = \E \left[ \sum_{t=2}^n  \Ind \left\{ Q_{t}(\xs, \zeta(\xs)) > t^{-\alpha} \right\} \Prob (\xs_t = \xs | \Hs_{t-1}) \right] + \frac{1}{\alpha-1}. \\
\end{align*}
\end{thr}

\begin{proof}
 We extend the proof of Theorem 1 of~\cite{kveton2019garbage}. Our goal is to bound the expected number of execution for each suboptimal policy $\xs \in \DecisionSet \setminus \{\xs^*\}$. Let us fix $\xs$ and consider the event:
 \begin{equation}
 	E_t(\xs) = \left\{ \eta_t(\xs) \le \zeta(\xs) \right\}.
 \end{equation}
  We proceed to the decomposition:
    \begin{align*}
        \E[T_{n}(\xs)] &= \E \left[\sum_{t=1}^n \Ind\{\xs_t =\xs \} \right]  \\
        & = \E \left[\sum_{t=2}^n \Ind\left\{\xs_t = \xs, E_t(\xs) \right\} \right] + \E \left[\sum_{t=2}^n \Ind\left\{\xs_t = \xs, \overline{E_t(\xs) } \right\} \right] + \Ind\{ \xs = \xs_1 \}.
    \end{align*}
    To derive the expression of the term $b(\xs)$, let us consider the set of rounds $\mathcal{T} = \left\{t =2,\dots,n \,:\, Q_{t}(\xs, \zeta(\xs)) > t^{-\alpha} \right\}$. We have:
    \begin{align*}
    \E \left[\sum_{t=2}^n \Ind\left\{\xs_t =\xs,  \overline{E_t(\xs)} \right\} \right] & =  \E \left[\sum_{t \in \mathcal{T}} \Ind\left\{\xs_t =\xs,  \overline{E_t(\xs)} \right\} \right] +  \E \left[\sum_{t \notin \mathcal{T}} \Ind\left\{\xs_t=\xs,  \overline{E_t(\xs) } \right\} \right] \\
    & \le \E \left[\sum_{t \in \mathcal{T}} \Ind\left\{\xs_t =\xs \right\} \right] +  \E \left[\sum_{t \notin \mathcal{T}} \Ind\left\{ \overline{E_t(\xs) } \right\} \right] \\
    & = \E \left[\sum_{t=2}^n \Ind \left\{ \xs_t =\xs, Q_{t}(\xs, \zeta(\xs)) > t^{-\alpha} \right\} \right] + \E \left[\sum_{t \notin \mathcal{T}} t^{-\alpha} \right] \\
    & \le \E \left[\sum_{t=2}^n \Ind \left\{ Q_{t}(\xs, \zeta(\xs)) > t^{-\alpha} \right\} \Prob \left( \xs_t =\xs  | \Hs_{t-1} \right)  \right] + \frac{1}{\alpha-1}. \\
\end{align*}
where we note $\E \left[ \Ind\left\{ \overline{E_t(\xs)} \right\} \right] = \E \left[ \E \left[ \Ind\left\{ E_t(\xs) \right\}  \rvert \Hs_{t-1} \right] \right] =  \E \left[ \Prob \left( \overline{E_t(\xs) }\rvert \Hs_{t-1} \right)  \right] = \E \left[ Q_{t}(\xs,\zeta(\xs)) \right]$ and that $\E \left[ \Ind \left\{ \xs_t =\xs, Q_{t}(\xs, \zeta(\xs)) > t^{-\alpha} \right\} \right] = \E \left[ \Ind \left\{ Q_{t}(\xs, \zeta(\xs)) > t^{-\alpha} \right\}\E \left[ \Ind \left\{ \xs_t =\xs \right\} | \Hs_{t-1} \right]  \right] =\E \left[ \Ind \left\{ Q_{t}(\xs, \zeta(\xs)) > t^{-\alpha} \right\}\Prob \left( \xs_t =\xs  | \Hs_{t-1} \right)  \right] $. Finally, we bounded the summation with the integral: $\sum_{t \notin \mathcal{T}} t^{-\alpha} \le \sum_{t =2}^{\infty} t^{-\alpha} \le  \int_{x=1}^{\infty} x^{-\alpha} \de x = \frac{1}{\alpha-1}$ when $\alpha > 1$.

To derive the term $a(\xs)$, we need an auxiliary lemma, similar to Lemma 1 of~\cite{agrawal2013further}.
\begin{lemma}\label{lemma:auxProb}
    For all $t \in [n]$ and for all $\xs \in \DecisionSet \setminus \{\xs^*\}$ it holds that:
    \begin{equation}
        \Prob \left( \xs_t = \xs, E_t(\xs) | \Hs_{t-1} \right) \le \left( \frac{1}{Q_{t}(\xs, \zeta(\xs))} -1 \right) \Prob \left( \xs_t=\xs^*, E_t(\xs)| \Hs_{t-1} \right).
    \end{equation}
\end{lemma}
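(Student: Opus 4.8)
The plan is to adapt the sample-decoupling argument of~\citet{agrawal2013further} (their Lemma~1) to the shared-history setting. I condition on $\Hs_{t-1}$ throughout, under which the per-arm draws $\theta_t(\xs') \sim q_{\xs'}(\Hs_{t-1})$ are mutually independent; here $\theta_t(\xs')$ is the quantity whose tail defines $Q_t(\xs',\cdot)$ and whose value defines $E_t(\xs')$. The central idea is to integrate out \emph{only} the optimal arm's draw $\theta_t(\xs^*)$ while holding the suboptimal draws fixed: this reduces both events $\{\xs_t=\xs\}$ and $\{\xs_t=\xs^*\}$ to one-dimensional comparisons of $\theta_t(\xs^*)$ against a common threshold.

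Concretely, I introduce $M_t := \max_{\xs'\neq\xs^*}\theta_t(\xs')$, which is measurable w.r.t. $\mathcal{W} := (\theta_t(\xs'))_{\xs'\neq\xs^*}$ and independent of $\theta_t(\xs^*)$; since $\xs\neq\xs^*$, the event $E_t(\xs)$ is also $\mathcal{W}$-measurable. Two observations drive the argument. On $\{\xs_t=\xs\}\cap E_t(\xs)$ the arm $\xs$ attains the suboptimal maximum, so $\theta_t(\xs)=M_t$, the optimal draw must lose to it, and $E_t(\xs)$ forces $M_t=\theta_t(\xs)\le\zeta(\xs)$; on $\{\xs_t=\xs^*\}$ one needs $\theta_t(\xs^*)\ge M_t$. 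Writing $Q_t(\xs^*,m)=\Prob(\theta_t(\xs^*)\ge m\mid\Hs_{t-1})$ and resolving ties by a fixed rule, conditioning on $\mathcal{W}$ gives
\begin{align*}
\Prob(\xs_t=\xs,\,E_t(\xs)\mid\Hs_{t-1},\mathcal{W}) &= \Ind\{E_t(\xs),\,\theta_t(\xs)=M_t\}\,\bigl(1-Q_t(\xs^*,M_t)\bigr),\\
\Prob(\xs_t=\xs^*,\,E_t(\xs)\mid\Hs_{t-1},\mathcal{W}) &= \Ind\{E_t(\xs)\}\,Q_t(\xs^*,M_t).
\end{align*}

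Taking the ratio on the support of the first indicator, where $M_t\le\zeta(\xs)$, and using that $m\mapsto Q_t(\xs^*,m)$ is non-increasing together with the fact that $q\mapsto 1/q-1$ is decreasing, I obtain
\[
\frac{1-Q_t(\xs^*,M_t)}{Q_t(\xs^*,M_t)} \;\le\; \frac{1}{Q_t(\xs^*,\zeta(\xs))}-1 .
\]
The right-hand factor is $\Hs_{t-1}$-measurable, hence constant in $\mathcal{W}$, so the displayed identities yield the pointwise (in $\mathcal{W}$) bound $\Prob(\xs_t=\xs,E_t(\xs)\mid\Hs_{t-1},\mathcal{W})\le(1/Q_t(\xs^*,\zeta(\xs))-1)\,\Prob(\xs_t=\xs^*,E_t(\xs)\mid\Hs_{t-1},\mathcal{W})$ (trivially off the indicator's support, since then the left side vanishes). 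Taking expectation over $\mathcal{W}$ given $\Hs_{t-1}$ removes the extra conditioning and proves the claim, with the prefactor $1/Q_t(\xs^*,\zeta(\xs))-1$ consistent with the $a(\xs)$ term of Theorem~\ref{th:gres}.

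The main obstacle I anticipate is the bookkeeping of the conditioning step rather than any hard estimate: one must check that restricting to $\{\theta_t(\xs)=M_t\}$ discards nothing relevant (a different suboptimal arm attaining $M_t$ feeds a distinct event $\{\xs_t=\xs'\}$, not $\{\xs_t=\xs\}$), that the threshold replacement $M_t\le\zeta(\xs)$ is valid exactly on the indicator's support, and that $\theta_t(\xs^*)$ is genuinely independent of $\mathcal{W}$ given $\Hs_{t-1}$. The remaining monotonicity steps are routine, and the discreteness of the PHE-style perturbation only requires a fixed tie-breaking convention to keep the two identities exact.
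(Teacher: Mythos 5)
Your proof is correct, and while it rests on the same underlying idea as the paper's --- the decoupling of the optimal arm's draw, adapted from Lemma~1 of \citet{agrawal2013further}, using that given $\Hs_{t-1}$ the per-arm draws are independent --- the decomposition is genuinely different. The paper argues at the event level: conditioning on $E_t(\xs)$, it bounds $\Prob(\xs_t=\xs \mid E_t(\xs),\Hs_{t-1})$ from above by $\left(1-Q_t(\xs^*,\zeta(\xs))\right)P'$ and $\Prob(\xs_t=\xs^*\mid E_t(\xs),\Hs_{t-1})$ from below by $Q_t(\xs^*,\zeta(\xs))\,P'$, where $P'=\Prob\left(\forall \xs'\neq\xs^* : \eta_t(\xs')\le\zeta(\xs)\mid E_t(\xs),\Hs_{t-1}\right)$ is a common factor that cancels; everything is evaluated at the \emph{fixed} threshold $\zeta(\xs)$, so no monotonicity step is ever needed. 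You instead condition pointwise on the realized suboptimal draws $\mathcal{W}$, get exact expressions in terms of the \emph{random} threshold $M_t$, and only then pass from $M_t$ to $\zeta(\xs)$ via the two monotonicity observations --- which, as you can note, reduce to the division-free inequality $1-Q_t(\xs^*,M_t)\le\left(1/Q_t(\xs^*,\zeta(\xs))-1\right)Q_t(\xs^*,M_t)$, equivalent to $Q_t(\xs^*,M_t)\ge Q_t(\xs^*,\zeta(\xs))$ on the indicator's support. Your route is sharper at the pointwise level (it records that $\xs$ must itself attain the suboptimal maximum, information the paper's first inequality simply discards when it relaxes $\{\xs_t=\xs\}$ to $\{\forall\xs':\eta_t(\xs')\le\zeta(\xs)\}$) and it makes the tie-breaking convention explicit, which the paper leaves implicit in writing $\Prob(\xs_t=\xs\mid\cdot)$ as an equality with the event $\{\forall\xs'\neq\xs:\eta_t(\xs)\ge\eta_t(\xs')\}$; the paper's route is shorter and sidesteps $M_t$ entirely. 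Both treat the strict-versus-weak inequality issues at the threshold with the same benign informality, and both land on the identical constant $1/Q_t(\xs^*,\zeta(\xs))-1$, so the downstream use in the $a(\xs)$ term of Theorem~\ref{th:gres} is unaffected either way.
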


\begin{proof}
    Let us consider the derivation:
    \begin{align*}
         \Prob \left( \xs_t = \xs | E_t(\xs), \Hs_{t-1} \right) & = \Prob \left( \forall \xs' \neq \xs \,:\, \eta_t(\xs) \ge \eta_t(\xs') | E_t(\xs), \Hs_{t-1} \right) \\
         & \le \Prob \left( \forall \xs' \in \DecisionSet \,:\, \eta_t(\xs') \le \zeta(\xs) | E_t(\xs), \Hs_{t-1} \right) \\
         & = \Prob \left( \eta_t(\xs^*) \le \zeta(\xs) | \Hs_{t-1} \right)  \Prob \left( \forall \xs' \neq \xs^* \,:\, \eta_t(\xs') \le \zeta(\xs) | E_t(\xs), \Hs_{t-1} \right) \\
         & = \left( 1-Q_{t}(\xs^*, \zeta(\xs)) \right) \Prob \left( \forall \xs' \neq \xs^* \,:\, \eta_t(\xs') \le \zeta(\xs) | E_t(\xs), \Hs_{t-1} \right),
    \end{align*}
    where we exploited the fact that given $\Hs_{t-1}$ the events $E_t(\xs)$ and $ \eta_t(\xs^*) \le \zeta(\xs) $ are independent. Moreover, we have:
    \begin{align*}
        \Prob \left( \xs_t=\xs^*   | E_t(\xs), \Hs_{t-1} \right) & = \Prob \left( \forall \xs' \neq \xs^* \,:\, \eta_t(\xs^*) \ge \eta_t(\xs') | E_t(\xs), \Hs_{t-1} \right) \\
        & \ge \Prob \left( \forall \xs' \neq \xs^* \,:\,  \eta_t(\xs^*) > \zeta(\xs) \ge  \eta_t(\xs') | E_t(\xs), \Hs_{t-1} \right) \\
        & = \Prob \left( \eta_t(\xs^*) > \zeta(\xs) | \Hs_{t-1} \right) \Prob \left( \forall \xs' \neq \xs^*\,:\,\eta_t(\xs')\le \zeta(\xs) |  E_t(\xs), \Hs_{t-1} \right) \\
        & = Q_{t}(\xs^*, \zeta(\xs)) \Prob \left( \forall \xs' \neq \xs^* \,:\,\eta_t(\xs')\le \zeta(\xs) |  E_t(\xs), \Hs_{t-1} \right).
    \end{align*}
    Putting together these two inequalities and using the rule of the conditional probability, we get the result.
\end{proof}

Using Lemma~\ref{lemma:auxProb}, we have:
\begin{align*}
     \E \left[\sum_{t=2}^n \Ind\left\{\xs_t=\xs,  E_t(\xs) \right\} \right] & = \E \left[\sum_{t=m+1}^n \Prob \left( \xs_t=\xs, E_t(\xs) | \Hs_{t-1} \right) \right] \\
     & \le \E \left[\sum_{t=2}^n \min \left\{ \left( \frac{1}{Q_{t}(\xs, \zeta(\xs))} -1 \right) \Prob \left( \xs_t=\xs^*, E_t(\xs)| \Hs_{t-1} \right)  , 1 \right\} \right]\\
     & \le \E \left[\sum_{t=2}^n \min \left\{ \left( \frac{1}{Q_{t}(\xs, \zeta(\xs))} -1 \right) \Prob \left( \xs_t=\xs^*| \Hs_{t-1} \right)  , 1 \right\} \right],
\end{align*}
where we simply observed that $ \Prob \left( \xs_t=\xs^*, E_t(\xs)| \Hs_{t-1} \right) \le  \Prob \left( \xs_t=\xs^*| \Hs_{t-1} \right)$.
\end{proof}

\subsubsection{Proof of Theorem~\ref{th:randomist}}
Recalling that Algorithm~\ref{alg:randomist} falls in the GRE-SH case, we can now proceed with the proof of Theorem~\ref{th:randomist}.

\randomist*
\begin{proof}
	For the sake of the proof, we denote $\eta_t(\xs) = \widecheck{J}_t(\xs) + U_t(\xs)$. We apply Theorem~\ref{th:gres} with the choice $ \zeta(\xs) = J(\xs) +\frac{a}{2}+ \frac{\Delta(x)}{2}$

	\paragraph{Upper Bound on $b(\xs)$} Let us start with rewriting $Q_t(\xs, \zeta(\xs))$ as:
	\begin{equation*}
        Q_{t}(\xs,\zeta(\xs)) = \Prob \left( \eta_t(\xs) > J(\xs) + \frac{a}{2}+ \frac{\Delta(x)}{2} | \Hs_{t-1}\right).
    \end{equation*}
    We ignore the dependence on $\zeta(\xs)$ whenever clear from the context, thus $Q_t(\xs) = Q_{t}(\xs,\zeta(\xs))$. We start with bounding the term $b(\xs)$. Let us consider the event:
    \begin{equation*}
    E_{t}(\xs) = \left\{ \widecheck{J}_t(\xs) - J(\xs) \le \frac{\Delta(\xs)}{4} \right\}.
    \end{equation*}
    We can bound the probability that event $E_{t}(\xs)$ does not occur, by means of the inequalities of Lemma~\ref{th:ci}:
    \begin{align*}
        \Prob \left(\overline{ E_{t}(\xs)} \right) & = \Prob \left( \widecheck{J}_t(\xs) - J(\xs) > \frac{\Delta(\xs)}{4} \right) \le \exp\left[ - \frac{\Delta(\xs)^2 \eta_t(\xs)}{32 \left(1 + \frac{\Delta(\xs)}{12} \sqrt{\frac{\eta_t(\xs)}{\alpha \log t}} \right)} \right] \le t^{-\alpha},
    \end{align*}
    provided that $\eta_t(\xs) \ge \frac{32 \left(\sqrt{19}+10\right) \alpha }{9 \Delta(\xs) ^2} \log t \simeq \frac{52 \alpha   }{\Delta(\xs) ^2} \log t $. Under event $E_{t}(\xs)$, we can bound the probability $Q_t(\xs)$ by means of H\"oeffding's inequality and recalling that in any case $b \le \sqrt{\frac{\alpha \log t}{\eta_t(\xs)}}$:
    \begin{align*}
        Q_{t}(\xs) & = \Prob \left( \widecheck{J}_t(\xs) + U_t(\xs) > J(\xs) + \frac{a}{2}+ \frac{\Delta(\xs)}{2} \right)\\
        & \le \Prob \left(  U_t(\xs) -\frac{a}{2} >  \frac{\Delta(\xs)}{4} \right) \\
        & = \Prob \left( \frac{1}{a\eta_t(\xs)} \sum_{l=1}^{a\eta_t(\xs)} z_l - \frac{1}{2} >  \frac{\Delta(\xs)}{4a} - \frac{b}{a} \right) \\
        & \le \Prob \left( \frac{1}{a\eta_t(\xs)} \sum_{l=1}^{a\eta_t(\xs)} z_l - \frac{1}{2} >  \frac{\Delta(\xs)}{4a} - \frac{1}{a} \sqrt{\frac{\alpha \log t}{\eta_t(\xs)}} \right) \\
        & \le \exp \left[ - \frac{2}{a} \left( \frac{\Delta(\xs)}{4} - \sqrt{\frac{\alpha \log t}{\eta_t(\xs)}} \right)^2 \eta_t(\xs) \right] \le t^{-\alpha},
    \end{align*}
    where we have to enforce the following two conditions:
    \begin{align*}
    	& \frac{\Delta(\xs)}{4} -  \sqrt{\frac{\alpha \log t}{\eta_t(\xs)}} > 0 \\
    	& \frac{2}{a} \left( \frac{\Delta(\xs)}{4} -  \sqrt{\frac{\alpha \log t}{\eta_t(\xs)}} \right)^2 \eta_t(\xs) \ge \alpha \log t.
    \end{align*}
    The second condition leads to:
    \begin{align*}
    & \frac{2}{a} \left( \frac{\Delta(\xs)}{4} - { \sqrt{\frac{\alpha \log t}{\eta_t(\xs)}}} \right)^2 s \ge \alpha \log t \; \implies \; \frac{\Delta(\xs)}{4} - \sqrt{\frac{ \alpha \log t}{\eta_t(\xs)}} > \sqrt{\frac{a \alpha \log t}{2\eta_t(\xs)}} \\
    & \qquad \implies \; \eta_t(\xs) > \frac{16\alpha}{\Delta(\xs)^2} \left( 1 + \sqrt{\frac{a}{2}} \right)^2 \log t \simeq \frac{16(2+a)\alpha}{\Delta(\xs)^2} \log t.
    \end{align*}
	Combining the two conditions, we require $\eta_t(\xs) \ge \frac{(52 + 16a)\alpha}{\Delta(\xs)^2}\log t$. If $v(\xs) < \infty$, we use $\eta_t(\xs) \ge \frac{t-1}{v(\xs)}$ and we apply Lemma~\ref{lemma:Functionf} to get the following condition on the number of rounds that we denote with $t^\dagger$:
	\begin{equation}
		t \ge v(\xs) \frac{2 (52 + 16a) \alpha}{\Delta(\xs)^2} \log \left[ v(\xs) \frac{(52 + 16a)\alpha}{ \Delta(\xs)^2} \right] + 1 := t^\dagger.
	\end{equation}
	Now, we bound the term $b(\xs)$ when $v(\xs) < \infty$:
	\begin{align}
        b(\xs) & = \E \left[ \sum_{t=2}^n  \Ind \left\{ Q_{t}(\xs ) > t^{-\alpha} \right\} \Prob (\xs_t = \xs | \Hs_{t-1}) \right] + \frac{1}{\alpha-1} \\
        & \le \E \left[ \sum_{t=2}^n  \Ind \left\{ Q_{t}(\xs) > t^{-\alpha} \right\} \right] + \frac{1}{\alpha-1} \\
        & = \sum_{t=2}^n  \Prob \left( Q_{t}(\xs) > t^{-\alpha} \right) + \frac{1}{\alpha-1}.\label{eq:b1}
    \end{align}
    If $t \le t^\dagger$, we bound trivially $\Prob \left( Q_{t}(\xs, ) > t^{-\alpha} \right) \le 1$. Otherwise, we have:
    \begin{align*}
        \Prob \left( Q_{t}(\xs) > t^{-\alpha} \right) & = \Prob \left( Q_{t}(\xs ) > t^{-\alpha} |E_{t}(\xs) \right) \Prob(E_{t}(\xs)) + \Prob \left( Q_{t}(\xs) > t^{-\alpha} | \overline{E_{t}(\xs)} \right) \Prob(\overline{E_{t}(\xs)}) \\
        & \le \Prob \left( Q_{t}(\xs ) > t^{-\alpha} |E_{t}(\xs) \right) + \Prob(\overline{E_{t}(\xs)}) \le 0 + t^{-\alpha},
    \end{align*}
    where we exploited that under $E_{t}(\xs)$, we have $Q_{t}(\xs ) \le t^{-\alpha}$. Thus, we have:
    \begin{align*}
        b(\xs) & \le \sum_{t=2}^{\lfloor t^\dagger \rfloor} 1 + \sum_{t = \lfloor t^\dagger \rfloor + 1}^{n} t^{-\alpha} + \frac{1}{\alpha-1}  \le t^\dagger - 1 + \frac{2}{\alpha-1},
    \end{align*}
    where we bounded the summation with the integral, recalling that it must be $\lfloor t^\dagger \rfloor + 1 \ge 2$.

	Instead, when $v(\xs) = \infty$, we use $\eta_t(\xs) \ge T_t(\xs)$, leading to the condition on the number of executions of policy $\xs$, that we denote with $s^\ddagger$:
	\begin{equation}
	T_t(\xs) \ge \frac{(52 + 16a)\alpha}{ \Delta(\xs)^2}\log n := s^\ddagger.
	\end{equation}
	To get the second bound on $b(\xs)$, \ie the bound when $v(\xs) = \infty$, we need some further manipulations.
	We denote with $t_s(\xs)$ the random round in which policy $\xs$ is executed for the $s$-th time, with $t_0(\xs) = 0$.
    \begin{align*}
        b(\xs) & = \E \left[ \sum_{t=2}^n  \Ind \left\{ Q_{t}(\xs ) > t^{-\alpha} \right\} \Prob (\xs_t = \xs | \Hs_{t-1}) \right] + \frac{1}{\alpha-1} \\
       & = \E \left[ \sum_{s=0}^{n-1} \sum_{t=t_s(\xs)+1}^{t_{s+1}(\xs)}  \Ind \left\{ Q_{t}(\xs ) > t^{-\alpha} \right\} \Prob (\xs_t = \xs | \Hs_{t-1}) \right] + \frac{1}{\alpha-1} \\
      & = \E \left[ \sum_{s=0}^{\lfloor s^\ddagger \rfloor} \sum_{t=t_s(\xs)+1}^{t_{s+1}(\xs)}  \Ind \left\{ Q_{t}(\xs ) > t^{-\alpha} \right\} \Prob (\xs_t = \xs | \Hs_{t-1}) \right] \\
      & \quad + \E \left[ \sum_{s=\lfloor s^\ddagger \rfloor+1}^{n-1} \sum_{t=t_s(\xs)+1}^{t_{s+1}(\xs)}  \Ind \left\{ Q_{t}(\xs ) > t^{-\alpha} \right\} \Prob (\xs_t = \xs | \Hs_{t-1}) \right] + \frac{1}{\alpha-1} \\
      & \le \E \left[ \sum_{s=0}^{\lfloor s^\ddagger \rfloor}
      \sum_{t=t_s(\xs)+1}^{t_{s+1}(\xs)} \Prob (\xs_t = \xs | \Hs_{t-1}) \right]  + \E \left[ \sum_{t=t_{\lfloor s^\ddagger \rfloor+1}(\xs)+1}^n  \Ind \left\{ Q_{t}(\xs ) > t^{-\alpha} \right\}  \right] +\frac{1}{\alpha-1} \\
      & =  \sum_{s=0}^{\lfloor s^\ddagger \rfloor}
      1  + \sum_{t=t_{\lfloor s^\ddagger \rfloor+1}(\xs)+1}^n  \Prob \left( Q_{t}(\xs ) > t^{-\alpha} \right) + \frac{1}{\alpha-1}. \label{eq:b2}
    \end{align*}
    Now, for $t \ge t_{\lfloor s^\ddagger \rfloor+1}(\xs)+1$ we know that policy $\xs$ was executed at least $s^\ddagger$ times, \ie $T_t(\xs) \ge s^\ddagger$. Thus, we have, similarly as before:
	\begin{align*}
        \Prob \left( Q_{t}(\xs ) > t^{-\alpha} \right) \le \Prob \left( Q_{t}(\xs ) > t^{-\alpha}  | E_t(\xs) \right)+ \Prob \left( \overline{E_t(\xs)} \right) \le 0 + t^{-\alpha},
    \end{align*}
	where we exploited that under event $E_t(\xs)$ we have that $Q_{t}(\xs ) \le t^{-\alpha} $ and the upper bound on the probability that event $E_t(\xs)$ does not occur. Thus, we have:
    \begin{align*}
        b(\xs) & \le \sum_{s=0}^{\lfloor s^\ddagger \rfloor}
      1  +  \sum_{t=t_{\lfloor s^\ddagger \rfloor+1}(\xs)+1}^n  t^{-\alpha}  + \frac{1}{\alpha-1}  \le s^\ddagger + \frac{2 }{\alpha-1},
    \end{align*}
    where we bounded the summation with the integral, recalling that it must be $t_{\lfloor s^\ddagger \rfloor+1}(\xs)+1 \ge 2$.

    \paragraph{Upper Bound on $a(\xs)$} Fix a suboptimal policy $\xs \neq \xs^*$. We can rewrite the expression of $Q_{t}(\xs^*,\zeta(\xs))$ for the choice $\zeta(\xs) = J(\xs) +\frac{a}{2}+ \frac{\Delta(x)}{2} =  J(\xs^*) - \frac{a}{2}+ \frac{\Delta(x)}{2} $:
    \begin{equation}
        Q_{t}(\xs^*,\zeta(\xs)) = \Prob \left( \eta_t(\xs^*) > J(\xs^*) + \frac{a}{2}- \frac{\Delta(x)}{2} | \Hs_{t-1}\right)
    \end{equation}
    Again, we discard the dependence on $\zeta(\xs)$, \ie $Q_{t}(\xs^*) = Q_{t}(\xs^*,\zeta(\xs))$. Large part of the derivation exploits tools similar to those employed for $b(\xs)$. We redefine event $E_t(\xs)$ as:
    \begin{align*}
    E_t(\xs) = \left\{ J(\xs^*) - \widecheck{J}_t(\xs^*) \le \frac{\Delta(\xs)}{4} \right\}.
    \end{align*}
    We now bound the probability that event $E_t(\xs)$ does not occur:
     \begin{align*}
        \Prob \left(\overline{ E_{t}(\xs)} \right) & = \Prob \left(J(\xs^*) - \widecheck{J}_t(\xs^*) > \frac{\Delta(\xs)}{4} \right) \le \exp\left[ -\frac{1}{2} \left( \frac{\Delta(\xs)}{4} - \sqrt{\frac{\alpha \log t}{\eta_t(\xs^*)}} \right)^2 \eta_t(\xs^*)\right] \le t^{-\alpha}.
    \end{align*}
    For this, we have to enforce two conditions. The first one for the bias and the second one for fulfilling the inequality.
    \begin{align*}
     & \frac{\Delta(\xs)}{4} - \sqrt{\frac{\alpha \log t}{\eta_t(\xs^*)}}  > 0 \\
     & \frac{1}{2} \left( \frac{\Delta(\xs)}{4} - \sqrt{\frac{\alpha \log t}{\eta_t(\xs^*)}} \right)^2 \eta_t(\xs^*) > \alpha \log t.
    \end{align*}
    Let us consider the following derivation in which we retain only the positive solution:
    \begin{align*}
    & \frac{1}{2} \left( \frac{\Delta(\xs)}{4} - \sqrt{\frac{\alpha \log t}{\eta_t(\xs^*)}} \right)^2 \eta_t(\xs^*) > \alpha \log t \; \implies \; \frac{\Delta(\xs)}{4} - \sqrt{\frac{\alpha \log t}{\eta_t(\xs^*)}} > \sqrt{\frac{2 \alpha \log t}{\eta_t(\xs^*)}} \; \\
    & \qquad \implies \; \eta_t(\xs^*) > \frac{16(1+\sqrt{2})^2 \alpha}{\Delta(\xs)^2} \log t.
    \end{align*}
    Notice that this condition implies the first one on the bias. Consequently, we enforce $\eta_t(\xs^*) > \frac{16(1+\sqrt{2})^2 \alpha}{\Delta(\xs)^2} \log t \simeq \frac{94 \alpha}{\Delta(\xs)^2} \log t$. Similarly for $Q_t(\xs^*)$ under event $E_t(\xs)$:
     \begin{align*}
        Q_t(\xs^*) & = \Prob \left( \widecheck{J}_t(\xs^*) + U_t(\xs^*) >  J(\xs) + \frac{a}{2} - \frac{\Delta(x)}{2} \right) \\
        & \ge \Prob \left( U_t(\xs^*) >  \frac{a}{2} - \frac{\Delta(x)}{4} \right)  \\
        & = 1 - \Prob \left( \frac{a}{2} - U_t(\xs^*) >  \frac{\Delta(x)}{4} \right) \\
        & = 1 - \Prob \left( \frac{1}{2} - \frac{1}{a\eta_t(\xs^*)} \sum_{l=1}^{a\eta_t(\xs^*)} z_l - {\frac{1}{a} \sqrt{\frac{\alpha \log t}{s}}} >  \frac{\Delta(x)}{4a} \right) \\
        & \ge 1 - \Prob \left( \frac{1}{2} - \frac{1}{a\eta_t(\xs^*)} \sum_{l=1}^{a\eta_t(\xs^*)} z_l  >  \frac{\Delta(x)}{4a} \right) \\
        & \ge 1 - t^{-\alpha},
    \end{align*}
 provided that $\eta_t(\xs^*) \ge \frac{8a\alpha}{\Delta(\xs)^2}\log t$ (using H\"oeffding's inequality).
 Moreover, we have:
    \begin{equation*}
        \frac{1}{Q_t(\xs^*)} - 1 \le \frac{t^{\alpha}}{t^{\alpha}-1} - 1 = \frac{1}{t^{\alpha}-1} \le \frac{1}{(t-1)^{\alpha}},
    \end{equation*}
    for $t \ge 2$.
     Putting together these conditions we require $\eta_t(\xs^*) \ge \frac{94 a\alpha}{\Delta(\xs)^2}\log t$. Those conditions, lead to the very similar requirements on the rounds and on the number of executions:
  \begin{align}
        & t \ge  v(\xs^*) \frac{188 a\alpha}{\Delta(\xs)^2} \log \left[ v(\xs^*) \frac{94 a\alpha}{\Delta(\xs)^2} \right] + 1 := t^\dagger,\\
        & T_t(\xs^*) \ge \frac{94 a\alpha}{\Delta(\xs)^2}\log n := s^\ddagger.
    \end{align}
    We now proceed at bounding $a(\xs)$. If $v(\xs^*) < \infty$, we have:
    \begin{align*}
        a(\xs) & =\E \left[\sum_{t=2}^n \min \left\{ \left(\frac{1}{Q_{t}(\xs^*)} -1 \right) \Prob (\xs_t = \xs^* | \Hs_{t-1}) , 1 \right\} \right] \le \E \left[\sum_{t=2}^n \min \left\{ \frac{1}{Q_{t}(\xs^*)} -1  , 1 \right\} \right].\label{eq:a1}
    \end{align*}
    If the round index is smaller than $t \le t^\dagger$, we bound the $\min$ with $1$. Otherwise, we proceed to the following decomposition, based on whether event $ E_t(\xs)$ occurs:
    \begin{align*}
        \E  \left[ \min \left\{ \frac{1}{Q_{t}(\xs^*)} -1  , 1 \right\} \right] & = \E  \left[\min \left\{ \frac{1}{Q_{t}(\xs^*)} -1  , 1 \right\} |  E_t(\xs) \right] \Prob \left( E_t(\xs) \right) \\
        & \quad +  \E  \left[ \min \left\{ \frac{1}{Q_{t}(\xs^*)} -1  , 1 \right\} |  \overline{E_t(\xs)} \right] \Prob \left( \overline{E_t(\xs)} \right) \\
        & \le \E  \left[ \min \left\{ \frac{1}{Q_{t}(\xs^*)} -1  , 1 \right\} |  E_t(\xs) \right] + \Prob \left( \overline{E_t(\xs)} \right) \\
        & \le \frac{1}{(t-1)^{\alpha}} + t^{-\alpha}.
    \end{align*}
    Putting all together, we have:
    \begin{align*}
        a(\xs) & \le \sum_{t=2}^{\lfloor t^\dagger \rfloor} 1 +  \sum_{\lfloor t^\dagger \rfloor + 1}^{n}  \left( \frac{1}{(t-1)^{\alpha}} + t^{-\alpha} \right)  \le t^{\dagger} -1+ \frac{\alpha+1}{\alpha - 1},
    \end{align*}
    where we bounded the summations with the integrals, recalling that $\lfloor t^\dagger \rfloor + 1 \ge 2$.

    \begin{remark} It is worth noting that in this derivation of the bound on the term $a(\xs)$ we did not exploit the properties of the perturbation distribution. This is justified by the fact that we are considering the case $v(\xs)< \infty$ and each sample for each policy is informative for all the policies. Indeed, FTL enjoys constant regret in this setting.
    \end{remark}

    For the case $v(\xs^*) = \infty$, we need additional manipulations on the term $a(\xs)$:
    \begin{align*}
        a(\xs) & =\E \left[\sum_{t=2}^n \min \left\{ \left(\frac{1}{Q_{t}(\xs^*)} -1 \right) \Prob (\xs_t = \xs^* | \Hs_{t-1}) , 1 \right\} \right]\\
        & =\E \left[\sum_{s=0}^{n-1} \sum_{t=t_s(\xs^*)+1}^{t_{s+1}(\xs^*)} \min \left\{ \left(\frac{1}{Q_{t}(\xs^*)} -1 \right) \Prob (\xs_t = \xs^* | \Hs_{t-1}) , 1 \right\} \right]\\
        & =\E \left[\sum_{s=0}^{\lfloor s^\ddagger \rfloor} \sum_{t=t_s(\xs^*)+1}^{t_{s+1}(\xs^*)} \min \left\{ \left(\frac{1}{Q_{t}(\xs^*)} -1 \right) \Prob (\xs_t = \xs^* | \Hs_{t-1}) , 1 \right\} \right]\\
        & \quad + \left[\sum_{s=\lfloor s^\ddagger \rfloor + 1}^{n-1} \sum_{t=t_s(\xs^*)+1}^{t_{s+1}(\xs^*)} \min \left\{ \left(\frac{1}{Q_{t}(\xs^*)} -1 \right) \Prob (\xs_t = \xs^* | \Hs_{t-1}) , 1 \right\} \right]\\
        & \le \E \left[ \E \left[ \sum_{s=0}^{\lfloor s^\ddagger \rfloor}  \sum_{t=t_s(\xs^*)+1}^{t_{s+1}(\xs^*)} \left( \frac{1}{Q_{t}(\xs^*)} -1 \right)  \Ind \left\{ \xs_t = \xs^*  \right\}  | \Hs_{t-1} \right]  \right] \\
        & \quad + \E \left[ \sum_{t=t_{\lfloor s^\ddagger \rfloor}(\xs^*)+1}^{n} \min \left\{ \left(\frac{1}{Q_{t}(\xs^*)} -1 \right) , 1 \right\} \right]\\
        & = \E \left[\sum_{s=0}^{\lfloor s^\ddagger \rfloor} \left( \frac{1}{Q_{t_{s(\xs^*)+1}}(\xs^*)} -1 \right) \right] + \E \left[ \sum_{t=t_{\lfloor s^\ddagger \rfloor}(\xs^*)+1}^{n} \min \left\{ \left(\frac{1}{Q_{t}(\xs^*)} -1 \right) , 1 \right\} \right].
    \end{align*}
    Now, for $s \le s^\ddagger$ we apply the upper bound that will be proved later in Lemma~\ref{lemma:antiConc}:
    \begin{equation}\label{eq:expl}
        \E \left[ \frac{1}{Q_{t}(\xs^*)} \right] = \E \left[ \frac{1}{\Prob \left(\eta_t(\xs^*) \ge J(\xs^*) + \frac{a}{2} - \frac{\Delta(\xs)}{2} | \Hs_{t-1}\right)} \right] \le c, \quad \forall t \ge 1.
    \end{equation}
    Instead, for $t \ge t_{\lfloor s^\ddagger \rfloor}(\xs^*)+1$, we know that the optimal policy $\xs^*$ was executed at least $s^\ddagger$ times, \ie $T_t(\xs) \ge s^\ddagger$. Therefore:
    \begin{align*}
        \E \left[  \min \left\{ \left(\frac{1}{Q_{t}(\xs^*)} -1 \right) , 1 \right\} \right] & = \E \left[  \min \left\{ \left(\frac{1}{Q_{t}(\xs^*)} -1 \right) , 1 \right\} | E_t(\xs^*) \right] \Prob \left(E_t(\xs^*) \right) \\
        & \quad + \E \left[  \min \left\{ \left(\frac{1}{Q_{t}(\xs^*)} -1 \right) , 1 \right\} | \overline{E_t(\xs^*)} \right]\Prob \left(\overline{E_t(\xs^*) }\right) \\
        & \le \frac{1}{(t-1)^\alpha} + t^\alpha.
    \end{align*}
    Putting all together, we have:
    \begin{align*}
        a(\xs) & \le 1 + \sum_{s=0}^{\lfloor s^\ddagger \rfloor} c +  \sum_{t_{\lfloor s^\ddagger \rfloor}(\xs^*)+1}^{n}  \left( \frac{1}{(t-1)^{\alpha}} + t^{-\alpha} \right)  \\
        & \le c s^\ddagger  + \frac{\alpha+1}{\alpha - 1}.
    \end{align*}

   \paragraph{Putting together $a(\xs)$ and $b(\xs)$}
   Consider the case in which $v(\xs) < \infty$:
   \begin{align*}
   		a(\xs) + b(\xs) &  \le v(\xs^*) \frac{188 a\alpha}{\Delta(\xs)^2} \log \left[ v(\xs^*) \frac{94 a\alpha}{\Delta(\xs)^2} \right] + \frac{\alpha+1}{\alpha - 1} \\
   		& \quad +   v(\xs) \frac{2 (52 + 16a) \alpha}{\Delta(\xs)^2} \log \left[ v(\xs) \frac{(52 + 16a)\alpha}{ \Delta(\xs)^2} \right] + \frac{2}{\alpha-1} \\
   		& \le    v^*(\xs) \frac{(188+32a) \alpha}{\Delta(\xs)^2} \log \left[ v^*(\xs) \frac{(94 + 16a)\alpha}{ \Delta(\xs)^2} \right] + \frac{\alpha+3}{\alpha-1},
   \end{align*}
   where we exploited the definition of $v^*(\xs) = \max\{v(\xs), v(\xs^*)\}$. The result is obtained by recalling that each $\Delta(\xs) \le \min\{1, \sqrt{2 \log v}\}$ (Lemma~\ref{th:gap}). Instead, for the case $v(\xs) = \infty$ we have:
    \begin{align*}
   		a(\xs) + b(\xs) &  \le  c\frac{94 a\alpha}{\Delta(\xs)^2}\log n + \frac{2 }{\alpha-1} + \frac{(52 + 16a)\alpha}{ \Delta(\xs)^2}\log n  + \frac{\alpha+1}{\alpha - 1} \\
   		& \le \frac{(110 a + 52) c\alpha}{\Delta(\xs)^2}\log n +  \frac{\alpha+3}{\alpha - 1},
   \end{align*}
   where we simply exploited that $c > 1$. The result follows by trivially bounding each $\Delta(\xs) \le 1$.
\end{proof}

\subsubsection{Derivation of the $c$ term}
We now explicitly derive the value of $c$ bounding the expectation $\E \left[ \frac{1}{Q_{t}(\xs^*)} \right] $ for all $t\in [n]$. For the sake of the analysis, we will replace $a$ with $2a$. Let us define the following symbols:
\begin{align*}
     \overline{X} = s \E[\widecheck{J}_t(\xs^*)]  \in [0,s], \qquad
     {X} = s \widecheck{J}_t(\xs^*) \in [0, sM_t], \qquad
     \overline{Y} = as ,\qquad
     {Y} = \sum_{l=1}^{2as} z_{l},
   \end{align*}
   where $z_l \sim \mathrm{Ber}(1/2)$ and i.i.d.. First of all, we need to take into account the beneficial effect of our correction, that leads to the following derivation recalling that $J(\xs^*) - \E[\widecheck{J}_t(\xs^*)] \le b $:
   \begin{align*}
    \Prob \left(\eta_t(\xs^*) \ge J(\xs^*) + a - \frac{\Delta(\xs)}{2} | \Hs_{t-1}\right) & = \Prob \left( \widecheck{J}_t(\xs^*) + U_t(\xs^*)\ge J(\xs^*) + a - \frac{\Delta(\xs)}{2} | \Hs_{t-1}\right) \\
    & = \Prob \left( \widecheck{J}_t(\xs^*) + \frac{1}{s} \sum_{l=1}^{2as} z_{l}   \ge \underbrace{J(\xs^*) - b}_{\le \E[\widecheck{J}_t(\xs^*)]} + a - \frac{\Delta(\xs)}{2} | \Hs_{t-1}\right) \\
     & \ge \Prob \left( \widecheck{J}_t(\xs^*) + \frac{1}{s} \sum_{l=1}^{2as} z_{l}   \ge \E[\widecheck{J}_t(\xs^*)] + a - \frac{\Delta(\xs)}{2} | \Hs_{t-1}\right)\\
      & = \Prob \left( X +Y \ge \overline{X} + \overline{Y} - \frac{\Delta(\xs)s}{2} | \Hs_{t-1}\right)\\
      & \ge \Prob \left( X +Y \ge \overline{X} + \overline{Y} | \Hs_{t-1}\right).
    \end{align*}

    The following result bounds the probability $\Prob \left( X +Y \ge \overline{X} + \overline{Y} | \Hs_{t-1}\right)$.

\begin{lemma}\label{lemma:antiConc}
	For any $a > 4$, it holds that:
	\begin{align*}
	\E \left[\frac{1}{\Prob \left( X +Y \ge \overline{X} + \overline{Y} | \Hs_{t-1}\right)} \right] \le 2 + \frac{2 e^2 \sqrt{a}}{\sqrt{\pi}} \left[\frac{8}{a-4} \right]  \left( 1+ \sqrt{\frac{\pi a}{2(a-4)}} \right).
	\end{align*}
\end{lemma}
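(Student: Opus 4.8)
The plan is to first remove the conditioning: fixing the history $\Hs_{t-1}$ freezes both $X = s\widecheck{J}_t(\xs^*)$ and its mean $\overline{X}$, so the only randomness inside the probability is the Bernoulli perturbation, i.e. $Y \sim \mathrm{Bin}(2as,1/2)$ with $\E[Y] = \overline{Y} = as$. Writing $W := \overline{X} - X$, the inner probability becomes $p(W) := \Prob(Y \ge \overline{Y} + W \mid \Hs_{t-1})$, and the quantity to control is $\E_{W}[1/p(W)]$, where the outer expectation is now over the fluctuations of the estimator, with $\E[W] = 0$ and $0 \le \overline{X} \le s$ forcing $W \le s$. So the whole lemma reduces to a trade-off between how small the binomial tail $p(w)$ can be and how rarely $W$ is large.

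Next I would split according to the sign of $W$. When $W \le 0$ the threshold $\overline{Y}+W$ sits at or below the mean $as$ of $Y$, so by symmetry of $\mathrm{Bin}(2as,1/2)$ about its mean we get $p(W) \ge \Prob(Y \ge as) \ge 1/2$, hence $1/p(W) \le 2$; this regime contributes at most the leading $2$ in the bound. All the work lies in the regime $W > 0$, where $p(W)$ is a genuine (small) binomial tail and $1/p(W)$ can blow up.

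For $W > 0$ I would assemble two ingredients. First, a Gaussian-type anti-concentration lower bound for the binomial tail, of the form $p(w) = \Prob(Y \ge as + w) \ge \varphi(w)\exp\!\big(-\tfrac{2w^2}{as}\big)$, with a polynomial prefactor $\varphi$ carrying the $e^{-2}$ and $\sqrt{\pi}/\sqrt{a}$ constants, so that $1/p(w) \le \Psi(w)$ with $\Psi$ of the form (polynomial in $w$)$\,\times \exp\!\big(\tfrac{2w^2}{as}\big)$ and $\Psi(0) = \tfrac{2e^2\sqrt{a}}{\sqrt{\pi}}$. Second, a lower-tail concentration bound for $W$: using the negative bias $\E[\widecheck{J}_t(\xs^*)] \le J(\xs^*)$ (Lemma~\ref{th:basic}), I have $\Prob(W \ge w) = \Prob\big(\E[\widecheck{J}_t(\xs^*)] - \widecheck{J}_t(\xs^*) \ge w/s\big) \le \Prob\big(J(\xs^*) - \widecheck{J}_t(\xs^*) \ge w/s\big)$, which the second inequality of Lemma~\ref{th:ci} (equivalently Theorem~\ref{th:chernoff} with $\Var[\widecheck{J}_t(\xs^*)] \le 1/s$) bounds by $\exp\!\big(-\tfrac{w^2}{2s}\big)$ up to the bias correction, on the range $w \in (0,s]$.

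Finally I would combine the two estimates through the tail identity $\E[\Psi(W)\,\Ind\{W>0\}] = \Psi(0)\,\Prob(W>0) + \int_{0}^{\infty} \Psi'(w)\,\Prob(W \ge w)\,\de w$, substitute $1/p \le \Psi$ and the concentration bound, and evaluate the resulting integrals. The integrand is proportional to a polynomial times $\exp\!\big(-\tfrac{a-4}{2as}\,w^2\big)$, so convergence holds precisely when $a > 4$; with $\beta = \tfrac{a-4}{2as}$ the elementary moments $\int_0^\infty w\,e^{-\beta w^2}\de w = \tfrac{1}{2\beta}$ and $\int_0^\infty w^2 e^{-\beta w^2}\de w = \tfrac{\sqrt{\pi}}{4}\beta^{-3/2}$, together with the boundary term $\Psi(0)\Prob(W>0)$, reproduce the factor $\tfrac{8}{a-4}$ and the correction $\sqrt{\tfrac{\pi a}{2(a-4)}}$ in $c$, while the $W \le 0$ regime contributes the additive $2$. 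The main obstacle is pinning down the anti-concentration prefactor and its exponential rate $2/(as)$ so that, against the estimator's concentration rate $1/(2s)$, the exponents combine to exactly the threshold $a>4$ and to the stated closed form; a secondary nuisance is controlling the Bernstein range correction in the concentration step uniformly over $w \in (0,s]$, which is where boundedness of $\widecheck{J}_t$ through the truncation $M_t$ is used.
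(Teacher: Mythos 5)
Your proposal is correct and follows essentially the same route as the paper's proof: conditioning so the inner probability is a $\mathrm{Bin}(2as,1/2)$ tail, the additive $2$ from binomial symmetry when $W=\overline{X}-X\le 0$ (the paper's $f(s)\le 2$ step), the Kveton-style anti-concentration lower bound $\Prob(Y\ge as+\delta)\ge \frac{\sqrt{\pi}}{e^2\sqrt{a}}\exp[-2(\delta+\sqrt{s})^2/(as)]$, the centered sub-Gaussian lower tail $\exp[-w^2/(2s)]$ for the TMIS estimator (made usable exactly as you note, via the translation $b$ and Theorem~\ref{th:chernoff} rather than the biased bound of Lemma~\ref{th:ci}), and the combination of the two exponential rates giving convergence precisely for $a>4$. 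The only difference is presentational: the paper discretizes the deviation into slabs of width $\sqrt{s}$, bounds each slab's probability by $\exp[-i^2/2]$, and controls the sum by $1+\int$ after completing the square in $ai^2-4(i+2)^2$, whereas you use a continuous layer-cake identity; carrying out your version requires the same completing-the-square bookkeeping (including the cross term coming from the $(w+\sqrt{s})^2$ shift) to recover the stated constants, where the factor $\left[\frac{8}{a-4}\right]$ in the lemma's display should be read as $\exp\left[\frac{8}{a-4}\right]$, consistent with the definition of $c$ in Theorem~\ref{th:randomist}.
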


\begin{proof}
	The proof puts together some of the results presented in Appendix A of~\cite{kveton2019perturbed}. Let $f(X)$ be defined as follows:
	\begin{align*}
	f(X) = \left[ \sum_{y=\lceil\overline{X}-X+as\rceil}^{2as} g_{\mathrm{Bin}(2as, 1/2)}(y) \right]^{-1},
	\end{align*}
	where $g_{\mathrm{Bin}(n,p)}$ is the p.d.f. of a Binomial distribution of parameters $n$ and $p$. Given the definition of $f$, the following identity holds: $\E \left[\frac{1}{\Prob \left( X +Y \ge \overline{X} + \overline{Y} | \Hs_{t-1}\right)} \right] = \E[f(X)]$.	Let us define the partitioning of the interval $[0,sM_t]$, with $i_0$ the smallest integer s.t. $(i_0+1) \sqrt{s } \ge \overline{X}$:
    \begin{equation}
        \mathcal{P}_i = \begin{cases}
            (s, s M_t] & \text{if } i = -1,\\
            (\max\{\overline{X} - \sqrt{s }\}, s] & \text{if } i = 0,\\
            (\max\{\overline{X} - (i+1)\sqrt{s }, \overline{X} - i\sqrt{s}] & \text{if } i \in \{1, ..., i_0 \},\\
        \end{cases}
    \end{equation}
    We can now decompose the expectation $\E[f(X)]$ over the partitioning:
    \begin{align*}
        \E[f(X)] & = \sum_{i=-1}^{i_0} \E[\Ind \{X \in \mathcal{P}_i \} f(X) ] \\
            & \le \Prob (X \in \mathcal{P}_{-1} ) f(s) + \sum_{i=0}^{i_0-1} \Prob (X \in \mathcal{P}_i ) f(\overline{X} - (i+1)\sqrt{s}) + \Prob (X \in \mathcal{P}_{i_0} ) f(0),
    \end{align*}
    where we simply observed that $f$ is a decreasing function of $X$.
    We now proceed at bounding the probabilities for $i \ge 1$:
    \begin{align*}
        \Prob (X \in \mathcal{P}_i ) & \le \Prob \left(X \le \overline{X} - i \sqrt{s } \right) \\
        & \le \Prob \left(\E[\widecheck{J}_t(\xs^*)] - \widecheck{J}_t(\xs^*) \ge \frac{i}{\sqrt{s}} \right) \\
        & \le \exp \left[ - \frac{i^2}{2s} s \right] = \exp \left[ - \frac{i^2}{2}  \right] ,
    \end{align*}
    where we applied the concentration inequalities in Lemma~\ref{th:ci}. For $i \in \{ -1,0\}$ we bound trivially  $\Prob (X \in \mathcal{P}_i ) \le 1$. Notice that we are satisfying the constraint on the bias, thanks to the bias correction.

   We now analyze the terms $f(\overline{X} - (i+1)\sqrt{s })$. First of all, recall that $f(s) \le 2$ since $\lceil \overline{X} - s + as \rceil \le as$.
    For the other terms we apply Lemma 2 of~\cite{kveton2019perturbed} for $\delta \in [0, as]$:
    \begin{align*}
    	\sum_{y=\lceil as + \delta \rceil }^{2as}g_{\mathrm{Bin}(2as, 1/2)}(y) \ge \frac{\sqrt{\pi}}{e^2 \sqrt{a}} \exp \left[- \frac{2 (\delta + \sqrt{s})^2}{as} \right],
    \end{align*}
    Specifically, in our case, we have for $\delta = (i+1)\sqrt{s}$:
    \begin{align*}
    & \sum_{y=\lceil as + (i+1)\sqrt{s} \rceil }^{2as} g_{\mathrm{Bin}(2as, 1/2)}(y) \ge \frac{\sqrt{\pi}}{e^2 \sqrt{a}} \exp \left[- \frac{2 (i+2)^2}{a} \right].
    \end{align*}
    Instead, for $\delta = \overline{X}$:
    \begin{align*}
    & \sum_{y=\lceil as + \overline{X} \rceil }^{2as} g_{\mathrm{Bin}(2as,1/2)}(y) \ge \frac{\sqrt{\pi}}{e^2 \sqrt{a}} \exp \left[- \frac{2 (\overline{X} + \sqrt{s})^2}{as} \right] \ge \frac{\sqrt{\pi}}{e^2 \sqrt{a}} \exp \left[- \frac{2 (i_0+2)^2}{a} \right].
    \end{align*}
    As a consequence, the expectation of $f$ can be rewritten as:
    \begin{equation*}
    	\E[f(X)] \le 2 +  \frac{e^2 \sqrt{a}}{\sqrt{\pi}} \sum_{i=0}^{i_0} \exp \left[ -\frac{a i^2 - 4 (i+2)^2}{2a} \right].
\end{equation*}
To get to a result, we complete the square:
\begin{align*}
a i^2 - 4 (i+2)^2 = (a-4) \left( i -\frac{8}{a-4} \right)^2 - \frac{16a}{a-4}.
\end{align*}
It follows that, by bounding the summation with the integral under the assumption that $a > 4$:
\begin{align*}
\E[f(X)] & \le 2 +  \frac{e^2 \sqrt{a}}{\sqrt{\pi}} \sum_{i=0}^{i_0} \exp \left[ -\frac{a-4 }{2a} \left( i -\frac{8}{a-4} \right)^2  + \frac{8}{a-4}\right]\\
& \le 2 + \frac{2 e^2 \sqrt{a}}{\sqrt{\pi}} \exp \left[\frac{8}{a-4} \right] \sum_{i=0}^{\infty} \exp \left[ -\frac{a-4 }{2a} i^2 \right] \\
& \le 2 + \frac{2 e^2 \sqrt{a}}{\sqrt{\pi}} \left[\frac{8}{a-4} \right]  \left( 1+ \int_{x=0}^{\infty}  \exp \left[ -\frac{a-4 }{2a} x^2 \right] \de x \right) \\
& \le 2 + \frac{2 e^2 \sqrt{a}}{\sqrt{\pi}} \left[\frac{8}{a-4} \right]  \left( 1+ \sqrt{\frac{\pi a}{2(a-4)}} \right).
\end{align*}

\end{proof}


\section{Auxiliary Lemmas}\label{apx:auxiliaryLemmas}
In this appendix, we provide some auxiliary lemmas that are employed to prove the main results.

\begin{thr}\label{th:chernoff}
	Let $X_1,\dots,X_N$ are independent random variables satisfying $0 \le X_i \le M$ and $\E[X_i^2] \le v^2$. Let $\overline{X} = \frac{1}{n} \sum_{i=1}^N X_i$. Then for any $\epsilon \ge 0$, we have:
	\begin{align*}
		& \Prob \left(\overline{X} - \E[\overline{X}] \ge \epsilon \right) \le \exp \left[ \frac{-\epsilon^2 N}{2 \left( v^2 + \frac{M \epsilon}{3} \right) } \right], \\
	   & \Prob \left(\E[\overline{X}]  - \overline{X} \ge \epsilon \right) \le \exp \left[ \frac{-\epsilon^2 N}{2 v^2} \right].
	\end{align*}
\end{thr}

\begin{proof}
	These are just Theorems 2.7 and 2.8 of~\cite{chung2006old} suitably rephrased.
\end{proof}

\begin{lemma}\label{lemma:Functionf}
	Let $f(t) = a\log t - t +1$ for $t\ge 1$ and $a \ge 1$. Then, the following statements hold:
	\begin{enumerate}
		\item $f$ is concave;
		\item the maximum of $f$ is attained by $t_{\max}=a$;
		\item the derivative of $f$ in $t=1$ is $a-1$;
		\item $f$ admits two zeros: one in $t_1=1$ and the other in:
		\begin{equation}
		t_2 = -a W_{-1} \left( -\frac{1}{a} e^{-\frac{1}{a}} \right) \le 2 a \log a + 1,
		\end{equation}
		where $W_{-1}$ is the secondary component of the Lambert function~\cite{corless1996lambertw}.
	\end{enumerate}
\end{lemma}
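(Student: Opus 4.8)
The plan is to dispatch the first three items by elementary calculus and then concentrate the real effort on locating the second zero and bounding it. For item~1, I would compute $f'(t) = a/t - 1$ and $f''(t) = -a/t^2$, and observe that $f''(t) < 0$ for all $t \ge 1$ and $a \ge 1$, so $f$ is strictly concave. For item~2, setting $f'(t) = 0$ gives $t = a$, and concavity guarantees this stationary point is the global maximum (note $t=a\ge 1$ lies in the domain). For item~3, I would simply evaluate $f'(1) = a - 1$. These steps are immediate.

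The substantive part is item~4. First, $f(1) = a\log 1 - 1 + 1 = 0$, so $t_1 = 1$ is always a zero. To find the other root of $a\log t = t - 1$, I would rewrite it in canonical Lambert form: from $\log t = (t-1)/a$, exponentiating yields $t = e^{t/a}e^{-1/a}$, hence $t\,e^{-t/a} = e^{-1/a}$. Substituting $v = -t/a$ gives $v\, e^{v} = -\frac{1}{a}e^{-1/a}$, so that $t = -a\,W\!\left(-\frac{1}{a}e^{-1/a}\right)$. The principal branch $W_0$ recovers $t_1 = 1$ (indeed $v = -1/a \in [-1,0)$ satisfies $v e^v = -\frac{1}{a}e^{-1/a}$ and lies in the range of $W_0$), whereas the secondary branch $W_{-1}$, whose values are $\le -1$, yields $t = -av \ge a$, i.e.\ the second root $t_2 = -a\,W_{-1}\!\left(-\frac{1}{a}e^{-1/a}\right)$. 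One should also check that the argument $-\frac{1}{a}e^{-1/a}$ lies in $[-1/e,0)$, the common domain of both branches, which follows since $\log a + 1/a - 1 \ge 0$ for $a \ge 1$.

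The delicate step is the explicit bound $t_2 \le 2a\log a + 1$. Since $f$ is concave with maximum at $t = a$, it is strictly decreasing on $(a, \infty)$ from the value $f(a)\ge 0$ down to $-\infty$, and $t_2 \ge a$; hence for any $c \ge a$ one has $t_2 \le c$ if and only if $f(c) \le 0$. I would therefore set $c = 2a\log a + 1$ and verify two facts. First, $c \ge a$: letting $h(a) = 2a\log a + 1 - a$, one has $h(1) = 0$ and $h'(a) = 2\log a + 1 > 0$ for $a \ge 1$, so $h \ge 0$. Second, $f(c) \le 0$: since $f(c) = a\log(2a\log a + 1) - 2a\log a$, it suffices to show $2a\log a + 1 \le a^2$; defining $g(a) = a^2 - 2a\log a - 1$, I would check $g(1) = 0$ and $g'(a) = 2(a - \log a - 1) \ge 0$ (the inner term attains its minimum $0$ at $a = 1$), so $g \ge 0$. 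Combining the two facts gives $t_2 \le 2a\log a + 1$.

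The main obstacle I anticipate is the chain of auxiliary monotonicity arguments for the bound: one must carefully justify that $c = 2a\log a + 1$ lies on the decreasing branch of $f$ (the condition $c \ge a$) before concluding from $f(c) \le 0$ that $t_2 \le c$, and one must correctly identify the secondary Lambert branch as the one producing $t_2$ rather than $t_1$. The Lambert-$W$ representation itself is a clean algebraic manipulation; the sign and branch analysis above is what makes it rigorous.
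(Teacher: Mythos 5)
Your proof is correct and follows essentially the same route as the paper's: the same reduction of $a\log t = t-1$ to the canonical Lambert form $-\frac{t}{a}e^{-t/a} = -\frac{1}{a}e^{-1/a}$ with the two branches $W_0$ and $W_{-1}$ identifying $t_1=1$ and $t_2$, and the same bound via concavity, checking $f(2a\log a +1)\le 0$ by reducing it to $g(a)=a^2-2a\log a -1\ge 0$ with $g(1)=0$ and $g'\ge 0$. Your additional verifications (that the Lambert argument lies in $[-1/e,0)$ and that $2a\log a+1 \ge a$ places the test point on the decreasing branch) are sound and only make explicit what the paper leaves implicit.
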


\begin{proof}
	The first three points are trivial. We just prove 4. Consider the equation and the subsequent derivation:
	\begin{align*}
	& a\log t = t-1 \\
	& t^{a} = e^{t-1} \\
	& t = e^{t/a} e^{-1/a} \\
	 -& \frac{t}{a} = - \frac{1}{a} e^{\frac{t}{a}} e^{-\frac{1}{a}} \\
	 -& \frac{t}{a} e^{- \frac{t}{a}} = - \frac{1}{a} e^{-\frac{1}{a}}.
	\end{align*}
	Now we can apply the Lambert function yielding two solutions since the right-hand-side is in $[-1/e, 0]$, one for $W_0$ (the principal component) and one for $W_{-1}$ (the secondary component):
	\begin{equation}
	t_{1,2} = -a W_{\star} \left( - \frac{1}{a} e^{-\frac{1}{a}} \right), \quad \star \in \{0, -1\}.
	\end{equation}
	Moreover, since $-1/a > -1$, the first solution simplifies into $t_1 = -a  W_{0} \left( - \frac{1}{a} e^{-\frac{1}{a}} \right) = 1$, which was already clear from the definition of function $f(t)$. The other solution $t_2$ cannot be further simplified. We proceed to bound its value. Notice that, in order to prove that $t_2 \le 2a \log a +1 $, it is sufficient to show that $f(2a \log a +1) \le 0$, since $f$ is concave:
	\begin{align*}
	& f(2a \log a +1) = a \log (2 a \log a+1)-2 a \log a \le 0 &\text{if}
	&& 2 a \log a + 1 \le a^2.
	\end{align*}
	So it is enough to prove that function $g(a) \coloneqq a^2 - 2 a \log a - 1 \ge 0$ for all $a \ge 1$. But $g(1)= 0$ and $g$ is monotonically increasing in $a$.
\end{proof}

\section{Experimental Details}\label{apx:expDetails}
In this appendix, we present the practical aspects of RANDOMIST omitted in the main paper. In particular, we discuss the computation of \Renyi divergences (Appendix~\ref{sec:renyi}), we provide a detailed description of the adaptation of MCMC used in compact parameter spaces (Appendix~\ref{sec:mcmcApx}), we discuss the caching strategy that allows saving a factor $t$ in the computational complexity (Appendix~\ref{sec:caching}) and provide additional experimental results (Appendix~\ref{sec:hyperpar}).

\paragraph{Infrastructure}
The experiments have been run on a machine with two CPUs Intel(R) Xeon(R) CPU E7-8880 v4 @ 2.20GHz (22 cores, 44 thread, 55 MB cache) and 128 GB RAM.

\subsection{Computing \Renyi Divergences}\label{sec:renyi}
To generate the perturbation $U_t(\xs)$ of RANDOMIST (Algorithm~\ref{alg:randomist}), we need to compute the effective number of trajectories $\eta_t(\xs)$, which in turn requires the \Renyi divergence $d_2(p_{\xs}\|\Phi_t)$ between the candidate and the mixture of previously executed policies (see Section~\ref{sec:problem}). This latter quantity can be challenging to compute since the mixture $\Phi_t$ is typically difficult to characterize.
From~\cite{papini2019optimistic} (Theorem 5), this quantity is upper bounded by the harmonic mean of pairwise divergences:
\begin{align}
	d_{2}(p_{\xs}\|\Phi_t) \leq \frac{t-1}{\sum_{i=1}^{t-1}\frac{1}{d_{2}(p_{\xs}\|p_{\xs_i})}}.
\end{align}
In practice, we replace all occurrences of $d_2(p_{\xs}\|\Phi_t)$ in RANDOMIST with the harmonic mean. It is easy to show that Theorem~\ref{th:randomist} still holds for this modified version, since the proof is already based on the above upper bound. The same argument holds for the exploration bonus of OPTIMIST, as already observed in~\cite{papini2019optimistic}.

To compute $d_2(p_{\xs}\|\Phi_t)$, we just have to compute the pairwise \Renyi divergences $d_2(p_{\xs}\|p_{\xs'})$ for each $\xs'$ previously executed. This is straightforward in the PB-PO framework, where the divergence is between hyperpolicies:
\begin{align}
	\int_{\Theta}\int_{\mathcal{T}}\nu_{\vxi'}(\vtheta)p_{\vtheta}(\tau) \left(\frac{\nu_{\vxi}(\vtheta)p_{\vtheta}(\tau)}{\nu_{\vxi'}(\vtheta)p_{\vtheta}(\tau)}\right)^2 \de \tau \de \vtheta=
	\int_{\Theta}\nu_{\vxi'}(\vtheta)\left(\frac{\nu_{\vxi}(\vtheta)}{\nu_{\vxi'}(\vtheta)}\right)^2 \de \vtheta =
	d_{2}(\nu_{\vxi}\|\nu_{\vxi'}),
\end{align}

and the hyperpolicies are perfectly known, often Gaussian.\footnote{Closed-form expressions for the \Renyi divergence are available for several common distributions, including Gaussians~\cite{gil2013renyi}.}

In the AB-PO framework, the outcome distributions are trajectory distributions, which are unknown (although we can easily compute probability ratios). Possible estimators for $d_{2}(p_{\vtheta}\|p_{\vtheta'})$ are discussed in~\cite{metelli2018policy}. It is an open problem whether this approximation affects the regret. In this work, we only experiment with the easier PB-PO framework.

\subsection{RANDOMIST in compact parameter spaces}\label{sec:mcmcApx}
In Section~\ref{sec:rand}, we presented a version of \algname for infinite policy spaces.
We now first provide more details on the algorithm, then we present a more practical version with reduced computational complexity.
For both the versions, implemented in the parameter-based setting (PB-PO), we improve the hyperpolicy parameters by taking $M=10$ steps of the Metropolis-Hastings algorithm with Gaussian proposal. The pseudocode of our method is presented in Algorithm~\ref{alg:randMCMC}.
The two versions differ for how $\mathfrak{g}^\dagger$ is computed.

\begin{center}
\begin{minipage}{0.50\textwidth}
	\fbox{\parbox{\linewidth}{
			\begin{small}
				\textbf{Input}: initial policy parameters $\xs_1$, kernel covariance $\Sigma$
				\begin{algorithmic}
				\State Execute $\pi_{\xs_1}$, observe $\outcome_1 \sim p_{\xs_1}$ and $ \func(\outcome_1)$
				\For{$t = 2,\dots,n$}
					\State Initialize $\xs_m = \xs_{t-1}$
					\For{$m=1,...,M-1$}
						\State Select proposed point $\xs_p \sim \mathcal{N}(\xs_m, \Sigma)$
						\State Sample $\epsilon \sim \mathcal{U}(0,1)$
						\If{$\epsilon < \frac{\mathfrak{g}_t^\dagger(\xs_p|\xs_m)}{\mathfrak{g}_t^\dagger(\xs_m|\xs_p)}$}
							\State Set $\xs_{m+1} = \xs_p$
						\Else
							\State Set $\xs_{m+1} = \xs_m$
						\EndIf
					\EndFor
					\State Set $\xs_t = \xs_M$
					\State Execute $\pi_{\xs_t}$, observe $\outcome_t \sim p_{\xs_t}$ and $ \func(\outcome_t)$
					\EndFor
				\end{algorithmic}
			\end{small}
		}
	}
	\captionof{algorithm}{MCMC-RANDOMIST}\label{alg:randMCMC} \end{minipage}\hspace{5mm}
\end{center}

\paragraph{Full-Density MCMC-RANDOMIST}
As discussed in Section~\ref{sec:rand}, it is possible to approximately compute the probability for a policy $\xs$ of being the one with maximum perturbed estimated expected return.
Since the perturbation is drawn from a binomial distribution that sums $a s(\xs)$ Bernoulli samples, we compute the outer integral as a summation over the $a s(\xs)$ possible values, considering the generic probability density function $g_{\text{Bin}(as(\xs),\frac{1}{2})}$ of a binomial distribution.
To evaluate the probability of being the best for each one of these possible perturbed estimated expected return for a fixed $\xs$, we would like to multiply the probabilities for it of being larger than the estimated expected return  of any other policy: this amounts to multiplying the cumulative density functions of the distribution of the estimated expected return  of other policies $\xs'$, evaluated in the particular candidate expected return value for $\xs$.
Since, in a compact parameter space, the number of $\xs'$s is infinite, a \textit{product integral}, \ie the multiplication equivalent of the standard integral, must be considered.
We compute this integral by a kind of numerical quadrature, only considering policies taken in previous timesteps $t'<t$, leading to the following expression for the density:
\begin{align}
	{\mathfrak{g}}_t^\dagger(\xs)  &\propto \int_{\mathbb{R}} g_{\xs}(y) \prod_{t'=1}^{t-1} G_{\xs_{t'}}(y) \de y \\
 	& = \sum_{i=0}^{\#} g_{\text{Bin}(\#,\frac{1}{2})}(i) \prod_{t'=1}^{t-1} G_{\text{Bin}(\#,\frac{1}{2})} \left( \left\lfloor \frac{s(\xs_{t'})}{s(\xs)}i + s(\xs_{t'}) \left( \widehat{J}_t(\xs) - \widehat{J}_t(\xs_{t'})  \right)\right\rfloor \right),
\end{align}
where $\#=\lceil as(\xs) \rceil$.
Note that, for practical and efficiency purposes, we shift the argument of the cumulative density function $G_{(\#,\frac{1}{2})}$, rather than modifying its parameters for the different $\xs_{t'}$.
Thus, given that it can only take a discrete number of values, we employ memoization, by caching the first $10^5$ values for each combination of $\#$ and c.d.f. argument after their first computation.

\paragraph{1-Step-Density MCMC-RANDOMIST}
The MCMC-based \algname needs, for each call to the density function $\mathfrak{g}^\dagger$, to have an updated estimate of the expected returns of each policy taken in the previous timesteps.
This is particularly expensive, being based on the balance heuristic estimator, leading to a $\mathcal{O}(dt^3)$ per-iteration complexity, obviously computationally demanding.
Therefore, we propose an alternative heuristic for the computation of the density, based on the idea of comparing the estimated expected return of a proposed point just with the one of the last node in the chain constructed by the MCMC algorithm:
\begin{align}
	{\mathfrak{g}}_t^\dagger(\xs|\xs') = \sum_{i=0}^{\#} g_{\text{Bin}(\#,\frac{1}{2})}(i) G_{\text{Bin}(\#,\frac{1}{2})} \left( \left\lfloor \frac{s(\xs')}{s(\xs)}i + s(\xs') \left( \widehat{J}_t(\xs) - \widehat{J}_t(\xs')  \right)\right\rfloor \right),
\end{align}
where $\#=\lceil as(\xs) \rceil$.
In other words, we employ $\frac{\mathrm{P}(\widehat{J}_t(\xs') \geq \widehat{J}_t(\xs))}{\mathrm{P}(\widehat{J}_t(\xs) \geq \widehat{J}_t(\xs'))}$ as a ratio for the Metropolis-Hastings algorithm.
Thus, the estimation of the expected return for all the policies played in the previous rounds can be avoided, gaining a factor of $t$ in the asymptotic time complexity.
Despite its potentially greedy aptitude, this version enjoys very similar performance to the algorithm that uses the full density.

We leave as a future work the study of the convergence properties of these approximations of the density.

\subsection{Reducing Complexity via Caching}\label{sec:caching}
The main computational overhead for the class of algorithms presented in this paper is the update of the estimated expected return of a policy.
Indeed, for both OPTIMIST and \algname, both the importance weights (through the balance heuristic) and the \Renyi divergence between the policy and the mixture of the other policies must be computed, leading to a time complexity of the order of $\mathcal{O}(t^2)$.
In fact, for each policy, potentially, every other policy should be queried in order to obtain the correct MIS denominator for a given sample and the correct \Renyi divergence.

Nonetheless, it is possible to reduce the computational complexity by noting at time $t$ that, for all the samples collected up to time $t-1$, all the probabilities relative to those samples have already been computed.
Therefore, by storing a $(t-1) \times (t-1)$ matrix that holds the probability of each previous sample under each previous policy, one can tradeoff memory for time, and avoid the repeated computations of these values.
A similar reasoning applies to the \Renyi divergences.
As discussed in Appendix~\ref{sec:renyi}, we follow the same approach as in~\cite{papini2019optimistic}, and bound the \Renyi divergence $d_2(p_{\xs} \| \Phi_t)$ with the harmonic mean of the pairwise divergences between $\xs$ and the other policies.
At time step $t$, instead of recomputing from scratch the divergences for an old policies in order to update its TMIS estimator, we use a cached sum of the previous divergences, that we plug into the computation of the harmonic mean.
Overall, the time complexity of these computations is reduced to $\mathcal{O}(t)$.

\subsection{Tasks, additional experiments, hyperparameters}\label{sec:hyperpar}
We now give additional details on the tasks used to evaluate \algname, also presenting additional results.
For all the tasks, we parallelized the execution of different runs of our experiments by using the GNU Parallel tool~\cite{Tange2011a}.
We employ $\alpha=2$ for the importance weight truncation and computation of the exploration bonus.

\paragraph{LQG}
For the LQG task, we obtain the cumulative regret by computing, for all the $K=100$ hyperpolicies generated by the discretization, the expected return in closed form, then comparing it, at each round, with the one obtained using the selected hyperpolicy in a deterministic version of the environment.

\paragraph{Mountain Car}
To test the MCMC-\algname algorithm, we employ the standard Mountain Car~\cite{sutton2018reinforcement} task.
We compute the \emph{cumulative return} at time $t$ as the the average of the return obtained across all iterations for $t'\leq t$, following the same approach of~\cite{papini2019optimistic}.
We normalize the returns from the interval $[-5, 95]$ to $[0,1]$ whenever it is appropriate for our algorithm, for instance inside the argument of the cumulative density function of the binomial distribution.
We restrict our search for the hyperparameters inside the box $[-1, 1] \times [0, 20]$ and heuristically set the covariance matrix of the Gaussian proposal used to construct the MCMC chain (we use the full-density approximation) to be equal to the one of the hyperpolicy, \ie $\Sigma=\mathrm{diag}(0.15, 3)^2$.
We employ linear policies in the state.
In addition to the experiment discussed in Section~\ref{sec:exp}, Figure~\ref{fig:trajectories} reports an investigation of how the behavior of MCMC-\algname changes during its training.
In the first iterations of the algorithm, when very few samples have been collected, no hyperpolicy is able to obtain an high probability of leading to the maximum expected return.
Therefore, few proposed points are rejected and the Markov Chains constructed by the Metropolis-Hastings algorithm cross a considerable portion of the $\Xi$ space.
By contrast, in later iterations of the algorithm, when it is converging towards its maximum performance, acceptance of new proposed hyperparameters becomes rare and the steps that compose the trajectories are very small.
This behavior can be naturally explained as a manifestation, on the way MCMC trajectories are traced, of an initial exploration phase of the space $\Xi$, followed by an exploitation phase.

\begin{figure}[tb]
\centering
\includegraphics[width=\textwidth]{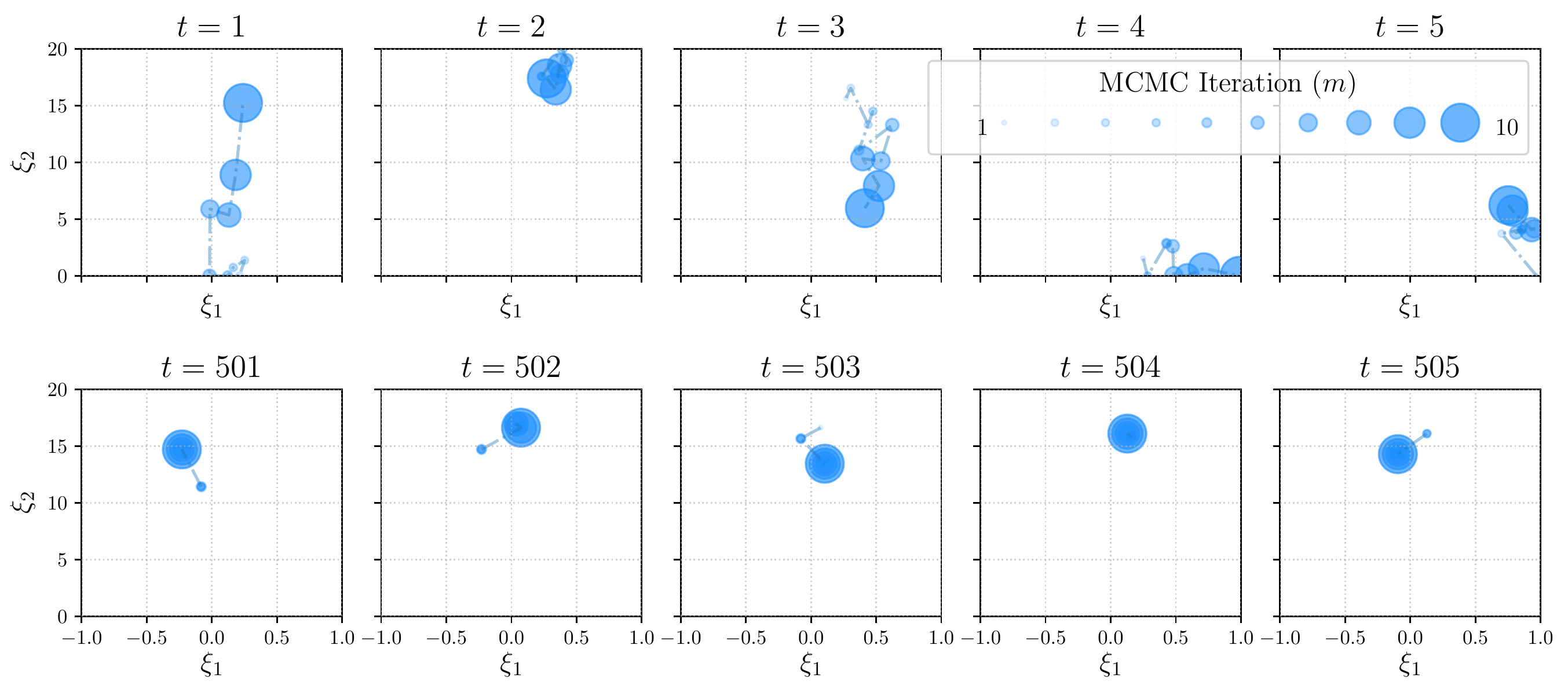}
\caption{Trajectories traced by the $M=10$ Metropolis-Hastings steps in the two-dimensional space $\Xi$ of the hyperparameters $\vxi = \left[ \xi_1, \xi_2 \right]$ of the hyperpolicy $\nu_{\vxi}$ during different iterations. Top row: first $5$ iterations. Bottom row: iterations from $501$ to $505$.}\label{fig:trajectories}
\end{figure}

\paragraph{Continuous Cartpole}
An interesting feature of the continuous version of \algname is its scalability compared to existing approaches (\ie the discretization-based version of OPTIMIST, called OPTIMIST2 in~\cite{papini2019optimistic}).
\begin{figure}[tb]
\centering
\includegraphics[width=0.9\textwidth]{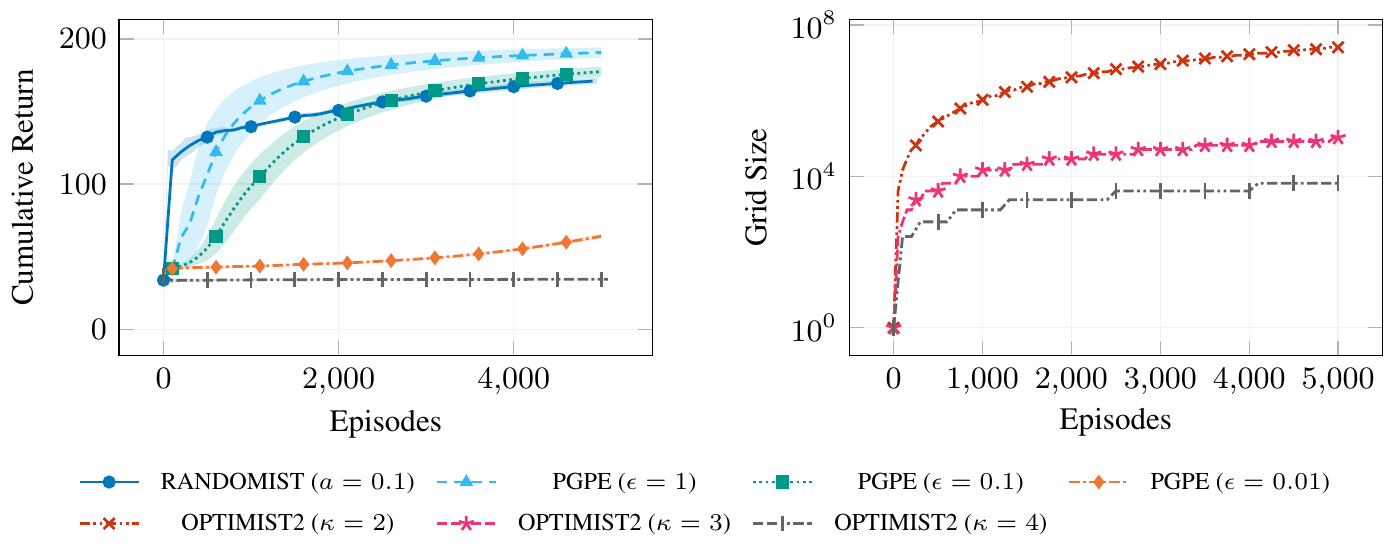}
\caption{Cumulative Return in the continuous cartpole task (5 runs, 95\% c.i.) and number of points on the discretization grid of OPTIMIST2.}\label{fig:cartpole}
\end{figure}
To assess it in practice, we employ a continuous Cartpole environment, with $\mathrm{dim}(\mathcal{S}) = 4$ and $\mathrm{dim}(\mathcal{A}) = 1$ and an horizon $H=200$.
We employ the most efficient version of \algname, by only considering proposed and current points during the computation of the density ratio in the MCMC step and the hyperparameter $a=0.1$.
We again employ the parameter-based setting, by leveraging a Gaussian hyperpolicy of learned mean and constant covariance $\Sigma=\mathrm{diag}(1,1,1,1)$, together with policies linear in the state features.
We run, at each iteration, $M=10$ steps of MCMC (we use the one-step approximation), with a Gaussian proposal centered in the current point and covariance equal to the one of the hyperpolicy, and search for the optimal hyperparameters in the box $[-2, 2] \times [0, 4] \times [0, 10] \times [0, 12]$.
Figure~\ref{fig:cartpole} (left) shows the cumulative return, computed as in the mountain car experiment, obtained by \algname, OPTIMIST2\footnote{For running OPTIMIST2, we employ the publicly-available official implementation.} and PGPE~\cite{sehnke2008policy} run with different step sizes.
\algname is able to scale to this task and to obtain satisfying performance, comparable to the one of a policy gradient approach.
It is interesting to observe that OPTIMIST2 completely fails to learn the task, or, at least, displays a very slow improving behavior. We tested OPTIMIST2 with different discretizations, all fulfilling Theorem 4 of~\cite{papini2019optimistic} with $\kappa \in \{2,3,4\}$. Increasing the value of $\kappa$ leads to coarser discretization and, consequently, reduces the computational complexity at the cost of a degradation of the regret guarantee (that remains sublinear anyway). For clarity, since all versions performed almost the same, in the left plot we report the case of $\kappa=4$ only. The plot on the right shows the number of points in the grid employed by OPTIMIST2 as a function of the number of iterations. We observe that the number of points soon becomes intractable.
This example shows how a discretization-based approach, like OPTIMIST2, despite its theoretical guarantees, does not scale in practice as the number of dimensions increases.

\end{document}